\newtheorem{proposition}{Proposition}
\newtheorem{lemma}{Lemma}
\theoremstyle{definition}
\newtheorem{remark}{Remark}
\newtheorem{assumption}{Assumption}
\newtheorem{definition}{Definition}
\newcommand*\diff{\mathop{}\!\mathrm{d}}
\DeclareMathOperator{\sgn}{sgn}
\DeclareMathOperator{\sig}{sig}
\DeclareMathOperator{\diag}{diag}
\definecolor{user_color}{RGB}{190,10,10}
\begin{document}

\title{Contour Moments Based Manipulation of Composite Rigid-Deformable Objects with Finite Time Model Estimation and Shape/Position Control}

\author{Jiaming Qi, 
        Guangfu Ma,
        Jihong Zhu,~\IEEEmembership{Member,~IEEE,}
        Peng Zhou,~\IEEEmembership{Student Member,~IEEE,}
        Yueyong Lyu,\\
        Haibo Zhang
        and David Navarro-Alarcon,~\IEEEmembership{Senior~Member,~IEEE}
        
        \thanks{This work is supported in part by: the Key-Area Research and Development Program of Guangdong Province 2020 under project 76, the Hong Kong Research Grants Council under grant 14203917, and the PROCORE France/Hong Kong Joint Research Scheme sponsored by the RGC and the Consulate General of France in Hong Kong under grant F-PolyU503/18.}%
        
        \thanks{J. Qi, G. Ma and Y. Lyu are with Astronautics School, Harbin Institute of Technology, Heilongjiang, China, 150001.}
        
        \thanks{H. Zhang is with Beijing Institute of Control Engineering, Beijing, China.}
        
        \thanks{J. Zhu is with Cognitive Robotics, TU Delft, the Netherlands.}
        
        \thanks{D. Navarro-Alarcon and P. Zhou are with the Department of Mechanical Engineering, The Hong Kong Polytechnic University (PolyU), Kowloon, Hong Kong. D Navarro-Alarcon is also with the Research Institute for Smart Ageing (RISA), PolyU. Corresponding author's email: {\texttt{\small dna@ieee.org}}}
}
\bstctlcite{IEEEexample:BSTcontrol}

\markboth{IEEE/ASME Transactions on Mechatronics}%
{Qi \MakeLowercase{\textit{et al.}}: Contour Moments Based Manipulation of Composite Rigid-Deformable Objects with Finite Time Control}
\maketitle

\begin{abstract}
The robotic manipulation of composite rigid-deformable objects (i.e. those with mixed non-homogeneous stiffness properties) is a challenging problem with clear practical applications that, despite the recent progress in the field, it has not been sufficiently studied in the literature.
To deal with this issue, in this paper we propose a new visual servoing method that has the capability to manipulate this broad class of objects (which varies from soft to rigid) with the same adaptive strategy.
To quantify the object's infinite-dimensional configuration, our new approach computes a compact feedback vector of 2D contour moments features.
A sliding mode control scheme is then designed to simultaneously ensure the finite-time convergence of both the feedback shape error and the model estimation error.
The stability of the proposed framework (including the boundedness of all the signals) is rigorously proved with Lyapunov theory.
Detailed simulations and experiments are presented to validate the effectiveness of the proposed approach.
To the best of the author's knowledge, this is the first time that contour moments along with finite-time control have been used to solve this difficult manipulation problem.
\end{abstract}

\begin{IEEEkeywords}
	Robotics, Visual-Servoing, Deformable Objects, Sliding Mode Control, Contour Moments.
\end{IEEEkeywords}
\IEEEpeerreviewmaketitle

\section{INTRODUCTION}
\IEEEPARstart{T}{he} manipulation of  composite rigid-deformable objects (CRDO) is currently an open research problem in robotics (see our recent survey \cite{Journals:Zhu2021_RAM}).
The ubiquitous nature of CRDO motivates the development of suitable manipulation strategies which can be used in various application scenarios, e.g., 
food industry \cite{tokumoto2002deformation}, 
robot-surgery assistance \cite{han2020vision}, 
cable assembly \cite{park2005static} and household works \cite{2011Bringing}.
To automate these types of tasks, there are three main technical problems:
(i) Efficient feedback representation of shapes;
(ii) Estimation of the sensorimotor model of the robot-object system.
(iii) Design of shaping controls that can timely minimize the deformation error.
Although great progress has been achieved in this problem in recent years, the development of control methods for CRDO has not been suffiently studied in the literature.
Our aim in this work is to provide a solution to the above issues.

A critical issue in \emph{shape servoing} (i.e. the active shaping of a soft object by means of robot motions) is to design a low-dimensional feature that describes the infinite-dimensional shape in an efficient manner \cite{Wang2018Adaptive}.
Four geometric features (point, distance, angle, and curvature) were used in \cite{navarro2016automatic} to characterize deformable objects, however, these basic features can only represent a limited type of shapes.
Contour-based descriptors have been utilized for this purpose \cite{2004Review}. 
These methods compute features based on binary (intensity-based) information of an image contour, which mimics the way humans visually distinguish objects.
Truncated Fourier series was used in \cite{Navarro2018Fourier} to represent and control shapes; This idea was generalized in \cite{qi2020adaptive} in the form of parameterized regression shape features.
The classical Hu moments method \cite{hu1962visual} was extended in \cite{2009Error} to the case of visual contours by replacing the region integral with a curvilinear integral, which results in a reduced computation cost. Note that this promising approach has not yet been used in shape servoing.

To automatically manipulate CRDO, it is essential to know how the robot's motion result in changes of the object's shape; This relation is captured by the so-called deformation Jacobian matrix (DJM) \cite{Journals:Zhu2021_RAM}. 
Due to the complexity of CRDO, the exact (i.e. analytical) calculation of the DJM might be difficult to obtain, a situation that is further complicated by uncertainties in the object's mechanical properties.
Therefore, most works make use of numerical algorithms to estimate this structure in real-time.
Some examples of this approach include the Broyden update rules \cite{alambeigi2018autonomous}, online optimization methods \cite{2020Automatic}, flow-based adaptive algorithms \cite{navarro2016automatic}, deep neural networks \cite{Hu20193}\cite{li2018vision}, etc.
Note that these previous methods have been used to estimate the DJM of objects with ``mostly'' elastic properties.
In \cite{zhu2020vision}, a shape servoing controller is proposed for both rigid and deformable objects that estimates the DJM with a least-squares method over a sliding window.
However, in this and most previous works, the stability of such online algorithms is not rigorously proved (the design of their update rules is generally decoupled from the motion control laws).
This condition limits the robustness of existing methods.

The design of standard visual servoing methods (including shape servoing ones) typically relies on imposing an asymptotically stable equilibrium of the visual-guided task \cite{han2021visual} \cite{cherubini2021_frontneuro}.
However, this closed-loop stability property does not ensure that the feedback shape error will converge to zero in a finite amount of time. This is clearly undesirable for soft object manipulation tasks that demand precisely shaping actions with a deterministic convergence (e.g. in delicate surgical procedures).
Combining finite-time control with sliding mode control (SMC) \cite{2018An} is a feasible solution to deal with these issues, as it enables to specify the convergence of the error and is robust to model and parameter uncertainty \cite{2020Toward}.
Despite its valuable (and controllable) stability properties, this advanced control technique has not yet been utilized in soft object manipulation tasks \cite{Journals:Zhu2021_RAM}.

In this paper, we address the design of a visual servoing method for manipulating CRDO with a dual-arm robot.
For that, we propose an innovative finite-time control scheme that ensures the robust minimization of both the shape error (which characterizes the feedback manipulation task) and the model estimation error (which represents the estimation of the DJM).
A self-tuning adaptive update law is also presented to improve the approximation accuracy of DJM.
Although SMC has been previously used in visual servoing tasks \cite{ahmadi2021robust,li2010sliding}, this is the first time (to the best of the authors' knowledge) that it has been used in a soft object manipulation task.
The main contributions of the paper are fourfold:
\begin{enumerate}
    \item Our approach is not only suitable for elastic objects, but also for objects with anisotropic and time-varying physical properties, such as CRDO.
    \item Contour moments are used (for the first time) as a feedback signal to represent the shape of CRDO.
    \item A Finite-Time SMC (FTSMC) method is proposed (and rigorously analysed) to simultaneously perform the shape servoing task and to estimate the DJM.
    \item Detailed simulations, experiments and quantitative comparisons are conducted to validate our new method.
\end{enumerate}

\begin{figure}[h]
    \centering
    \includegraphics[scale=0.37]{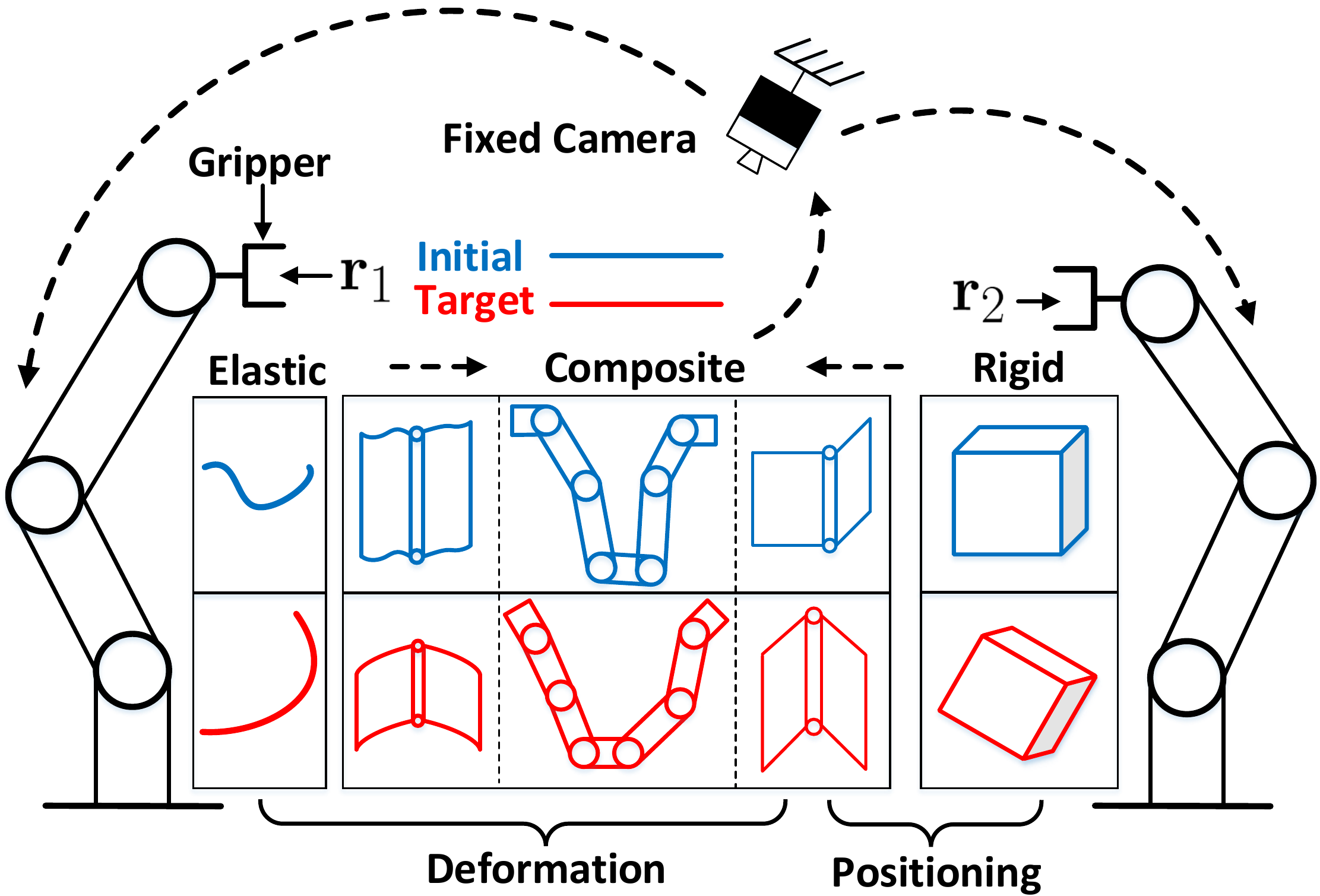}
    \caption{Representation of dual-arm robot manipulating CRDO. 
    In general, the manipulation tasks includes deformation tasks and positioning tasks. The goal is to design the velocity controller to command dual-arm robot to manipulate CRDO into the desired configurations.}
    \label{fig31}
\end{figure}


\begin{figure}[h]
    \centering
    \includegraphics[scale=0.22]{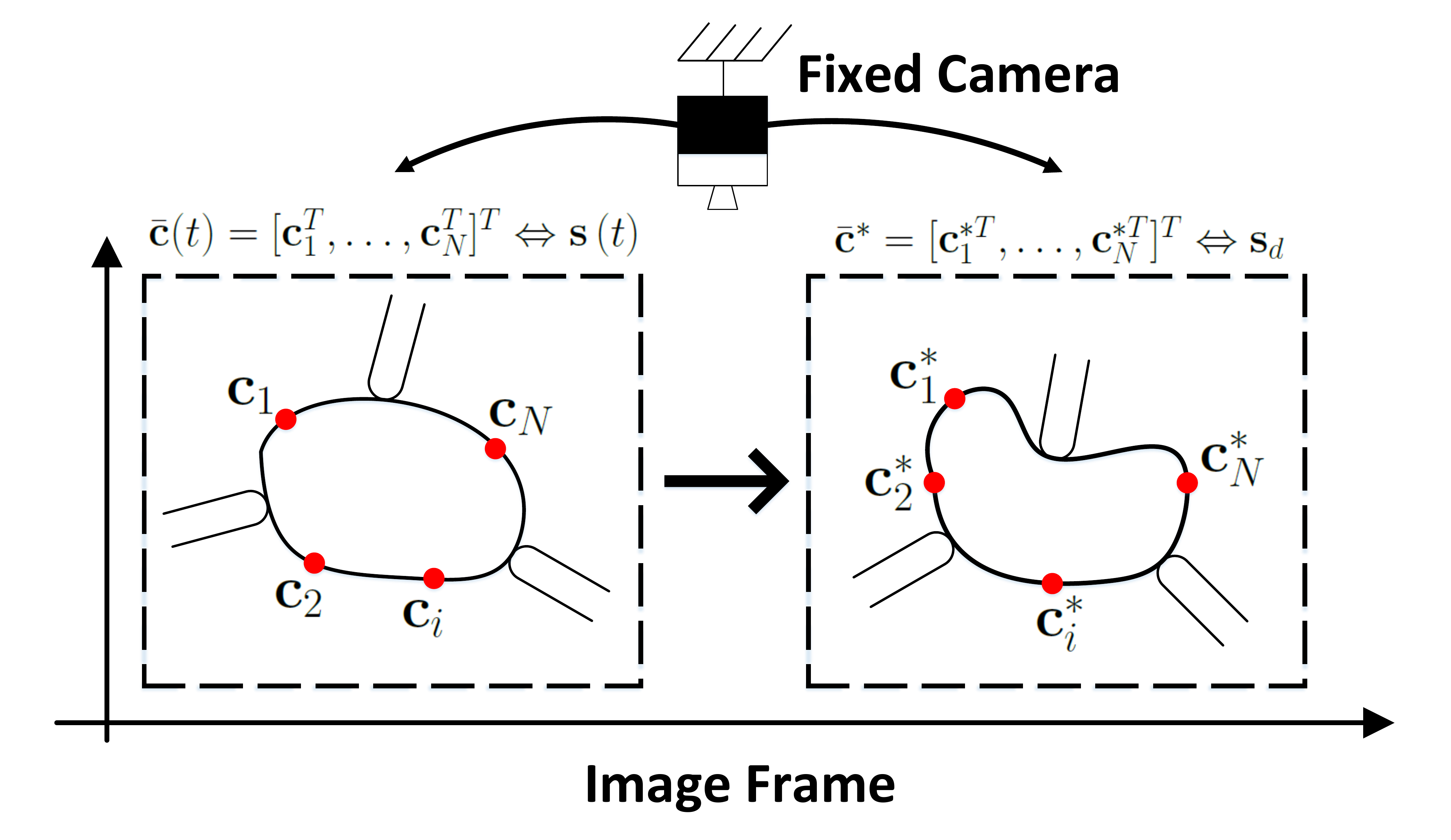}
    \caption{Contour description in the 2D manipulation space.
    The contour is defined in the image frame, where $\bar{\mathbf{c}}$ represents the real-time feedback contour and $\bar{\mathbf{c}}^*$ represents the desired contour.}
    \label{fig32}
\end{figure}

\section{PROBLEM FORMULATION}\label{sect2}
In this work, the vision-based manipulation task considered in the work is conceptually depicted in Fig. \ref{fig31}.
Column vectors are denoted with bold small letters $\mathbf{v}$, and matrices with bold capital letters $\mathbf{M}$; The symbol $(\cdot)^+$ represents the matrix pseudo-inverse.


\subsection{Dual-Arm Robot Model}
Consider the configuration of a dual-arm robot, where the vector of joint angles and end-effector pose of each robot are denoted by $\mathbf{q}_i \in \mathbb{R}^d$ and $\mathbf{r}_i=\mathbf{r}_i(\mathbf{q}_i) \in \mathbb{R}^q$, respectively, where $i=1,2$ and $q \leq d$.
We define the augmented vectors ${\mathbf{r} = {{\left[ {\mathbf{r}_1^T ,\mathbf{r}_2^T} \right]}^T} \in {\mathbb{R} ^{2q}}}$ and ${\mathbf{q} = {{\left[{\mathbf{q}_1^T,\mathbf{q}_2^T} \right]}^T} \in {\mathbb{R} ^{2d}}}$.
The differential kinematics of the dual-arm robot are ${\dot{\mathbf{r}} = \mathbf{J}\left( \mathbf{q} \right)\dot{\mathbf{q}}}$,
where $\mathbf{J}\left( \mathbf{q} \right) = \diag \left\{ {\frac{{\partial {\mathbf{r}_1}}}{{\partial {\mathbf{q}_1}}},\frac{{\partial {\mathbf{r}_2}}}{{\partial {\mathbf{q}_2}}}} \right\} \in {\mathbb{R}^{2q \times 2d}}$ represents the standard Jacobian matrix, which is assumed to be exactly known.
In this article, we consider that the robots are kinematically-controlled \cite{navarro2013model} (i.e. that the robot's joint positions/velocities can be exactly determined).
Furthermore, we assume that the trajectories of the robot are free from collisions with either the environment or amongst its arms.



\subsection{Visual-Deformation Model}
\label{sec2b}
Let us define the state of the CRDO as $\mathbf{m} \in \mathbb{R}^{\theta}$.
The relation between the robot's pose $\mathbf r$ and $\mathbf m$ is represented by the (unknown)
nonlinear mapping $\mathbf m = \mathbf f_m(\mathbf r)$, which models the mechanical properties of the CRDO, e.g., deformable structures, rigid parts, and composites of both, etc.
In this article, the object's image contour ${\bar{\mathbf{c}}  = {{\left[ {{\mathbf{c}}_1^T, \ldots ,{\mathbf{c}}_N^T} \right]}^T} \in {{\mathbb{R}}^{2N}}}$, for ${{{\mathbf{c}}_i} = {{\left[ {{u_i},{v_i}} \right]}^T} \in {\mathbb{R}^2}}$, is utilized to represent the configuration of the CRDO, where $N$ represents the number of points that comprise the contour, and $u_i$ and $v_i$ represent the pixel coordinates of the ith $(i=1,\ldots,N)$ point in the image frame, see Fig. \ref{fig32}.
Thus, the relation between $\bar{\mathbf{c}}$ and $\mathbf{m}$ can be described by the nonlinear mapping, $\bar{\mathbf{c}} = \mathbf{f}_c (\mathbf{m})$ (which captures the perspective geometry of the object's points in the boundary).

To actively control the shape of CRDOs, the robot's end-effectors must rigidly grasp the object in advance.
Motion of the controllable grippers' pose $\mathbf r$ result in changes in the object's image contour $\bar{\mathbf{c}}$.
We model this visual-motor relation with the following expression:
\begin{equation}
\bar{\mathbf{c}} = \mathbf{f}_c( \mathbf f_m ( \mathbf{r})) = 
\mathbf{f}_c(\mathbf{f}_m(\mathbf{r}_1,\mathbf{r}_2))
\end{equation}

Note that the dimension $2N$ of the observed contour $\bar{\mathbf c}$ is generally large, thus, it is inefficient to directly use it in a shape controller as it contains redundant information.
In our approach, we use the contour information $\bar{\mathbf c}$ to construct a compact feature vector, here denoted by $\mathbf{s} \in \mathbb{R}^p$, for $p\ll2N$, that characterizes the object's shape but with significantly fewer feedback coordinates.
The relation between the robot's pose and such shape descriptor is modeled as follows:
\begin{equation}
\mathbf{s} = \mathbf{f}_s(\bar{\mathbf{c}}) = \mathbf{f}_s(\mathbf     f_c(\mathbf{f}_m(\mathbf{r}_1,\mathbf{r}_2)))
\label{eq51}
\end{equation}
By computing the time derivative of \eqref{eq51}, we obtain the first-order dynamic model:
\begin{equation}
\label{eq52}
\dot{\mathbf{s}} = \frac{{\partial \mathbf{f}_s}}{{\partial {\mathbf{r}_1}}}{{\dot{\mathbf{r}}}_1} + \frac{{\partial \mathbf{f}_s}}{{\partial {\mathbf{r}_2}}}{{\dot{\mathbf{r}}}_2} = {\mathbf{J}_s}\left( \mathbf{r} \right)\dot{\mathbf{r}}
\end{equation}
where ${\mathbf{J}_s}\left( \mathbf{r} \right) = \left[ {\frac{{\partial \mathbf{f}_s}}{{\partial \mathbf{r}_1}},\frac{{\partial \mathbf{f}_s}}{{\partial \mathbf{r}_2}}} \right] \in {\mathbb{R}^{p \times 2q}}$ represents DJM which relates the velocity of the end-effectors with the shape feature changes.
As the mechanical properties of CRDOs are complex and difficult to obtain, these Jacobian matrices need to be numerically estimated.

\textbf{Problem statement.}
Design a velocity-based controller for a dual-arm robot to automatically deform the composite rigid-deformable object towards the desired image shape (characterized by the feature vector $\mathbf s_d$) while simultaneously estimating the unknown matrix $\mathbf J_s (\mathbf{r})$.


\subsection{Mathematical Properties}
Before presenting our main result, some important properties are here introduced.


\begin{lemma}
\label{lemma3}
\cite{van2016finite}
Define the function $\sig^k(x) = |x|^k \sgn(x)$, which holds for any $x \in \mathbb{R}$, where $k>0$, and satisfies the relation:
\begin{equation}
\begin{array}{*{20}{c}}
{\frac{\diff}{{\diff t}}\sig^k ( x ) = k{{\left| x \right|}^{k - 1}}\dot x,}&{k
\ge 1}
\end{array}
\end{equation}
\end{lemma}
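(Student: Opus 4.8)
The plan is to reduce the time-derivative claim to an elementary one-variable differentiation via the chain rule, treating $\sig^k$ as a function of the scalar $x$ and $x$ as a function of $t$. First I would record the alternative representation $\sig^k(x) = x\,|x|^{k-1}$, valid for every $x \in \mathbb{R}$, which is verified by the two sign cases: for $x>0$ it gives $x^k$, and for $x<0$ it gives $x(-x)^{k-1} = -(-x)^k$, both matching $|x|^k\sgn(x)$ (and both vanishing at $x=0$). This form makes the differentiation transparent and isolates the only delicate point, namely the origin.

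Next, for $x \neq 0$ I would differentiate $\sig^k$ with respect to $x$ directly. Using $\frac{\diff}{\diff x}|x| = \sgn(x)$ together with the product rule applied to $|x|^k\sgn(x)$, the contribution carrying the derivative of $\sgn(x)$ vanishes, since $\sgn$ is locally constant away from the origin. This leaves $\frac{\diff}{\diff x}\sig^k(x) = k|x|^{k-1}\sgn(x)^2 = k|x|^{k-1}$, where I use the identity $\sgn(x)^2 = 1$ for $x \neq 0$. Composing with $\dot x$ through the chain rule then yields $\frac{\diff}{\diff t}\sig^k(x) = k|x|^{k-1}\dot x$ on the set where $x \neq 0$.

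The main obstacle, and the reason the hypothesis $k \ge 1$ is imposed, is the behaviour at $x = 0$. Here I would return to the limit definition of the derivative and evaluate the difference quotient $\big(\sig^k(h) - \sig^k(0)\big)/h = |h|^{k-1}$, which simplifies to the same expression for both $h>0$ and $h<0$. When $k>1$ this tends to $0$ as $h \to 0$, matching $k|0|^{k-1}=0$; when $k=1$ it equals $1$, matching the formula under the convention $0^0=1$. In either case the two-sided limit exists and agrees with $k|x|^{k-1}$ evaluated at the origin, so the identity extends continuously to $x=0$ and $\sig^k$ is differentiable everywhere in $x$. It is precisely this step that fails for $0<k<1$, where the difference quotient $|h|^{k-1}$ diverges and $\sig^k$ exhibits an infinite slope at the origin; restricting to $k \ge 1$ is exactly what guarantees global differentiability, so the chain-rule formula $\frac{\diff}{\diff t}\sig^k(x) = k|x|^{k-1}\dot x$ holds for all $x$.
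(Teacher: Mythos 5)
Your proof is correct; note, however, that the paper itself offers no proof of this lemma at all --- it is stated as a known result with a citation to \cite{van2016finite}, so there is no in-paper argument to compare against. Your write-up supplies the elementary verification that the citation elides, and it is organized in the natural way: the rewriting $\sig^k(x)=x|x|^{k-1}$, the product/chain-rule computation giving $\frac{\diff}{\diff x}\sig^k(x)=k|x|^{k-1}\sgn(x)^2=k|x|^{k-1}$ for $x\neq 0$, the difference quotient $\bigl(\sig^k(h)-\sig^k(0)\bigr)/h=|h|^{k-1}$ at the origin, and the observation that this quotient converges precisely when $k\ge 1$ --- which correctly identifies why the hypothesis $k\ge 1$ appears and why the formula fails for $0<k<1$. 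Two minor points worth tightening: at $k=1$ the right-hand side $k|x|^{k-1}$ at $x=0$ rests on the convention $0^0=1$ (though for $k=1$ one has $\sig^1(x)=x$ and the claim is trivial, so you could dispatch that case separately and reserve the limit argument for $k>1$, where $k|x|^{k-1}$ is moreover continuous, making $\sig^k$ a $C^1$ function and the chain rule unproblematic even at instants where $x(t)=0$); and the time-derivative statement tacitly assumes $t\mapsto x(t)$ is differentiable, which is the intended reading since the paper applies the lemma to the error trajectories $\mathbf{e}_1(t)$, $\mathbf{e}_2(t)$ in deriving \eqref{eq24}.
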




\begin{lemma}
\label{lemma1}
\cite{polycarpou1993robust}
The inequality $0 \le \left| x \right| - x\tanh \left({{x}/{\varepsilon}} \right) \le \delta \varepsilon$ holds for any $\varepsilon > 0$ and for any $x \in \mathbb{R}$, where $\delta=0.2785$ is a constant that satisfies $\delta=e^{-(\delta+1)}$.
\end{lemma}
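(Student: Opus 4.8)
The plan is to exploit the evenness of the map $g(x) = |x| - x\tanh(x/\varepsilon)$ and reduce the two-sided inequality to a one-dimensional optimization over the half-line. First I would note that $x$ and $\tanh(x/\varepsilon)$ are both odd in $x$, so their product is even; together with $|x|$ this makes $g$ even, and it suffices to treat $x \ge 0$, where $g(x) = x\bigl(1 - \tanh(x/\varepsilon)\bigr)$. The lower bound is then immediate: for $x \ge 0$ we have $\tanh(x/\varepsilon) \in [0,1)$, hence $1 - \tanh(x/\varepsilon) > 0$ and $g(x) \ge 0$, with equality at $x = 0$.

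The substance lies in the upper bound. I would first rewrite the hyperbolic tangent in exponential form, using $1 - \tanh z = 2/(e^{2z}+1)$, to obtain the clean expression $g(x) = 2x/(e^{2x/\varepsilon}+1)$. Since $g(0) = 0$ and $g(x) \to 0$ as $x \to \infty$, any positive maximum is attained at an interior stationary point. Differentiating and setting $g'(x) = 0$, then substituting $w = 2x/\varepsilon$, reduces the stationarity condition to the transcendental equation $(w-1)e^w = 1$, equivalently $w - 1 = e^{-w}$. Evaluating $g$ at this point and using the stationarity relation to simplify the denominator yields $g_{\max} = \varepsilon\, e^{-w}$.

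It remains to identify the constant. Setting $\delta = e^{-w}$, the relation $w - 1 = e^{-w} = \delta$ gives $w = \delta + 1$ and therefore $\delta = e^{-(\delta+1)}$, which is exactly the fixed-point equation defining $\delta = 0.2785$ in the statement; hence $g(x) \le \delta\varepsilon$ for all $x$. The main obstacle I anticipate is not the calculus but the bookkeeping that links the stationary point to the implicit constant: one must confirm that the unique interior critical point is a global maximum --- guaranteed here by $g$ vanishing at both ends of $[0,\infty)$ --- and that the stationarity condition coincides exactly, not merely numerically, with the fixed-point equation defining $\delta$.
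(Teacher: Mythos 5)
Your proof is correct: the paper itself states this lemma without proof (importing it from the cited reference), and your argument is precisely the standard one used there --- reduce by evenness to $x \ge 0$, write $g(x) = 2x/(e^{2x/\varepsilon}+1)$, locate the unique interior maximum via $(w-1)e^{w} = 1$ with $w = 2x/\varepsilon$, and simplify to $g_{\max} = \varepsilon e^{-w}$ with $\delta = e^{-w}$ satisfying $\delta = e^{-(\delta+1)}$. All the computations check out, including the global-maximum justification (the stationarity equation $w-1 = e^{-w}$ has a unique root since the left side is increasing and the right side decreasing, and $g$ vanishes at both ends of $[0,\infty)$), so nothing is missing.
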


\begin{assumption}
\label{assumption1}
The DJM is divided into two components, namely, 
$\mathbf{J}_s(\mathbf{r}) = \hat{\mathbf{J}}_s  + \tilde{\mathbf{J}}_s$, where $\hat{\mathbf{J}}_s$ is the estimation of $\mathbf{J}_s(\mathbf{r})$ and $\tilde{\mathbf{J}}_s$ is the approximation error.
\end{assumption}

\begin{assumption}
\label{assumption2}
The approximation error of the DJM is bounded 
${|| \tilde{\mathbf{J}}_s ||}^2 \le \beta$, for $\beta$ as a unknown positive constant.
\end{assumption}

\begin{definition}
\label{definition1}
Given an arbitrary vector $\mathbf{x} \in \mathbb{R}^n$, let us introduce the following vectorial power definitions \cite{van2016finite}:
\begin{align*}
    \sig^k\left( \mathbf{x} \right) &= \left[ {{\sig^k}\left( {{x_1}} \right), \ldots ,\sig^k\left( {{x_n}} \right)} \right]^T \in {\mathbb{R}^n}\\
    {\left| \mathbf{x} \right|^k} &= \diag\left\{ {{{\left| {{x_1}} \right|}^k}, \ldots ,{{\left| {{x_n}} \right|}^k}} \right\} \in {\mathbb{R}^{n \times n}}\\
    \end{align*}
\end{definition}


\section{FEATURE EXTRACTION ALGORITHM}\label{sect3}
In this article, we use the contour moments \cite{lei2012fusion} of $\bar{\mathbf{c}}$ to construct the compact feature vector $\mathbf s$.
For digital images, the ordinary moment (of order $i+j$) is defined as follows:  
\begin{equation}
\label{eq2}
\begin{array}{*{20}{c}}
{{h_{ij}} = \sum\limits_{k = 1}^N {{u_k^i}{v_k^j}\Delta m_k}},&{i,j = 0,1, \ldots }
\end{array}
\end{equation}
where $N$ is the number of points on the contour and $\Delta {m_k} = \left\| {{\mathbf{c}_k} - {\mathbf{c}_{k - 1}}} \right\|$ represents the distance between two adjacent pixels on the contour.
The central moment of order $i+j$ of the contour is given as:
\begin{equation}
\label{eq3}
\begin{array}{*{20}{c}}
{{\eta_{ij}} = \sum\limits_{k = 1}^N {{{\left( {{u_k} - \bar u} \right)}^i}{{\left( {{v_k} - \bar v} \right)}^j}\Delta {m_k}}},&{i,j = 0,1, \ldots }
\end{array}
\end{equation}
where $\bar{u}$ and $\bar{v}$ are the central coordinates, i.e., $\bar{u} = h_{10} / h_{00}$ and $\bar{v} = h_{01} / h_{00}$.


Similar to the classical Hu moments \cite{hu1962visual}, the following seven contour moments are computed: 
\begin{align}
\label{eq6}
\bar{\varphi}_1 &= {\eta _{20}} + {\eta _{02}}, ~~\bar{\varphi}_2 = {\left( {{\eta _{20}} - {\eta _{02}}} \right)^2} + 4\eta _{11}^2 \notag \\
\bar{\varphi}_3 &= {\left( {{\eta _{30}} - 3{\eta _{12}}} \right)^2} + {\left( {3{\eta _{21}} - {\eta _{03}}} \right)^2} \notag \\
\bar{\varphi}_4 &= {\left( {{\eta _{30}} + {\eta _{12}}} \right)^2} + {\left( {{\eta _{03}} + {\eta _{21}}} \right)^2} \notag \\
\bar{\varphi}_5 &= \left( {{\eta _{30}} - 3{\eta _{12}}} \right)\left( {{\eta _{30}} + {\eta _{12}}} \right)\left( \begin{array}{l}
{\left( {{\eta _{30}} + {\eta _{12}}} \right)^2}\\
- 3{\left( {{\eta _{21}} + {\eta _{03}}} \right)^2}
\end{array} \right) \notag \\
&+ \left( {3{\eta _{21}} - {\eta _{03}}} \right)\left( {{\eta _{21}} + {\eta _{03}}} \right)\left( \begin{array}{l}
3{\left( {{\eta _{30}} + {\eta _{12}}} \right)^2}\\
- {\left( {{\eta _{21}} + {\eta _{03}}} \right)^2}
\end{array} \right) \notag \\
\bar{\varphi}_6 &= \left( {{\eta _{20}} - {\eta _{02}}} \right)\left( {{{\left( {{\eta _{30}} + {\eta _{12}}} \right)}^2} - {{\left( {{\eta _{21}} + {\eta _{03}}} \right)}^2}} \right)  \\
&+ 4{\eta _{11}}\left( {{\eta _{12}} + {\eta _{30}}} \right)\left( {{\eta _{21}} + {\eta _{03}}} \right) \notag \\
\bar{\varphi}_7 &= \left( {3{\eta _{21}} - {\eta _{03}}} \right)\left( {{\eta _{12}} + {\eta _{30}}} \right)\left( \begin{array}{l}
{\left( {{\eta _{12}} - {\eta _{30}}} \right)^2} \notag \\
- 3{\left( {{\eta _{21}} + {\eta _{03}}} \right)^2}
\end{array} \right) \notag \\
&- \left( {{\eta _{30}} - 3{\eta _{12}}} \right)\left( {{\eta _{21}} + {\eta _{03}}} \right)\left( \begin{array}{l}
3{\left( {{\eta _{12}} + {\eta _{30}}} \right)^2} \notag \\
- {\left( {{\eta _{21}} + {\eta _{03}}} \right)^2}
\end{array} \right) \notag
\end{align}
which are invariant to translation and rotation.
Note, however, that $\bar{\varphi}_k$ in \eqref{eq6} have a very large magnitude. 
A common approach to rectify these variables is to use its logarithmic form:
\begin{equation}
\label{eq44}
\begin{array}{*{20}{c}}
{\bar{s}_k = \left| {\log \left( {\left| {\bar{\varphi}_k} \right|} \right)} \right|,}&{k \in \left[ {1,7} \right]}
\end{array}
\end{equation}

Note, however, that the contour moments $\bar{s}_k$ above (for $k=1,\ldots,7$) describe the object's shape in a rotation, translation, and scale invariant manner.
For the considered bi-manual shape servoing task, such information might not be sufficient to deform and position the object into a deisred configuration.
Therefore, further incorporate the object's centroid coordinates $\bar{u}$ and $\bar{v}$ (which encodes position) within the shape feature descriptor:
\begin{equation}
\label{eq63}
\begin{array}{*{20}{c}}
     \bar{s}_8=\bar{u} ,& \bar{s}_9=\bar{v} 
\end{array}
\end{equation}
The angle of the contour's principal axis (which encodes orientation in the plane) \cite{zheng2019toward}, is also used in the shape descriptor:
\begin{equation}
\label{eq61}
\bar{s}_{10} = \frac{1}{2}{\tan ^{ - 1}}\left( {\frac{{2{\eta_{11}}}}{{{\eta_{20}} - {\eta_{02}}}}} \right)
\end{equation}

As the magnitudes of the ten shape coordinates $\bar{s}_k, k$ are different, we normalize them in a $\pm 1$ range.
For the contour moments, this is done as:
\begin{equation}
\label{eq62}
\begin{array}{*{20}{c}}
{s_i} =\frac{{\bar{s}_i - \frac{1}{7}\sum\limits_{i = 1}^7 {\bar{s}_i} }}{\max \left( {\bar{s}_i} \right) - \min \left( {\bar{s}_i} \right)}
,&{i=1,\ldots,7}
\end{array}
\end{equation}
For centroid coordinates as:
\begin{equation}
\label{eq66}
\begin{array}{*{20}{c}}
s_8 = \frac{2\bar{s}_8 - C_w}{C_w}, &
s_9 = \frac{2\bar{s}_9 - C_h}{C_h}
\end{array}
\end{equation}
where $C_w$ and $C_h$ represent the width and height of the camera, e.g., $640 \times 320$ and $1920 \times 1080$.
For the principal axis angle, it is simply done as:
\begin{equation}
\label{eq65}
    {s}_{10} =  {\bar{s}_{10}}/{\pi}
\end{equation}
Finally, the total shape feature vector is constructed as follows: $\mathbf{s}=[s_1, s_2, s_3, s_4, s_5, s_6, s_7, s_8, s_9, s_{10}]^T \in \mathbb{R}^{10}$.

As seen from \eqref{eq2}--\eqref{eq65}, these features $s_k$ only depends on the geometric image contour. 
Slight illumination and contrast changes will not significantly affect its computation. 
Therefore, these features represent a more robust alternative than the classical image moments \cite{hu1962visual}, as they replace the calculation of the region integral by a curvilinear integral.
The computation of these proposed feedback shape features is given in Algorithm \ref{algorithm2}.


\begin{algorithm}
\caption{Shape feature $\mathbf{s}$ calculation process} 
\label{algorithm2} 
\begin{algorithmic}[1] 
\Require Fixed-size contour $\bar{\mathbf{c}}$;
\State Calculate the difference-operator $\Delta m_k, k=1,\cdots,N$;
\State Calculate the ordinary moment $h_{ij} \gets$ \eqref{eq2};
\State Calculate the central moment $\eta_{ij} \gets $ \eqref{eq3};
\State Calculate contour moments $\bar{\varphi} _1, \cdots \bar{\varphi}_7 \gets$ \eqref{eq6};
\State Data compression is conducted in \eqref{eq44};
\State Calculate central coordinates $\gets$ \eqref{eq63};
\State Calculate principal angle $\gets$ \eqref{eq61};
\State Normalize shape feature $\gets$ 
       \eqref{eq62} \eqref{eq66} \eqref{eq65};
\State \Return Shape feature $\mathbf{s}$;
\end{algorithmic} 
\end{algorithm}

\section{CONTROLLER DESIGN}
\label{sec4}
Two methods are here described to control the shape of the CRDO: Linear sliding mode control (LSMC) and finite-time sliding mode control (FTSMC).
In the rest of this paper, we denote the robot's control input as $\mathbf u = \dot{\mathbf{r}}$.

\subsection{Linear Sliding Mode Control}
\label{sec4a}
Let us first define the error variables:
\begin{equation}
\label{eq55}
\begin{array}{*{20}{c}}
{{\mathbf{e}_1} = \mathbf{s} - \mathbf{s}_d}, &{{\mathbf{e}_2} = \dot{\mathbf{s}} - \hat{\mathbf{J}}_s \mathbf{u}}
\end{array}
\end{equation}
and its time derivatives
\begin{equation}
\label{eq7}
\begin{array}{*{20}{c}}
{{{\dot{\mathbf{e}}}_1} = \dot{\mathbf{s}} - {{\dot{\mathbf{s}}}_d}},&{{{\dot{\mathbf{e}}}_2} = \ddot{\mathbf{s}} - \dot{\hat{\mathbf{J}}}_s \mathbf{u} - \hat{\mathbf{J}}_s \dot{\mathbf{u}}}
\end{array}
\end{equation}
which we use to construct the linear sliding surfaces: \cite{utkin1977variable}:
\begin{equation}
\label{eq8}
\begin{array}{*{20}{c}}
{{\mathbf{\sigma} _1} = {\mathbf{K}_1}{\mathbf{e}_1} + {{\dot{\mathbf{e}}}_1}},&
{{\mathbf{\sigma} _2} = {\mathbf{K}_2}{\mathbf{e}_2} + {{\dot{\mathbf{e}}}_2}}
\end{array}
\end{equation}
for $\mathbf{s}_d$ as the desired shape feature, and $\mathbf{K}_k$ as symmetric positive-definite constant matrices. 
Considering \eqref{eq7} and Assumption \ref{assumption1}, we can compute the time derivative of $\sigma_1$ as:
\begin{equation}
\label{eq9}
{\dot{\mathbf{\sigma}} _1} = {\mathbf{K}_1}\hat{\mathbf{J}}_s \mathbf{u} + {\mathbf{K}_1}\tilde{\mathbf{J}}_s \mathbf{u} - {\mathbf{K}_1}{\dot{\mathbf{s}}_d} + {\ddot{\mathbf{e}}_1}
\end{equation}
and design the following velocity control input:
\begin{equation}
\label{eq10}
\mathbf{u} = {\hat{\mathbf{J}}^+_s} \mathbf{K}_1^{ - 1}\left( { - {\mathbf{\sigma} _1} + {\mathbf{K}_1}{{\dot{\mathbf{s}}}_d} - {{\ddot{\mathbf{e}}}_1}} \right)
\end{equation}
where $\hat{\mathbf{J}}_s^+$ denotes the pseudo-inverse of the adaptive matrix $\hat{\mathbf{J}}_s$.
To quantify the shape tracking error, we introduce the quadratic function ${V_1(\mathbf{\sigma}_1)} = \frac{1}{2}\mathbf{\sigma} _1^T{\mathbf{\sigma} _1}$, 
whose time-derivative satisfies:
\begin{align}
\label{eq13}
{{\dot V}_1}(\mathbf{\sigma}_1)
= \mathbf{\sigma} _1^T{{\dot{\mathbf{\sigma}}  }_1} 
= - \mathbf{\sigma} _1^T{\mathbf{\sigma} _1} + \mathbf{\sigma} _1^T{\mathbf{K}}_1\tilde{\mathbf{J}}_s \mathbf{u}
\end{align}
From Assumption \ref{assumption2} and considering Young’s inequality, we can obtain the following relation:
\begin{equation}
\label{eq14}
\mathbf{\sigma} _1^T{\mathbf{K}_1}\tilde{\mathbf{J}}_s \mathbf{u} \le {\lambda_{\mathbf{K}_1}}{\left\| {{\mathbf{\sigma} _1}} \right\|^2}/{4} + \beta {\left\| \mathbf{u} \right\|^2}
\end{equation}
where $\lambda_{\mathbf{K}_1}$ denotes the maximum eigenvalue of $\mathbf{K}_1$.
Substitution of \eqref{eq14} into \eqref{eq13} yields:
\begin{equation}
\label{eq53}
\dot{V}_1(\mathbf{\sigma}_1)
\le  - \left( {1 - {\lambda_{\mathbf{K}_1}}/{4}} \right){\left\| {{\mathbf{\sigma} _1}} \right\|^2} + \beta {\left\| \mathbf{u} \right\|^2}
\end{equation}
With this method, the DJM is adaptively computed as:
\begin{align}
\label{eq15}
\dot{\hat{\mathbf{J}}}_s  &= \left( {\ddot{\mathbf{s}} - \hat{\mathbf{J}}_s \dot{\mathbf{u}} - \mathbf{K}_2^{ - 1}\mathbf{\varpi}} \right){\mathbf{u}^ + }  \\
\mathbf{\varpi} &= { - {\mathbf{\sigma} _2} - {{\ddot{\mathbf{e}}}_2} - \mathbf{\sigma} _2^{T + }\tanh \left( {{{{{\left\| \mathbf{u} \right\|}^2}}} / {\chi }} \right)\hat \beta {{\left\| \mathbf{u} \right\|}^2}}  \notag
\end{align}
where the variable $\hat\beta$ is updated with the adaptive rule:
\begin{equation}
\label{eq49}
\begin{array}{*{20}{c}}
{\dot{\hat \beta}  = \tanh \left( {{{{{\left\| \mathbf{u} \right\|}^2}}}/{\chi}} \right){{\left\| \mathbf{u} \right\|}^2} - \gamma \hat \beta }
\end{array}
\end{equation}
for $\chi$ and $\gamma$ as positive constants.


\begin{proposition}
Consider the dynamic system \eqref{eq52} in closed-loop with the adaptive controller \eqref{eq10}, \eqref{eq15}, \eqref{eq49}.
Given desired shape vector $\mathbf{s}_d$, there exists an appropriate set of control parameters that ensure that: (1) all signals in the closed-loop system remain uniformly ultimately bounded (UUB), and (2) the deformation error $\mathbf{e}_1$ asymptotically converges to a compact set around zero.
\end{proposition}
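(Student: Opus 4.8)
The plan is to build a single composite Lyapunov function coupling the two sliding surfaces with the parameter estimation error, and then reduce its derivative to the canonical ultimate-boundedness form $\dot V \le -\rho V + C$. I would take
\[
V = \tfrac{1}{2}\mathbf{\sigma}_1^T\mathbf{\sigma}_1 + \tfrac{1}{2}\mathbf{\sigma}_2^T\mathbf{\sigma}_2 + \tfrac{1}{2}\tilde\beta^2,
\]
where $\tilde\beta = \hat\beta - \beta$ is the estimation error of the unknown bound $\beta$ of Assumption \ref{assumption2}. The derivative of the first term is already bounded in \eqref{eq53}, so the task is to show that the remaining two terms absorb the leftover disturbance $\beta\|\mathbf u\|^2$.

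First I would differentiate $\mathbf{\sigma}_2 = \mathbf K_2\mathbf e_2 + \dot{\mathbf e}_2$ and insert the adaptive law \eqref{eq15}. Since $\mathbf u$ is a column vector with $\mathbf u^+\mathbf u = 1$, the product $\dot{\hat{\mathbf J}}_s\mathbf u$ collapses to $\ddot{\mathbf s} - \hat{\mathbf J}_s\dot{\mathbf u} - \mathbf K_2^{-1}\mathbf{\varpi}$, whence $\dot{\mathbf e}_2 = \mathbf K_2^{-1}\mathbf{\varpi}$ and $\dot{\mathbf{\sigma}}_2 = \mathbf{\varpi} + \ddot{\mathbf e}_2$. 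Substituting $\mathbf{\varpi}$ from \eqref{eq15} cancels the $\ddot{\mathbf e}_2$ terms and leaves $\dot{\mathbf{\sigma}}_2 = -\mathbf{\sigma}_2 - \mathbf{\sigma}_2^{T+}\tanh(\|\mathbf u\|^2/\chi)\hat\beta\|\mathbf u\|^2$. Because $\mathbf{\sigma}_2^T\mathbf{\sigma}_2^{T+} = 1$, I obtain $\dot V_2 = -\|\mathbf{\sigma}_2\|^2 - \tanh(\|\mathbf u\|^2/\chi)\hat\beta\|\mathbf u\|^2$.

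I would then add the parameter term. With the update \eqref{eq49}, $\tilde\beta\dot{\tilde\beta} = (\hat\beta-\beta)\big(\tanh(\|\mathbf u\|^2/\chi)\|\mathbf u\|^2 - \gamma\hat\beta\big)$, and its $\hat\beta\tanh(\cdot)\|\mathbf u\|^2$ cross term cancels the matching term in $\dot V_2$. Combined with \eqref{eq53}, the only uncompensated disturbance is $\beta\big(\|\mathbf u\|^2 - \tanh(\|\mathbf u\|^2/\chi)\|\mathbf u\|^2\big)$, which Lemma \ref{lemma1} (with $x=\|\mathbf u\|^2\ge 0$, $\varepsilon=\chi$) bounds by $\beta\delta\chi$. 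The residual $-\gamma\hat\beta^2 + \gamma\beta\hat\beta = -\gamma\hat\beta\tilde\beta$ is handled by writing $\hat\beta=\tilde\beta+\beta$ and applying Young's inequality, yielding $-\tfrac{\gamma}{2}\tilde\beta^2 + \tfrac{\gamma}{2}\beta^2$. Choosing $\mathbf K_1$ with $\lambda_{\mathbf K_1}<4$ keeps the $\|\mathbf{\sigma}_1\|^2$ coefficient positive, so I can collect everything into
\[
\dot V \le -\rho V + C,\qquad \rho = \min\Big\{2-\tfrac{\lambda_{\mathbf K_1}}{2},\,2,\,\gamma\Big\},
\]
with $C = \tfrac{\gamma}{2}\beta^2 + \beta\delta\chi$ finite. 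The comparison lemma then gives $V(t)\le V(0)e^{-\rho t} + \tfrac{C}{\rho}(1-e^{-\rho t})$, so $\mathbf{\sigma}_1,\mathbf{\sigma}_2,\tilde\beta$ (hence $\hat\beta$) are UUB, which is claim (1). For claim (2), I would read $\mathbf{\sigma}_1 = \mathbf K_1\mathbf e_1 + \dot{\mathbf e}_1$ as a Hurwitz filter driven by the bounded, converging input $\mathbf{\sigma}_1$; its input-to-state stability drives $\mathbf e_1$ into a compact set of radius scaling with $C/\rho$, and back-substitution recovers boundedness of $\mathbf s$, $\dot{\mathbf s}$, $\mathbf e_2$, $\hat{\mathbf J}_s\mathbf u$ and $\mathbf u$.

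The hard part will be the well-posedness and boundedness issues buried in the pseudo-inverses. The control \eqref{eq10} needs $\hat{\mathbf J}_s^+$ bounded, i.e. $\hat{\mathbf J}_s$ must retain full row rank; the update \eqref{eq15} uses both $\mathbf u^+$ and $\mathbf{\sigma}_2^{T+}$, and the convenient identities $\mathbf u^+\mathbf u = 1$ and $\mathbf{\sigma}_2^T\mathbf{\sigma}_2^{T+} = 1$ silently presume $\mathbf u\neq 0$ and $\mathbf{\sigma}_2\neq 0$. I would therefore need a full-rank or persistent-excitation assumption (or a small-signal regularization) to guarantee these inverses stay bounded, and---since $\hat{\mathbf J}_s$ never enters $V$---its own boundedness would have to be argued separately from the boundedness of $\hat{\mathbf J}_s\mathbf u$. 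The availability of the acceleration-type signals $\ddot{\mathbf s}$, $\ddot{\mathbf e}_1$, $\ddot{\mathbf e}_2$ appearing in \eqref{eq10} and \eqref{eq15} is a further caveat I would flag.
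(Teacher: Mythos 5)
Your proposal is correct and follows essentially the same route as the paper: the identical composite Lyapunov function $V_2=\tfrac{1}{2}\sigma_1^T\sigma_1+\tfrac{1}{2}\sigma_2^T\sigma_2+\tfrac{1}{2}\tilde\beta^2$, the same cancellation of the $\hat\beta\tanh(\|\mathbf u\|^2/\chi)\|\mathbf u\|^2$ cross terms via \eqref{eq15} and \eqref{eq49}, the same use of Lemma \ref{lemma1} and Young's inequality, and the same reduction to $\dot V_2\le -aV_2+b$ with matching constants $a=\min(2-\lambda_{\mathbf K_1}/2,\,2,\,\gamma)$ and $b=\beta\delta\chi+\gamma\beta^2/2$. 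You in fact fill in algebra the paper leaves implicit (the collapse of $\dot{\hat{\mathbf J}}_s\mathbf u$ via $\mathbf u^+\mathbf u=1$ and $\sigma_2^T\sigma_2^{T+}=1$) and rightly flag the well-posedness caveats on the pseudo-inverses that the paper's proof silently assumes.
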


\begin{proof}
	Consider the energy-like function 
	\begin{equation}
	\label{eq17}
	\begin{array}{*{20}{c}}
	{{V_2}(\sigma_1,\sigma_2,\tilde\beta)
	= {V_1}(\mathbf{\sigma}_1) + \frac{1}{2}\mathbf{\sigma} _2^T{\mathbf{\sigma} _2} + \frac{1}{2}{{\tilde \beta }^2}}
	\end{array}
	\end{equation}
	for $\tilde{\beta}=\beta - \hat \beta$.
	Considering the Young’s inequality, $\tilde \beta \hat \beta \le ({\beta ^2} - {{\tilde \beta }^2}) / 2$ and invoking \eqref{eq53}--\eqref{eq17}, we can show that the time derivative of $V_2$ satisfies
	\begin{align}
	\label{eq18}
	{{\dot V}_2}(\sigma_1,\sigma_2,\tilde\beta)
	& \le  - \left( {1 - \frac{\lambda_{\mathbf{K}_1}}{4}} \right){\left\| {{\mathbf{\sigma} _1}} \right\|^2} - {\left\| {{\mathbf{\sigma} _2}} \right\|^2} - \frac{\gamma }{2}{{\tilde \beta }^2} + b \notag \\
	& \le  - a{V_2}(\sigma_1,\sigma_2,\tilde\beta) + b 
	\end{align}
	where ${a = \min \left( {\left( {2 - {\lambda_{\mathbf{K}_1}}/{2}} \right),2,\gamma } \right)}$, ${b = \beta \delta \chi  + {\gamma }{\beta ^2}/2 >0}$.
	By selecting a matrix $\mathbf{K}_1$ that ensures that $a>0$, the state signals $\mathbf{\sigma}_1$, $\mathbf{\sigma}_2$ and $\tilde{\beta}$ are endowed with asymptotic stability and remain UUB \cite{2015AdaptiveNN}.
	This implies that the shape error $\mathbf{e}_1$ asymptotically converge to a compact set around zero, and ensures that the Jacobian estimation error $\tilde{\mathbf{J}}_s$ is bounded \cite{qu1998robust}.
\end{proof}

\subsection{Finite-Time Sliding Mode Control}
Many practical applications demand a tight timing performance, which cannot be accomplished by simply increasing the control gain; FTSMC is designed to addressed these issues.
The non-singular terminal sliding surface \cite{yang2011nonsingular} is given as:
\begin{align}
\label{eq23a}
{\mathbf{\sigma} _1} &= 
{\mathbf{e}_1} + 
{\alpha _1}\sig^{p_1}{\left( {{{\dot {\mathbf{e}}}_1}} \right)} + 
{\beta _1}\sig^{{q_1}}{\left( {{\mathbf{e}_1}} \right)} \\
{\mathbf{\sigma} _2} &= 
{\mathbf{e}_2} + 
{\alpha _2}\sig^{{p_2}}{\left( {{{\dot{\mathbf{e}}}_2}} \right)} + 
{\beta _2}\sig^{{q_2}}{\left( {{\mathbf{e}_2}} \right)}
\nonumber 
\end{align}
where $p_1 \in (1, 2)$, $q_1 > p_1$, $p_2 \in (1, 2)$, $q_2 > p_2$ and $\alpha_1, \alpha_2, \beta_1, \beta_2 > 0$ are all positive constants.
Invoking \eqref{eq7} and Lemma \ref{lemma3}, the time derivative of $\mathbf{\sigma}_1$ is given by:
\begin{equation}
\label{eq24}
{{\dot{\mathbf{\sigma}}}_1} = 
\hat{\mathbf{J}}_s \mathbf{u} + 
\tilde{\mathbf{J}}_s \mathbf{u} - {{\dot{\mathbf{s}}}_d} 
 + {\alpha _1}{p_1}{\left| {{{\dot{\mathbf{e}}}_1}} \right|^{{p_1} - 1}}{{\ddot {\mathbf{e}}}_1}
+ {\beta _1}{q_1}{\left| {{\mathbf{e}_1}} \right|^{{q_1} - 1}}{{\dot{\mathbf{e}}}_1}
\end{equation}
which we use to design the following velocity control input:
\begin{align}
\label{eq25}
\mathbf{u} = {\hat{\mathbf{J}}^+_s}
\left( 
\begin{array}{l}
- {\alpha _1}{p_1}{\left| {{{\dot{\mathbf{e}}}_1}} \right|^{{p_1} - 1}}{{\ddot {\mathbf{e}}}_1}
- {\varepsilon _1}\sgn\left( {{\mathbf{\sigma} _1}} \right) \\
- {\beta _1}{q_1}{\left| {{\mathbf{e}_1}} \right|^{{q_1} - 1}}{{\dot{\mathbf{e}}}_1} 
+ {{\dot{\mathbf{s}}}_d} 
\end{array}
\right)
\end{align}
With the above controller, we obtain a new expression for $\dot V_1$:
\begin{equation}
\label{eq23}
{{\dot V}_1}(\mathbf{\sigma}_1)
= \mathbf{\sigma} _1^T{{\dot{\mathbf{\sigma}}}_1} =  
- {\varepsilon _1}\mathbf{\sigma} _1^T\sgn\left( {{\mathbf{\sigma} _1}} \right) + \mathbf{\sigma} _1^T\tilde{\mathbf{J}}_s \mathbf{u}
\end{equation}
where by considering the Young’s inequality:
\begin{equation}
\label{eq29}
\mathbf{\sigma} _1^T\tilde{\mathbf{J}}_s \mathbf{u} 
\le {\left\| {{\mathbf{\sigma} _1}} \right\|^2} / {4} + \beta {\left\| \mathbf{u} \right\|^2}
\end{equation}
we can show (after some algebraic operations) that the following relation is satisfied:
\begin{align}
\label{eq28}
\dot V_1 (\mathbf{\sigma}_1)
& \le  - {\varepsilon _1}\left\| {{\mathbf{\sigma} _1}} \right\| + {\left\| {{\mathbf{\sigma} _1}} \right\|^2}/{4} + \beta {\left\| \mathbf{u} \right\|^2}
\end{align}
Finally, the adaptive rule for the DJM is given as:
\begin{align}
\label{eq31}
\dot{\hat{\mathbf{J}}}_s  &= 
\left( 
\begin{array}{l}
\ddot{\mathbf{s}} - \hat{\mathbf{J}}_s\dot{\mathbf{u}} + {\varepsilon _2}\sgn\left( {{\mathbf{\sigma} _2}} \right) + \mathbf{\varpi}  \notag \\
+ {\alpha _2}{p_2}{\left| {{{\dot{\mathbf{e}}}_2}} \right|^{{p_2} - 1}}{{\ddot{\mathbf{e}}}_2} 
+ {\beta _2}{q_2}{\left| {{\mathbf{e}_2}} \right|^{{q_2} - 1}}{{\dot{\mathbf{e}}}_2}
\end{array} \right){\mathbf{u}^ + }\\
\mathbf{\varpi} &= \mathbf{\sigma} _2^{T + }\left( {\tanh \left( {{{{{\left\| \mathbf{u} \right\|}^2}}} / {\chi }} \right)\hat \beta {{\left\| \mathbf{u} \right\|}^2} + \frac{1}{4}{{\left\| {{\mathbf{\sigma} _1}} \right\|}^2}} \right)
\end{align}

\begin{proposition}
Consider the dynamic system \eqref{eq52} in closed-loop with the adaptive controller given by  \eqref{eq25}, \eqref{eq31}, \eqref{eq49}. 
For this system, all state signals are semi-global practical finite time stable (SGPFS) \cite{2019Finite}, and the shape error $\mathbf{e}_1$ converges to a compact set within a finite time without any singularities during the task.
\end{proposition}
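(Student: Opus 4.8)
The plan is to assemble one composite Lyapunov candidate from both sliding variables and the parameter error, convert its derivative into a finite-time differential inequality, and then close with a two-stage terminal-sliding argument. Mirroring the structure used for the first proposition, I would take $V = V_1(\sigma_1) + \tfrac12\sigma_2^T\sigma_2 + \tfrac12\tilde\beta^2$ with $\tilde\beta=\beta-\hat\beta$, where the bound on $\dot V_1$ is already supplied by \eqref{eq28}. The objective is to reach an inequality of the form $\dot V \le -\kappa V^{1/2} + \tilde b$ with $\kappa>0$ and a constant residual $\tilde b>0$, since this is exactly the hypothesis of the SGPFS result in \cite{2019Finite}. First I would differentiate $\sigma_2$: using \eqref{eq7} and Lemma~\ref{lemma3}, $\dot\sigma_2 = \dot{\mathbf e}_2 + \alpha_2 p_2|\dot{\mathbf e}_2|^{p_2-1}\ddot{\mathbf e}_2 + \beta_2 q_2|\mathbf e_2|^{q_2-1}\dot{\mathbf e}_2$, and substituting the adaptive law \eqref{eq31} (using $\mathbf u^+\mathbf u=1$ and $\sigma_2^T\sigma_2^{T+}=1$) collapses this to $\dot\sigma_2 = -\varepsilon_2\sgn(\sigma_2)-\varpi$. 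Hence $\dot V_2 = -\varepsilon_2\|\sigma_2\|_1 - \tanh(\|\mathbf u\|^2/\chi)\hat\beta\|\mathbf u\|^2 - \tfrac14\|\sigma_1\|^2$. The crucial observation here is that the last term cancels the $\tfrac14\|\sigma_1\|^2$ appearing in \eqref{eq28} through \eqref{eq29}; this cancellation is precisely why the $\tfrac14\|\sigma_1\|^2$ factor was embedded in $\varpi$.

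Next I would combine $\dot V = \dot V_1 + \dot V_2 + \tilde\beta\dot{\tilde\beta}$ with $\dot{\tilde\beta}=-\dot{\hat\beta}$ from \eqref{eq49}. The three $\|\mathbf u\|^2$-weighted terms collapse via $\hat\beta+\tilde\beta=\beta$ into $\beta\bigl(\|\mathbf u\|^2 - \tanh(\|\mathbf u\|^2/\chi)\|\mathbf u\|^2\bigr)$, which Lemma~\ref{lemma1} bounds by $\beta\delta\chi$, while the cross term $\gamma\tilde\beta\hat\beta$ is absorbed by Young's inequality $\tilde\beta\hat\beta\le(\beta^2-\tilde\beta^2)/2$ exactly as in the linear case. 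Using $\sigma_i^T\sgn(\sigma_i)=\|\sigma_i\|_1\ge\|\sigma_i\|$, this yields $\dot V \le -\varepsilon_1\|\sigma_1\| - \varepsilon_2\|\sigma_2\| - \tfrac{\gamma}{2}\tilde\beta^2 + b$ with $b=\beta\delta\chi+\gamma\beta^2/2$.

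The main obstacle is converting this bound into a genuine finite-time estimate, because the sliding terms enter at power $1$ in the norm (that is, $\|\sigma_i\|=\sqrt2\,V_i^{1/2}$), whereas the adaptation term $-\tfrac{\gamma}{2}\tilde\beta^2$ enters at power $1$ in $V_\beta=\tfrac12\tilde\beta^2$, so the powers are mismatched. I would reconcile this by splitting $-\gamma V_\beta = -\tfrac{\gamma}{2}V_\beta - \tfrac{\gamma}{2}V_\beta$ and applying $V_\beta^{1/2}\le V_\beta+\tfrac14$ (equivalently $(V_\beta^{1/2}-\tfrac12)^2\ge0$) to the second piece, which manufactures a $-\tfrac{\gamma}{2}V_\beta^{1/2}$ term plus a constant while retaining a harmless $-\tfrac{\gamma}{2}V_\beta\le0$. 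Combined with the subadditivity $(V_1+V_2+V_\beta)^{1/2}\le V_1^{1/2}+V_2^{1/2}+V_\beta^{1/2}$ and setting $\kappa=\min(\sqrt2\,\varepsilon_1,\sqrt2\,\varepsilon_2,\gamma/2)$, this gives $\dot V \le -\kappa V^{1/2}+\tilde b$ with $\tilde b=b+\gamma/8$. Invoking the SGPFS lemma of \cite{2019Finite} then certifies that $\sigma_1$, $\sigma_2$ and $\tilde\beta$ are semi-global practical finite-time stable and enter a compact residual set in finite settling time; the qualifiers \emph{semi-global} and \emph{practical} reflect that the admissible gains depend on a compact set of initial conditions and that $\tilde b>0$ leaves a nonzero residual.

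Finally I would close with the terminal-sliding reduction: once $\sigma_1$ is confined to a residual neighbourhood of zero, the manifold $\mathbf e_1 + \alpha_1\sig^{p_1}(\dot{\mathbf e}_1)+\beta_1\sig^{q_1}(\mathbf e_1)\approx 0$ enforces finite-time convergence of $\mathbf e_1$ to a compact set about the origin by the standard non-singular terminal sliding mode property. Non-singularity follows directly from the surface design: since $p_1\in(1,2)$ and $q_1>p_1$ (and likewise $p_2,q_2$), every fractional exponent satisfies $p_i-1\in(0,1)$ and $q_i-1>0$, so neither the control \eqref{eq25} nor the update \eqref{eq31} ever contains a negative power of $\dot{\mathbf e}_i$ or $\mathbf e_i$, and no singularity can arise along the trajectory. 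The only points requiring care beyond the routine algebra are the well-posedness conditions $\mathbf u\neq 0$ and a full-rank $\hat{\mathbf J}_s$ that make $\mathbf u^+\mathbf u=1$ and $\hat{\mathbf J}_s^+\hat{\mathbf J}_s$ act as identities in the cancellations above.
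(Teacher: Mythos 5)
Your proposal is correct and follows essentially the same route as the paper's proof: the same composite Lyapunov function $V_2=V_1+\tfrac12\sigma_2^T\sigma_2+\tfrac12\tilde\beta^2$, the same use of Young's inequality and Lemma~\ref{lemma1} to produce the residual $\beta\delta\chi+\gamma\beta^2/2$, the same square-completion device (the paper's \eqref{eq37}) to fix the power mismatch in $\tilde\beta$, the resulting inequality $\dot V_2\le -aV_2^{1/2}+b$ feeding the SGPFS result of \cite{2019Finite}, and the identical exponent-range argument ($p_i\in(1,2)$, $q_i>p_i$) for singularity-freeness. In fact you make explicit several steps the paper compresses --- the computation $\dot\sigma_2=-\varepsilon_2\sgn(\sigma_2)-\varpi$, the cancellation of $\tfrac14\|\sigma_1\|^2$ by the $\varpi$ term, and the well-posedness caveats $\mathbf u\neq 0$ and full-rank $\hat{\mathbf J}_s$ --- so no gaps remain.
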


\begin{proof}
Consider the energy-like function \eqref{eq17} computed with the sliding surface \eqref{eq23a}.
By using the folowing square completion property:
	\begin{equation}
	\label{eq37}
	- \gamma {\tilde \beta}^2 / 2 \le {\gamma}/8 - \gamma |\tilde \beta| / 2
	\end{equation}
	along with \eqref{eq28}, we can show that the time derivative of $V_2$ satisfies the following relations:
	\begin{align}
	\label{eq34}
	\dot{V}_2 
	& \le  - {\varepsilon _1}\left\| {{\mathbf{\sigma} _1}} \right\| 
	- {\varepsilon _2}\left\| {{\mathbf{\sigma} _2}} \right\| - {\gamma }{{\tilde \beta }^2}/{2} + \left( {\beta {\delta}\chi  + {{\gamma {\beta ^2}}}/{2}} \right)  \notag \\
	& \le  - aV_2^{\frac{1}{2}} + b  
	\end{align}
	where ${a = \min \left( {2{\varepsilon _1},2{\varepsilon _2},\gamma } \right)}$, ${b = \beta {\delta}\chi + {{\gamma {\beta ^2}}}/{2} + {\gamma }/{8} > 0}$. 
	By selecting appropriate parameters that ensure $a > 0$, then $V_2$ converges to a compact set $\{ V_2 | V_2 \le \Omega^2, \text{~time} \ t \ge T_s \}$ within the convergence time $T_s$ calculated by \cite{2019Finite}
	\begin{equation}
	\label{eq27}
	\begin{array}{*{20}{c}}
    {{T_s} = \frac{2}{{av}}\left( {{V_2^{\frac{1}{2}}\left( { \mathbf{X} \left( 0 \right)} \right)}  - {\Omega}} \right)},&{{\Omega} = \frac{b}{{\left( {1 - v} \right)a}}}
    \end{array}
	\end{equation}
	where $\mathbf{X} \left( 0 \right) = {[ {\sigma _1^T\left( 0 \right),\sigma _2^T\left( 0 \right),\tilde \beta \left( 0 \right)} ]^T} \in \mathbb{R}^{2p+1} $ and $v \in [0,1]$.
	Then, $\mathbf{X}$ remains in the compact set defined by ${\Omega _v} = \left\{{\mathbf{X} ~|~ \| \mathbf{X} \| \le {\sqrt{2} \Omega}, \ t \ge {T_s}} \right\}$, which means that $\mathbf{\sigma}_1, \mathbf{\sigma}_2$ and $\tilde{\beta}$ are SGPFS.
	This implies that the shape error $\mathbf{e}_1$ and estimation error $\mathbf{e}_2$ converge to a compact set around zero within finite time \cite{yang2011nonsingular}.
	
    In the traditional terminal sliding mode controller \cite{feng2002non}, there exists $|\mathbf{e}_i|^w, i=1,2$ and $w<0$.
	Thus, $\mathbf{u}$ cannot be guaranteed to be bounded when $|\mathbf{e}_i|$ is near the origin, namely, numerical singularity.
	For the proposed velocity command \eqref{eq25}, it does not contain any negative fractional power since $p_i\in(1,2), q_i > p_i, i=1,2$; thus it is singularity-free.
	The above results validate that the CRDO can be manipulated into the target configuration within finite time along NFTSM without any singularity and all state signals as SGPFS.
\end{proof}
The workflow of the proposed framework is given in Algorithm \ref{algorithm1}.
Fig. \ref{fig34} gives the block diagram of the proposed dual-arm robot manipulation framework.

\begin{algorithm}
\caption{Workflow of the proposed framework} 
\label{algorithm1} 
\begin{algorithmic}[1] 
\Require 
$Threshold$; $Max$; $Sw$: default=1; 
\State Give a target shape feature $\mathbf{s}_d$ calculated by $\bar{\mathbf{c}}^*$;
\State Conduct small deformations around the starting configuration to initialize $\hat{\mathbf{J}}_s(0)$ and start the manipulation;
\State $k=0$, $\hat{\beta}(0) \gets 0.001$

\While {$\| \mathbf{e}_1 \| \ge Threshold$ and $k < Max$} 
\State Record the current position $\mathbf{r}$ and velocity $\mathbf{u}$;
\State Record the current contour $\bar{\mathbf{c}}$ of the object;
\State Calculate the current shape feature $\mathbf{s} \gets$ \eqref{eq2} - \eqref{eq65};
\State Calculate error signals $\mathbf{e}_1$ and $\mathbf{e}_2 \gets$ \eqref{eq55};
\State Calculate $\mathbf{\sigma}_i$, 
\eqref{eq8} $\gets Sw=1$ or \eqref{eq23a} $\gets Sw=2$;

\State Update the adaptive term $\hat{\beta} \gets$ \eqref{eq49};
\State Update $\mathbf{u}$, 
\eqref{eq10} $\gets Sw=1$ or \eqref{eq25} $\gets Sw=2$;

\State Update $\hat{\mathbf{J}}_s$, 
\eqref{eq15} $\gets Sw=1$ or \eqref{eq31} $ \gets Sw=2$;

\State Dual-arm robot moves using the updated $\mathbf{u}$;
\State $k = k + 1$;
\EndWhile 
\end{algorithmic} 
\end{algorithm}

\begin{figure}[h]
    \centering
    \includegraphics[scale=0.46]{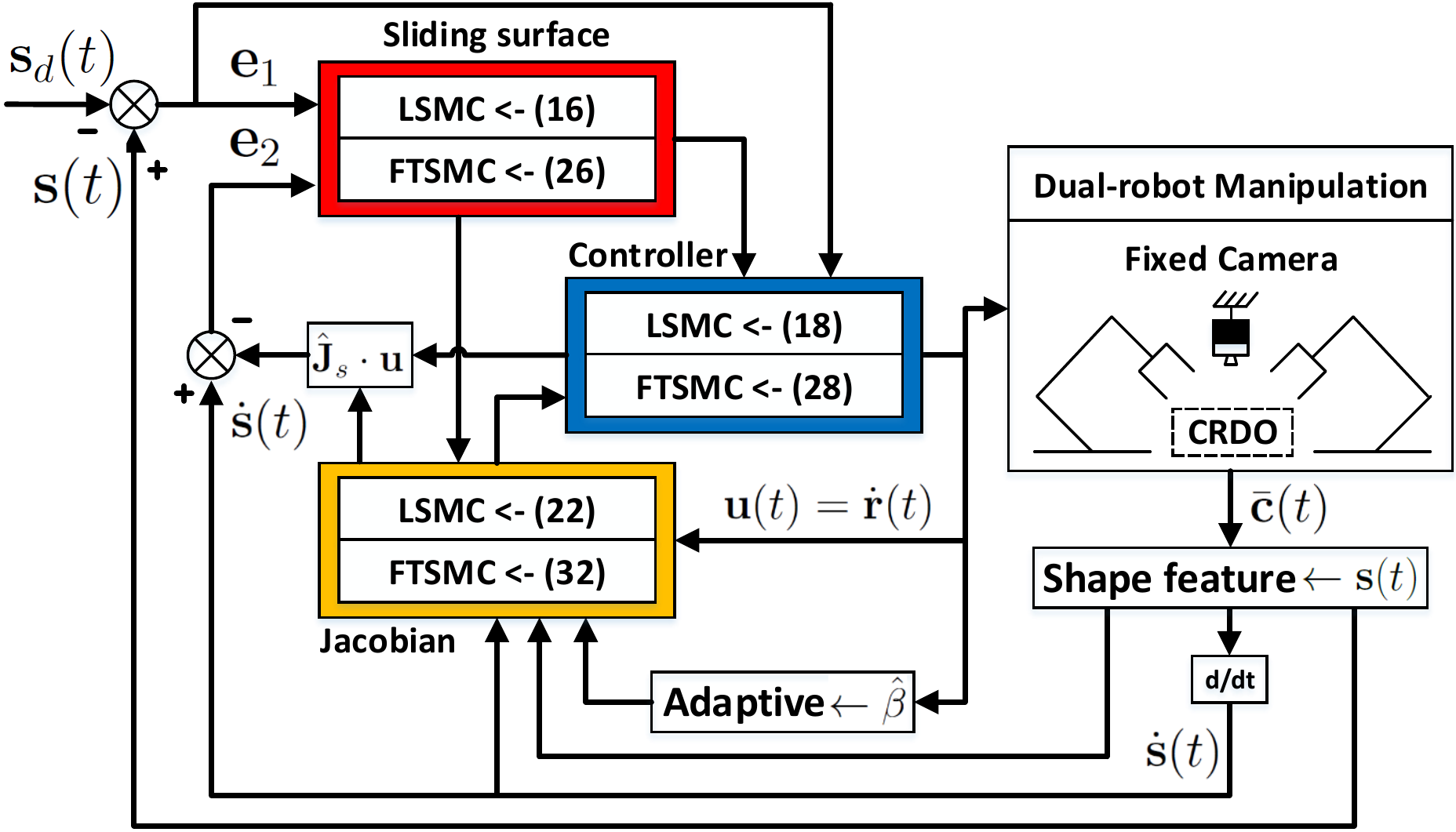}
    \caption{The block diagram of the proposed manipulation framework.}
    \label{fig34}
\end{figure}

\begin{remark}
In practice, the function $\sgn$ can be replaced with the function $\tanh$ to reduce the chattering effects.
\end{remark}


\section{NUMERICAL SIMULATIONS}\label{sec6}
In this section, we simulate the motion of a planar dual-arm robot that manipulates an elastic cable.
The velocity input is constructed as $\mathbf{u}=[\mathbf{u}_1^T, \mathbf{u}_2^T]^T \in \mathbb{R}^6$, where the coordinates of $\mathbf u_{i}^T = [u_{i1},u_{i2},u_{i3}]$ represent the linear and angular velocities along the $x$, $y$ and $z$ axes, respectively.
The maximum speed for linear motions is set to $|u_{ij}| \leq 0.06$ m/s (for $j=1,2$), and for angular motions to $|u_{i3}| \leq 0.2$ rad/s.
The simulation environment is programmed in Python; The code is available at \url{https://github.com/q546163199/shape_deformation/tree/master/python/package/shape_simulator}.

\begin{figure}
	\centering
	\subfloat[]{\includegraphics[width=11mm,height=11mm]{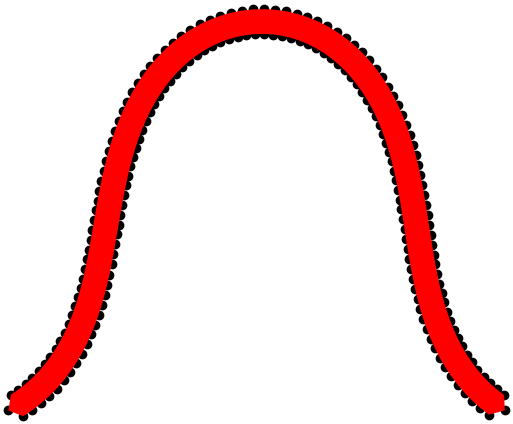}}
	\subfloat[]{\includegraphics[width=11mm,height=11mm]{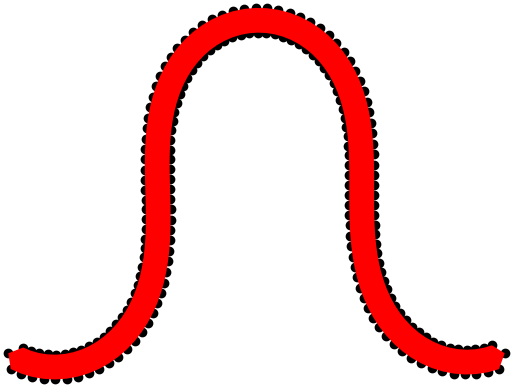}}
	\subfloat[]{\includegraphics[width=11mm,height=11mm]{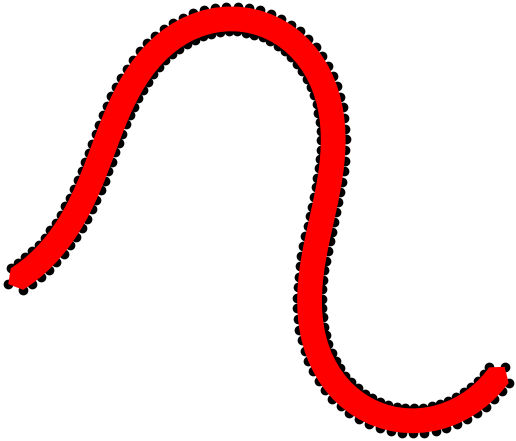}}
	\subfloat[]{\includegraphics[width=11mm,height=11mm]{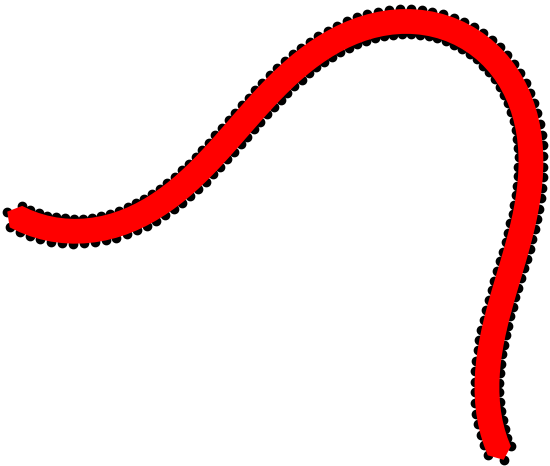}}
	\subfloat[]{\includegraphics[width=11mm,height=11mm]{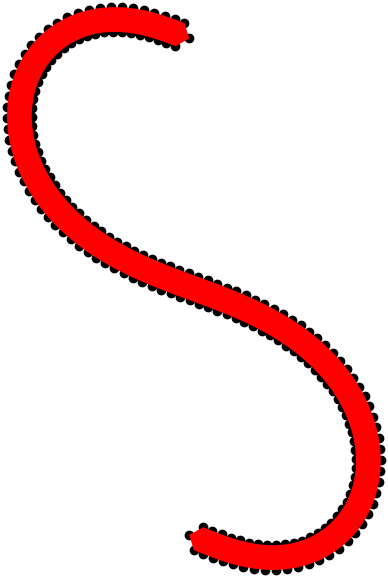}}
	\subfloat[]{\includegraphics[width=11mm,height=11mm]{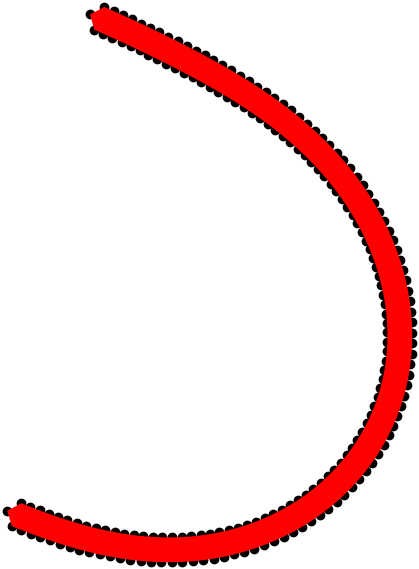}}
	\subfloat[]{\includegraphics[width=11mm,height=11mm]{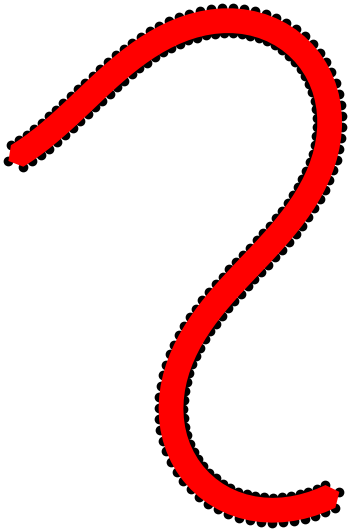}}
	\subfloat[]{\includegraphics[width=11mm,height=11mm]{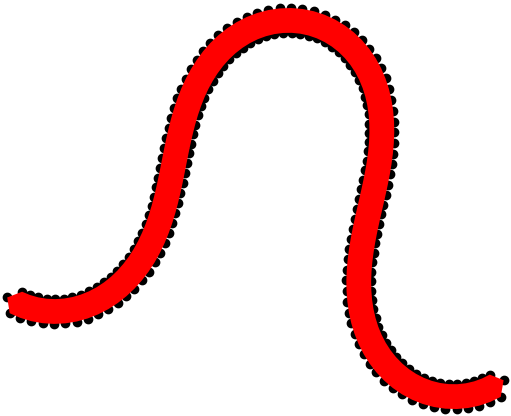}}
	
	\caption{Various shapes of the elastic cable among different conditions.}
	\label{fig4}
\end{figure}

\begin{figure}[h]
\centering
\includegraphics[scale=0.26]{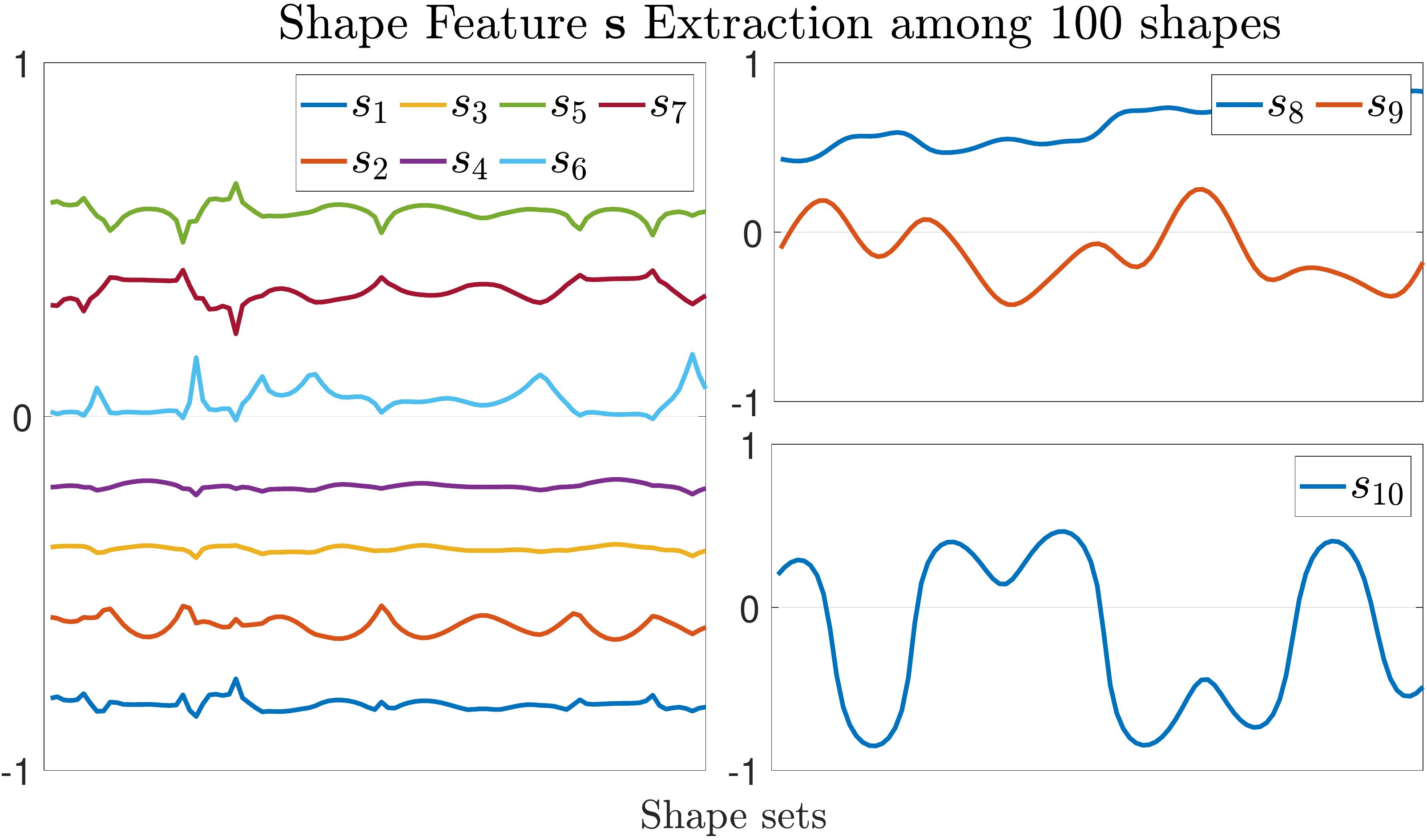}
\caption{
Contour moments extraction \eqref{eq2}-\eqref{eq65} from 100 sample shapes.}
\label{fig41}
\end{figure}

\begin{figure}[h]
\centering
\includegraphics[scale=0.256]{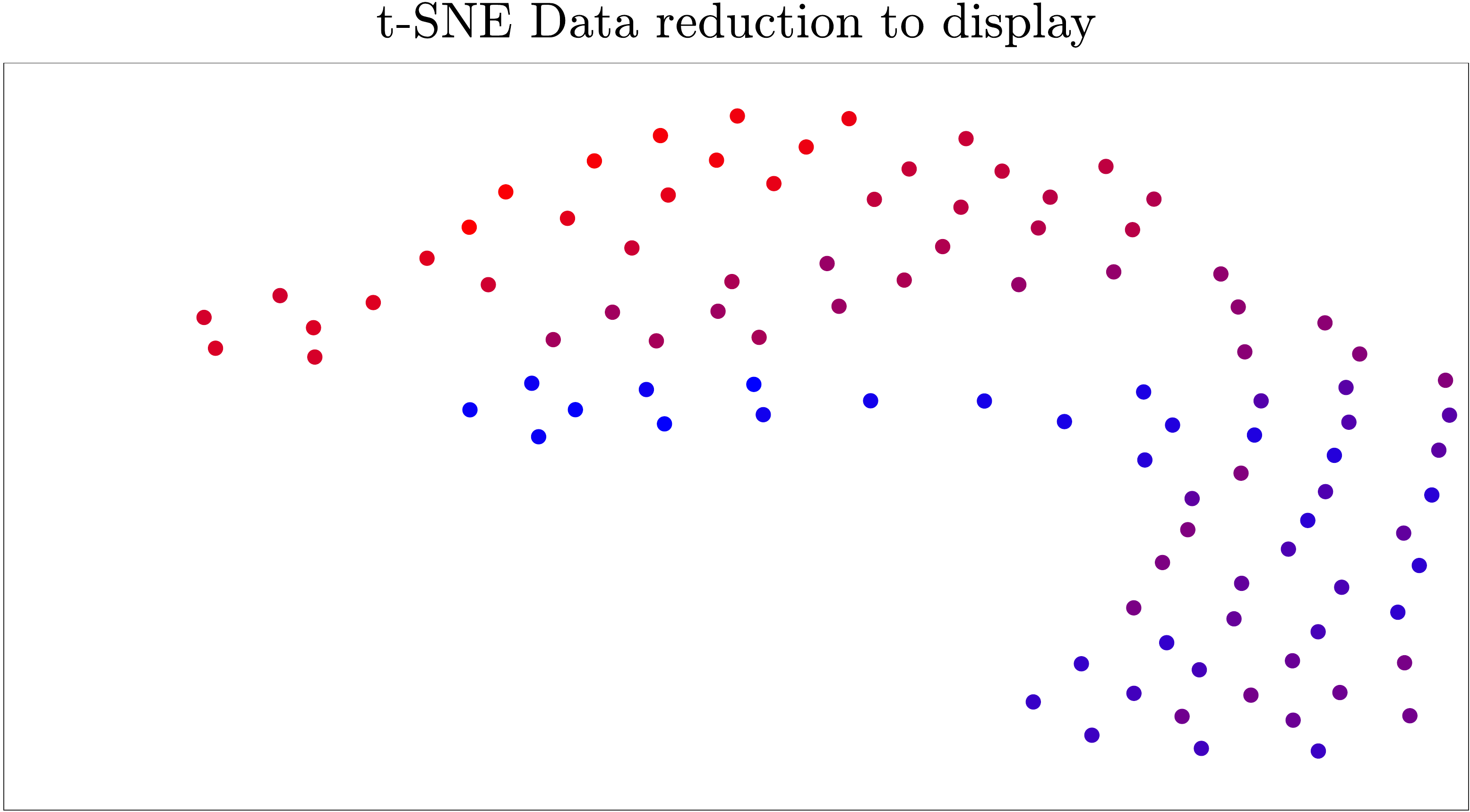}
\caption{
Data distribution t-SNE of the contour moments.
Each point represents each shape.
Different color indicates the different shape to visualize the shape changes.}
\label{fig44}
\end{figure}

\subsection{Validation of the Contour Moments Extraction}\label{sec6a}
In this section, the dual-arm robot manipulates the elastic cable along a continuous trajectory that generates multiples object shapes, see Fig. \ref{fig4}. 
The contour moments that are extracted from these robot-object motions are depicted in Fig. \ref{fig41}, which shows that as shape changes, the features' profiles also change.
We collect a 100 (ten-dimensional) data points $\mathbf s$ from this motion test and use t-SNE \cite{van2008visualizing} to visualize its evolution with a color gradient (with blue as initial and red as final), as shown in Fig. \ref{fig44}.
These results demonstrate that the proposed shape features provide a smooth (continuous) representation for the object's deformable configuration.

\begin{figure}
	\centering
	\includegraphics[scale=0.26]{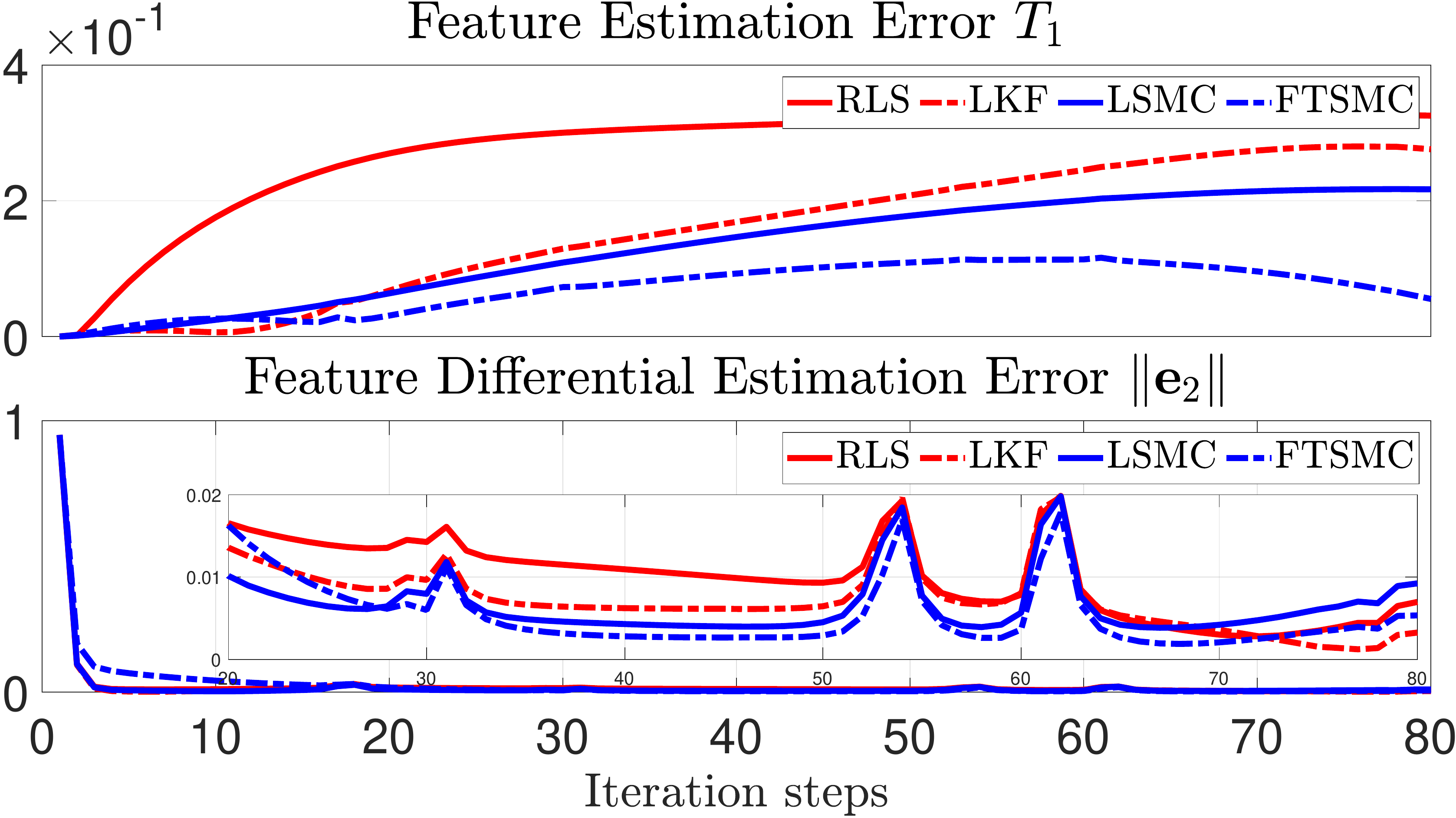}
	\caption{Profiles of the criteria $T_1$ and $\| \mathbf{e}_2 \|$ with dual-arm robot executing a given trajectory $\mathbf{r}$.
	It gives the estimation effect of $\hat{\mathbf{J}}_s$ among
	RLS \cite{hosoda1994versatile}, 
	LKF \cite{qian2002online}, 
	LSMC \eqref{eq15} and 
	FTSMC \eqref{eq31}, respectively.}
	\label{fig3}
\end{figure}

\subsection{Estimation of the DJM}
\label{sec6b}
In this section, we command the robot to execute slow and smooth trajectories $\dot{\mathbf{r}}(t)$ with the proposed LSMC and FTSMC controllers. 
The purpose of this test is to compare the accuracy of the Jacobian estimation $\hat{\mathbf{J}}_s$ obtained with these methods, and that obtained with traditional Recursive Least Square (RLS) \cite{hosoda1994versatile} and Linear Kalman Filter (LKF) \cite{qian2002online}.
For that, we initialize $\hat{\mathbf{J}}_s(0)$ with a random matrix and then perform small local motions and run the estimator to obtain an ``good-enough'' matrix $\hat{\mathbf{J}}_s$.
To quantify the accuracy of such estimation, we introduce the following metric:
\begin{equation}
\label{eq40}
{T_1} = \left\| {{\mathbf{s}} - {{\hat{\mathbf{s}}}}} \right\|
\end{equation}
where $\hat{\mathbf{s}}$ is the estimated shape feature that is updated as follows:
\begin{equation}
\label{eq41}
\begin{array}{*{20}{c}}
{\dot{\hat{\mathbf{s}}}  = \hat{\mathbf{J}}_s \cdot \mathbf{u}},&{\hat{\mathbf{s}}\left( 0 \right) = \mathbf{s} \left( 0 \right)}
\end{array}
\end{equation}
where $\hat{\mathbf{J}}_s$ is estimated by RLS, LKF, LSMC and FTSMC.
$\hat{\mathbf{s}}(0)$ and $\mathbf{s}(0)$ are the initial values of $\hat{\mathbf{s}}$ and $\mathbf{s}$, respectively.
Fig. \ref{fig3} illustrates the evolution of $T_1$ and $\| \mathbf{e}_2 \|$ for the motion $\dot{\mathbf{r}}$ executed by the robot in the previous test.
It can be seen that FTSMC provides the best approximation amongst the considered algorithms (viz. RLS, LKF, LSMC).
These results corroborate that the proposed FTSMC can accurately predict the shape features $\mathbf{s}$ and its differential change $\dot{\mathbf s}$, enabling to guide the robot with the estimated matrix $\hat{\mathbf J}_s$.

\begin{figure}[ht]
\centering
\subfloat[RLS]  {\includegraphics[scale=0.17]{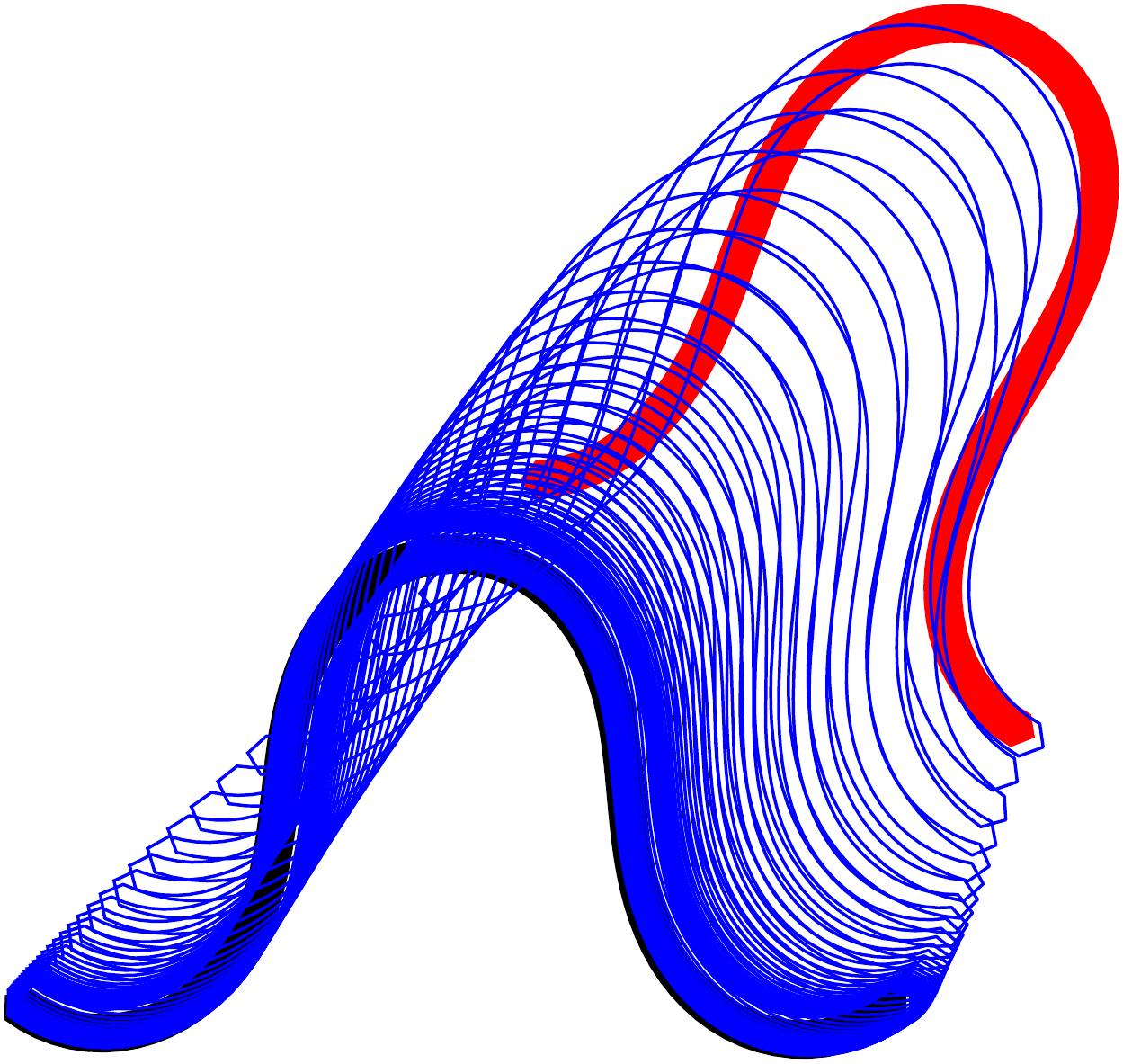}}
\subfloat[LKF]  {\includegraphics[scale=0.17]{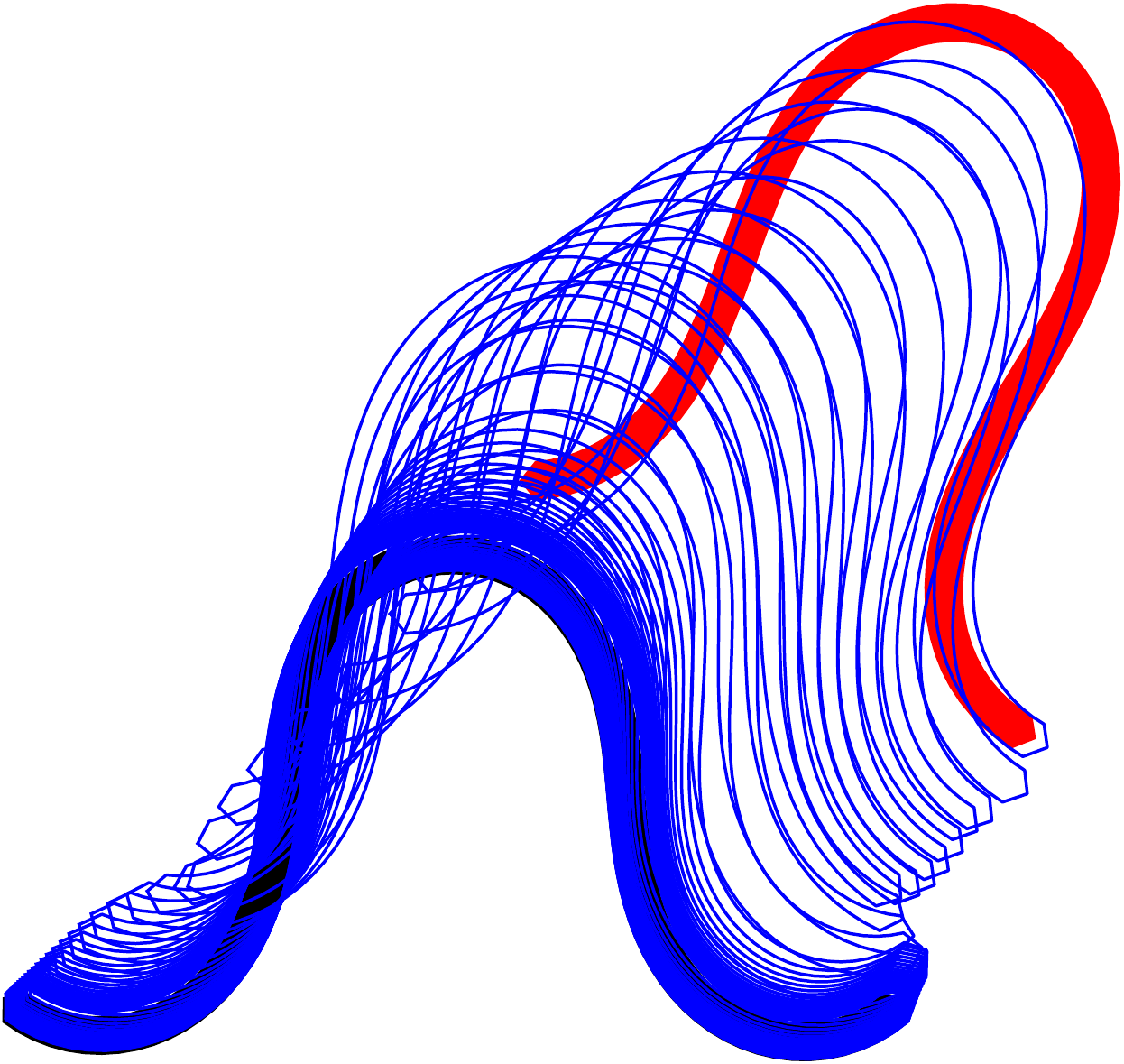}}
\subfloat[LSMC] {\includegraphics[scale=0.17]{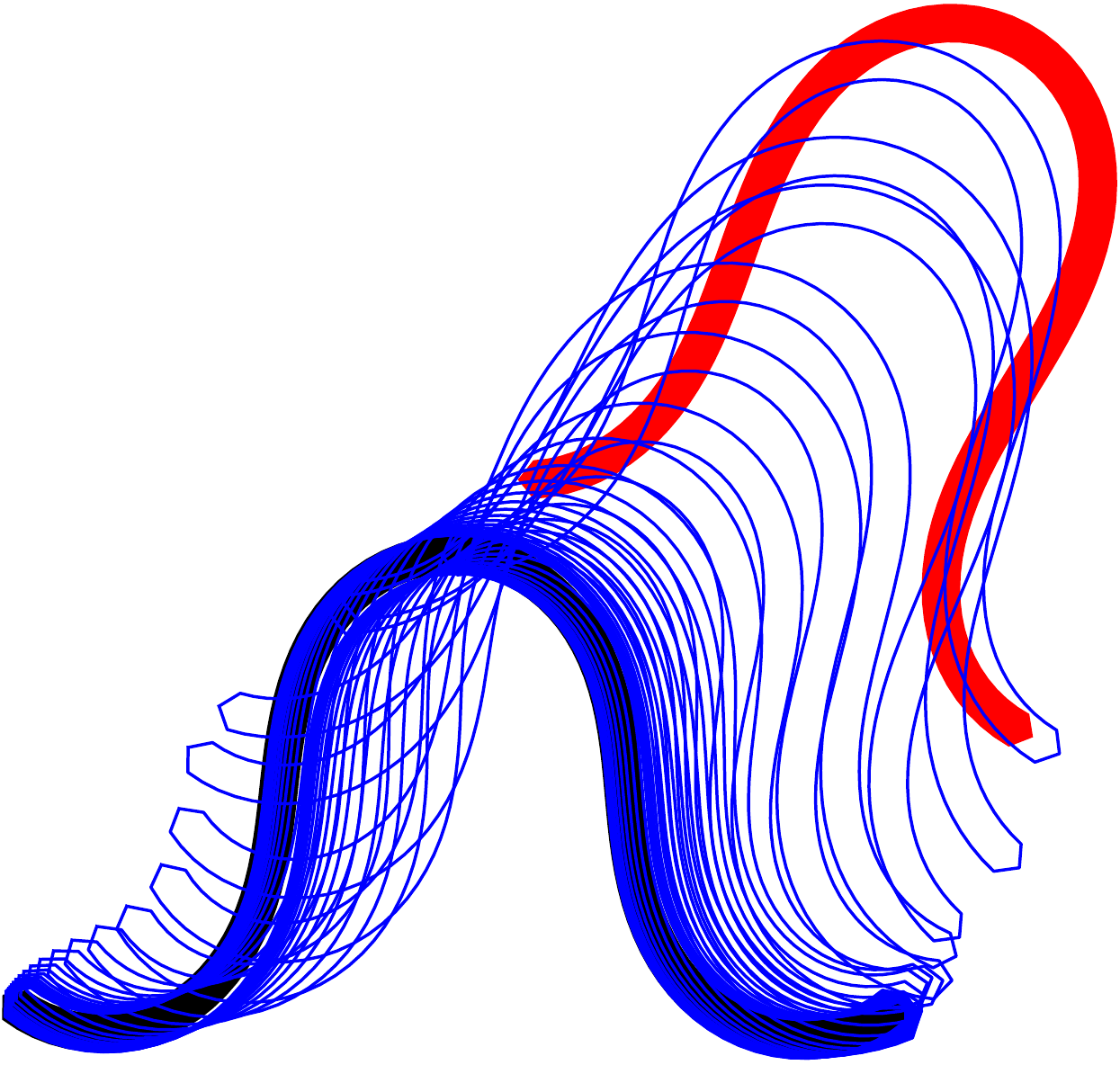}}
\subfloat[FTSMC]{\includegraphics[scale=0.17]{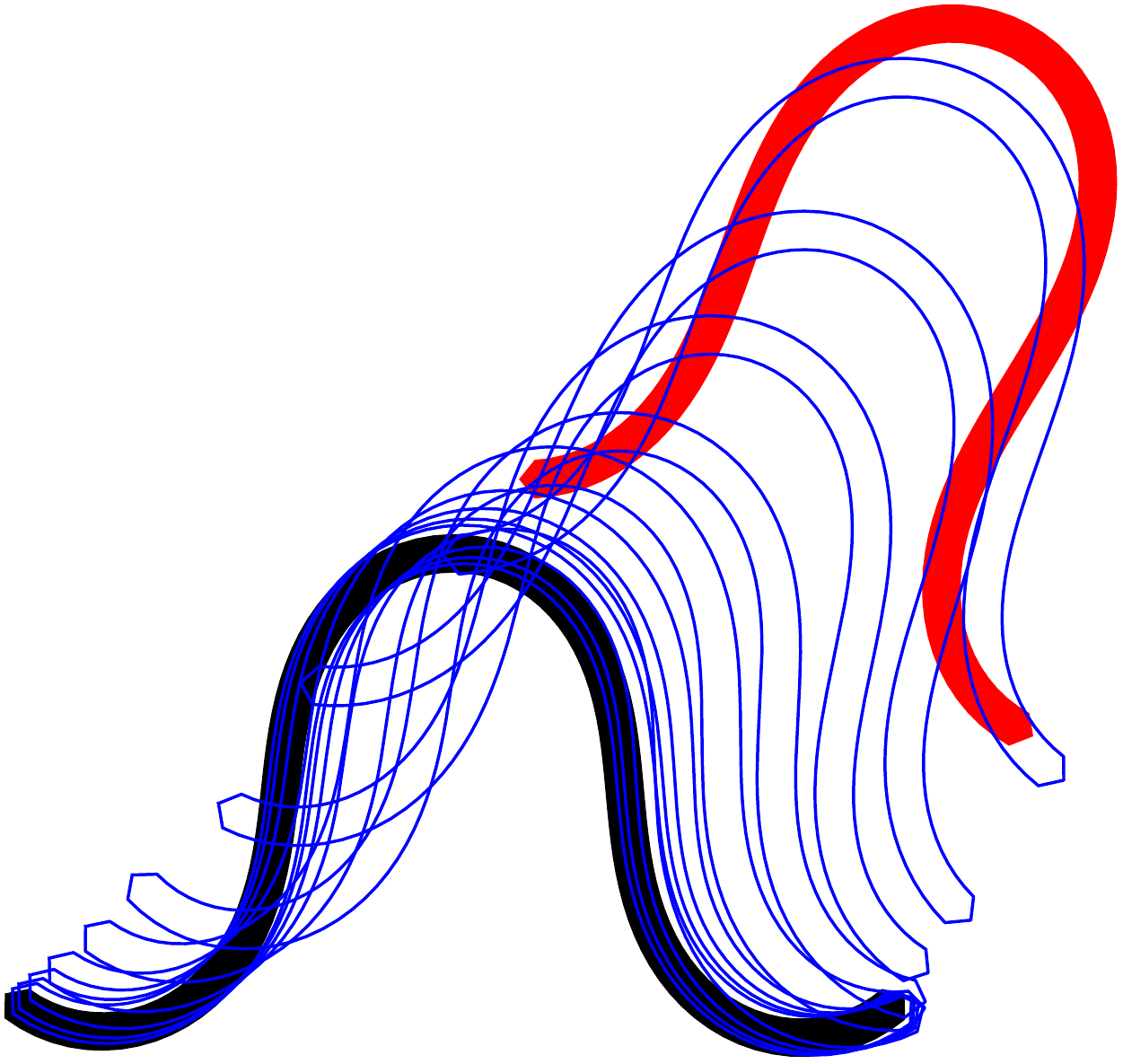}}
\caption{Profiles of the shape deformation trajectories among four methods, namely, 
RLS \cite{hosoda1994versatile}\cite{qi2020adaptive}, 
LKF \cite{qian2002online}\cite{qi2020adaptive}, 
LSMC \eqref{eq10}\eqref{eq15} and FTSMC \eqref{eq25}\eqref{eq31}. 
The red represents the initial contour, the blue represents transitional contours and the black represents the target contour $\bar{\mathbf{c}}^*$ represented by $\mathbf{s}_d$.
The deformation trajectories display every two frames.}
\label{fig6}
\end{figure}

\begin{figure}[ht]
\centering
\includegraphics[scale=0.25]{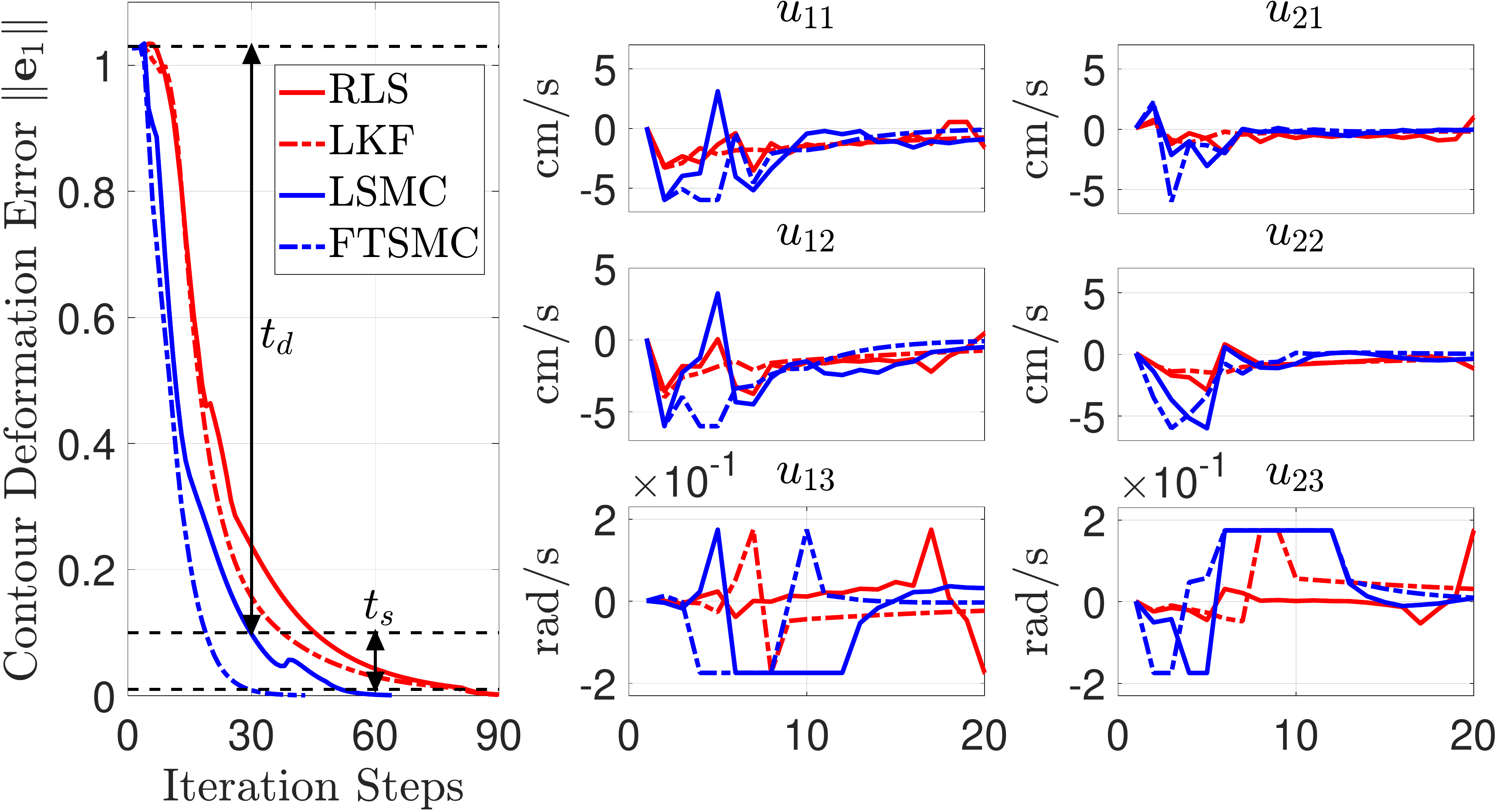}
\caption{Profiles of contour deformation error $\| \mathbf{e}_1 \|$ and velocity command $\mathbf{u}$ among four methods, namely, 
RLS \cite{hosoda1994versatile}\cite{qi2020adaptive}, 
LKF \cite{qian2002online}\cite{qi2020adaptive}, 
LSMC \eqref{eq10}\eqref{eq15} and 
FTSMC \eqref{eq25}\eqref{eq31}.}
\label{fig5}
\end{figure}

\begin{figure}[ht]
\centering
\includegraphics[scale=0.26]{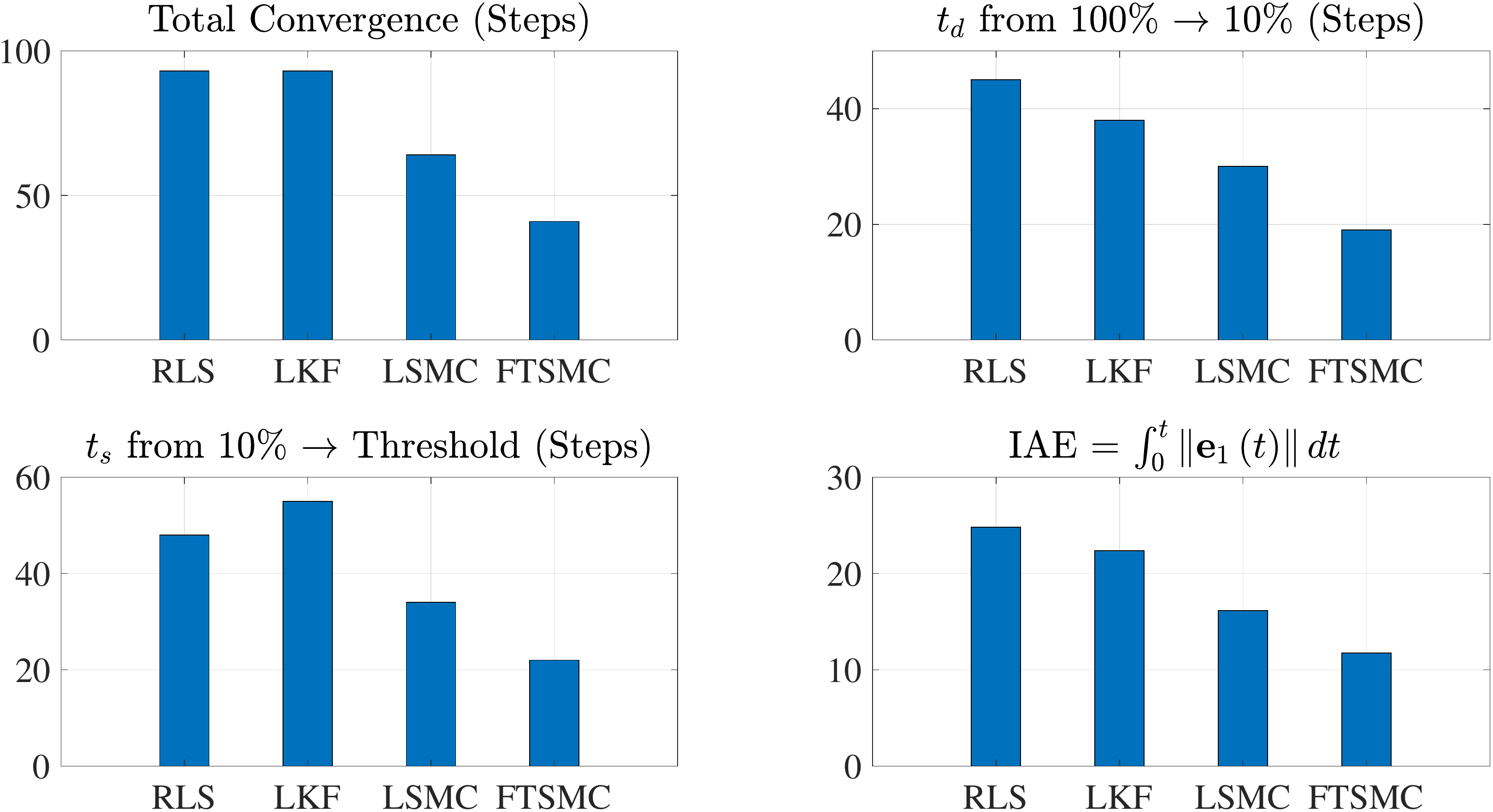}
\caption{Performance comparison of four indices among four methods, namely, 
RLS \cite{hosoda1994versatile}\cite{qi2020adaptive}, 
LKF \cite{qian2002online}\cite{qi2020adaptive}, 
LSMC \eqref{eq10}\eqref{eq15} and
FTSMC \eqref{eq25}\eqref{eq31}.}
\label{fig17}
\end{figure}

\subsection{Manipulation of an Elastic Cable}\label{sec6c}
In this section we validate the performance of the proposed shape controller by commanding the robot to deform a linear deformable object (cable) into a desired contour $\bar{\mathbf{c}}^*$ (which is transformed into a desired feature vector $\mathbf{s}_d$).
We compare the error minimization performance of the FTSMC/LSMC methods with that of a classical visual servoing controller (i.e. \'a la Chaumette) with the DJM adaptively estimated with RLS \cite{hosoda1994versatile} and LKF \cite{qian2002online}.

Additionally to the shape error $\| \mathbf{e}_1 \|$, we use the following four indices to analyze the performance of the system:
\begin{enumerate}
\item Convergence step $T_{max}$ indicates the total deformation time consumed by the system.
\item Decay time $t_d$ indicates the time required to decrease from $\| \mathbf{e}_1 \|$ to 10\% of its initial value $\| \mathbf{e}_1(0) \|$.
\item Settling time $t_s$ indicates the time required to decrease from 10\% of $\| \mathbf{e}_1(0) \|$ to 0.01. 
\item Integrated absolute error (IAE) indicates the cumulative error and shows the energy consumption of the system
\begin{equation}
\label{eq43}
\rm{IAE} = \int_0^t {\left\| {{\mathbf{e}_1}} \right\|dt}
\end{equation}
\end{enumerate}


Fig. \ref{fig6} shows the simulated deformation trajectories with the four methods.
From the analysis of the FTSMC in \eqref{eq27}, we can see that the parameters $\varepsilon_1, \varepsilon_2, \gamma, \chi$ determine the minimization performance $\mathbf{\sigma}_i \rightarrow 0$.
The parameter $\varepsilon_1$ represents a control gain with similar function to $\mathbf{K}_1$ in \eqref{eq8}.
As there exists an adjustable power term in \eqref{eq25}, the accuracy and speed of the system can be adjusted without increasing $\varepsilon_1$.
This valuable property is lacking in the other three methods, as they must increase the ``proportional-like'' control gain to achieve a comparable convergence rate (which in turn may generate a large input velocity $\mathbf{u}$).
Fig. \ref{fig5} shows that FTSMC provides the fastest error minimization and LSMC the second-best, while RLS and LKF give a similar performance.
A summary of the performance comparison amongst these control methods is shown in Fig. \ref{fig17}.

\begin{figure}[h]
\centering
\subfloat[Elastic cable]{\includegraphics[width=29mm, height=20mm]{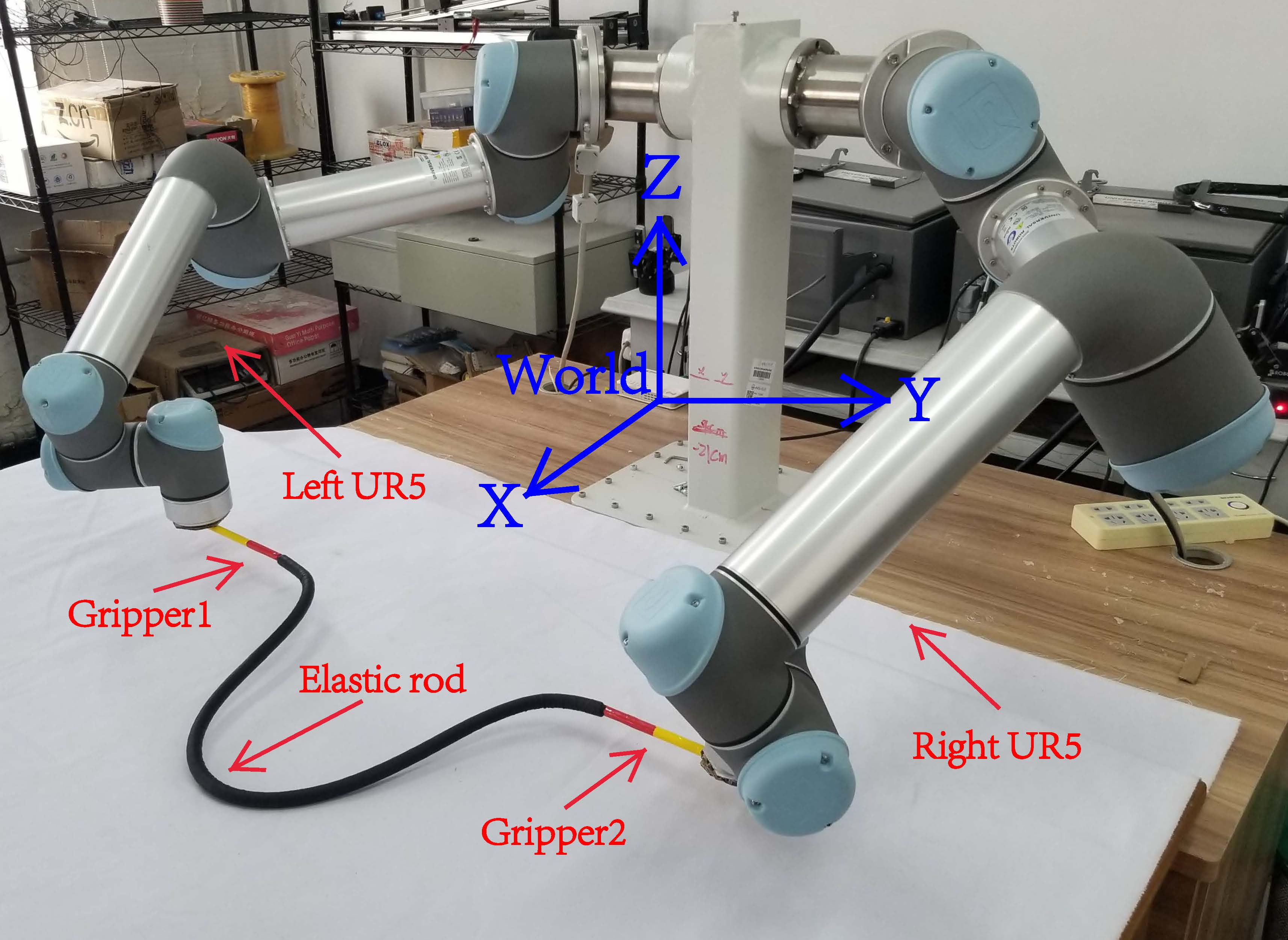}\label{fig13a}}
\subfloat[Linear sponge]{\includegraphics[width=29mm, height=20mm]{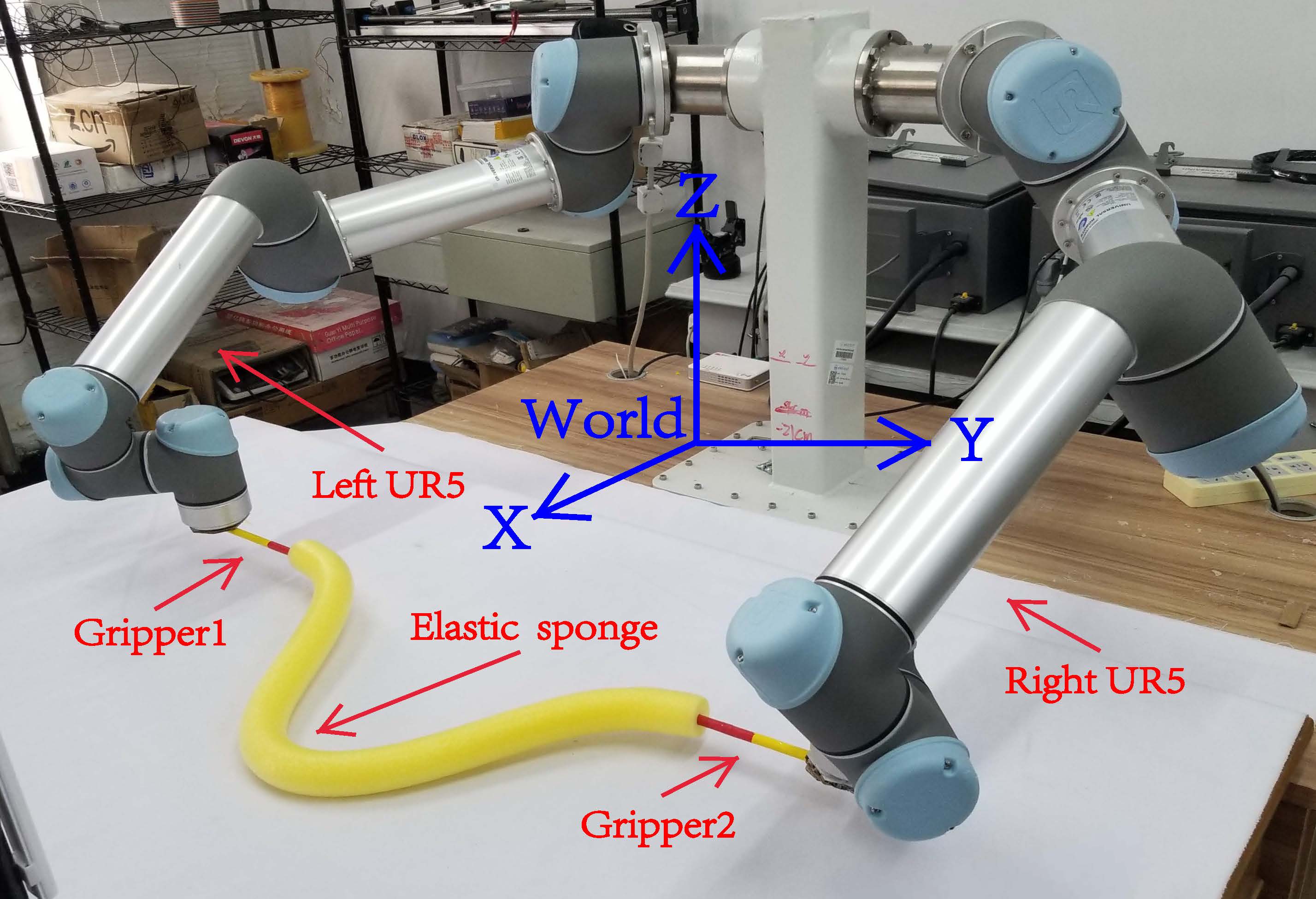}\label{fig13b}}
\subfloat[Plastic folder]{\includegraphics[width=29mm, height=20mm]{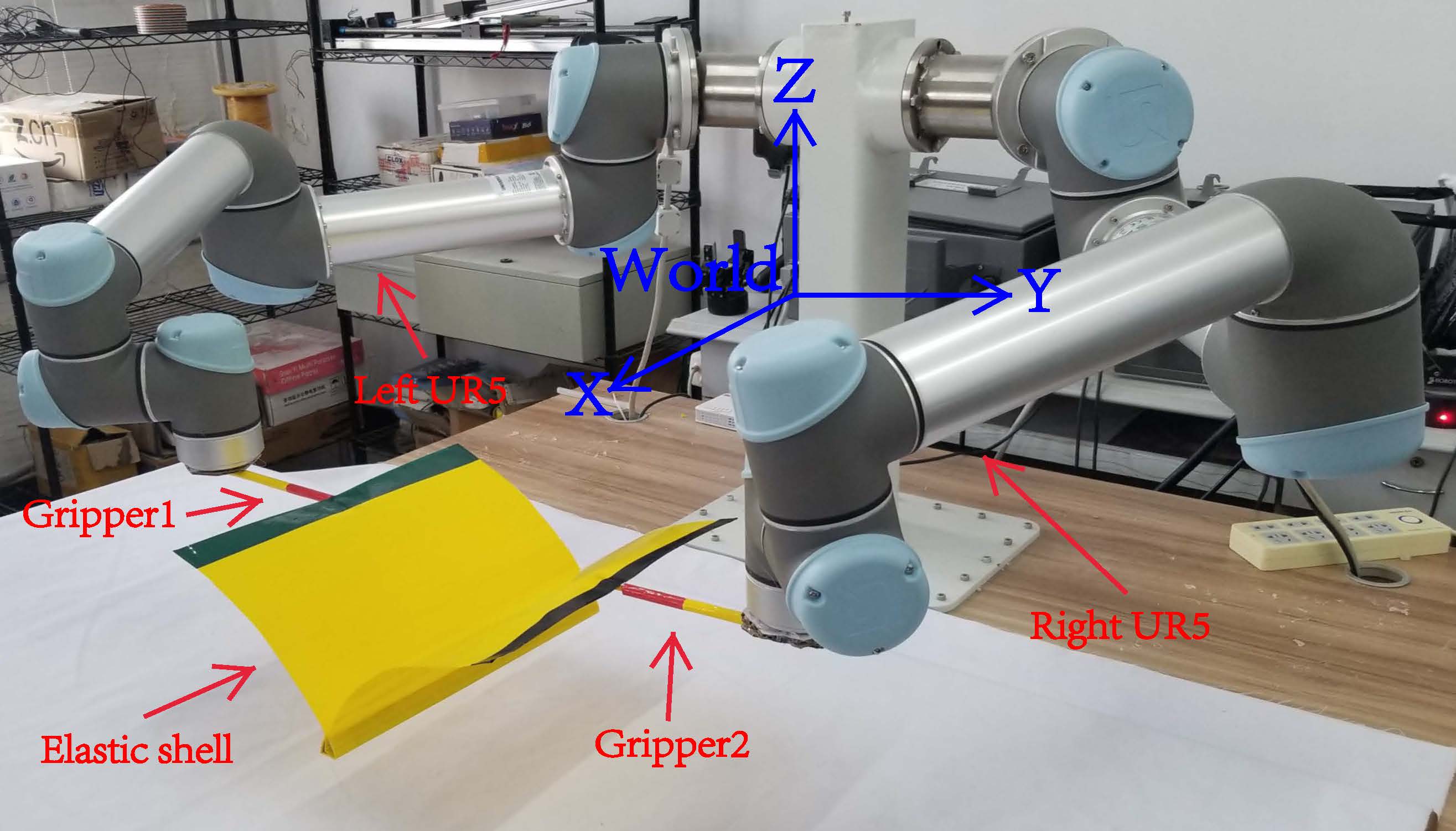}\label{fig13c}}

\subfloat[NH beam]{\includegraphics[width=29mm, height=20mm]{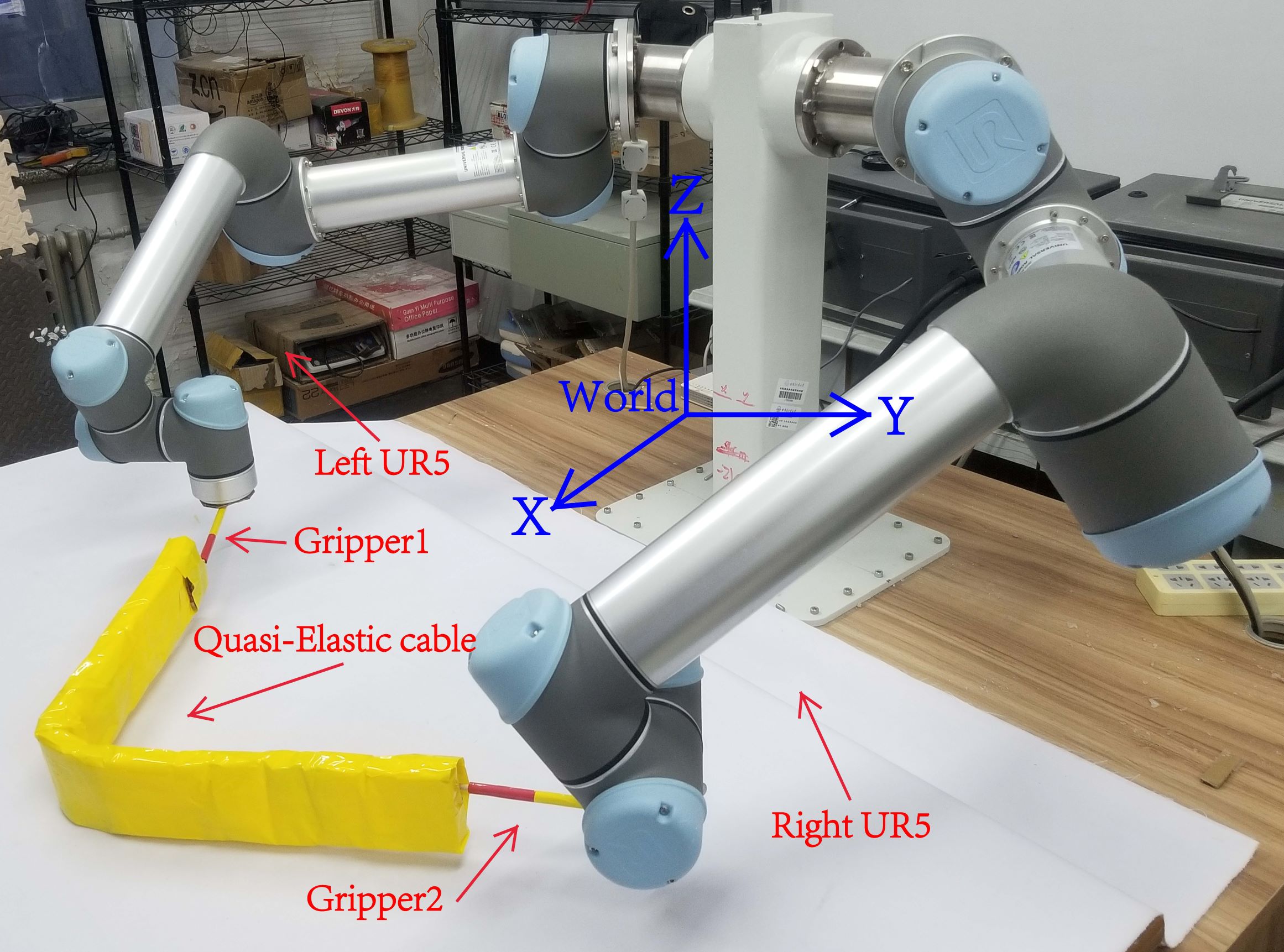}\label{fig13d}}
\subfloat[Articulated wallet]{\includegraphics[width=29mm, height=20mm]{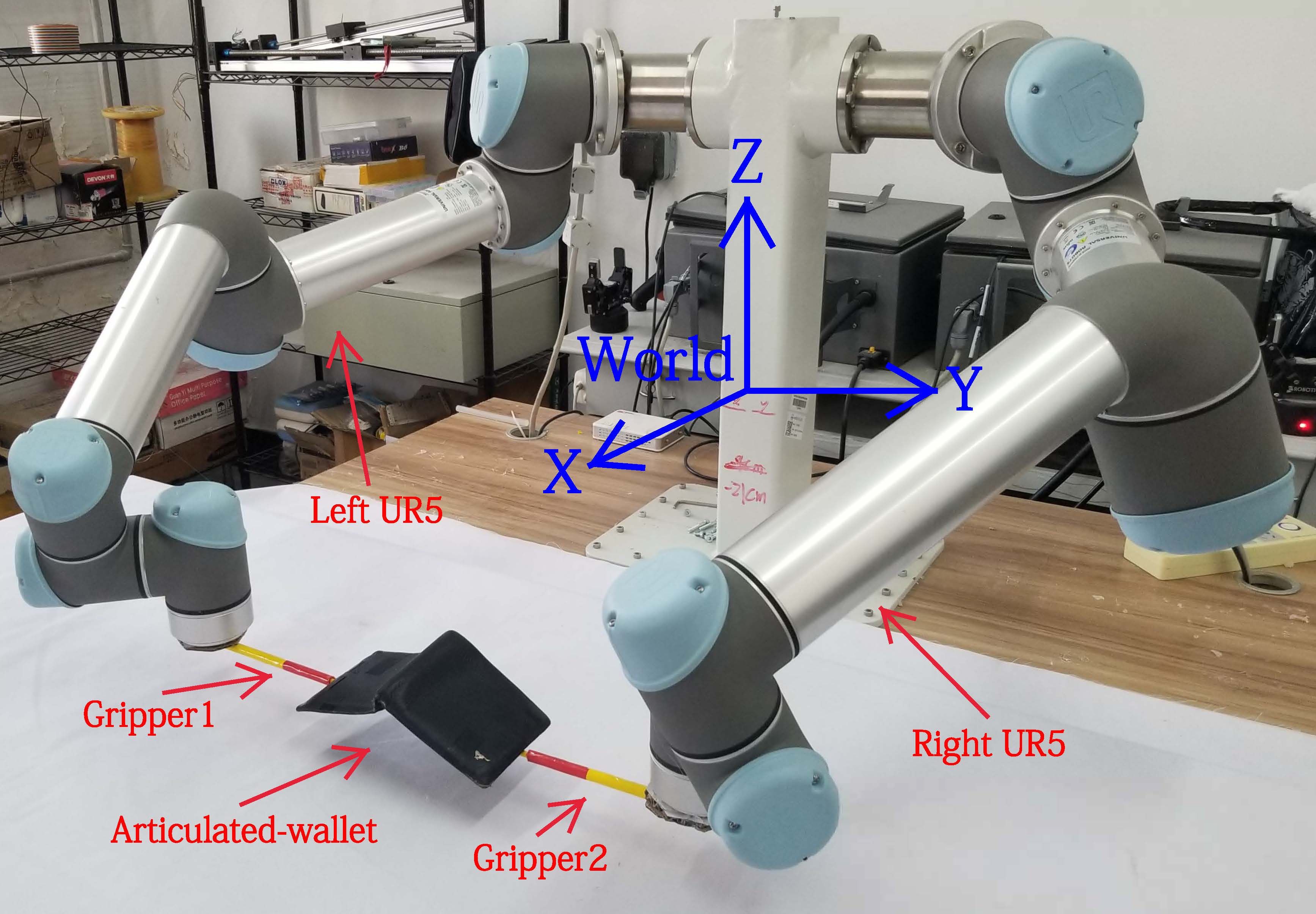}\label{fig13e}}
\subfloat[Rigid box]{\includegraphics[width=29mm, height=20mm]{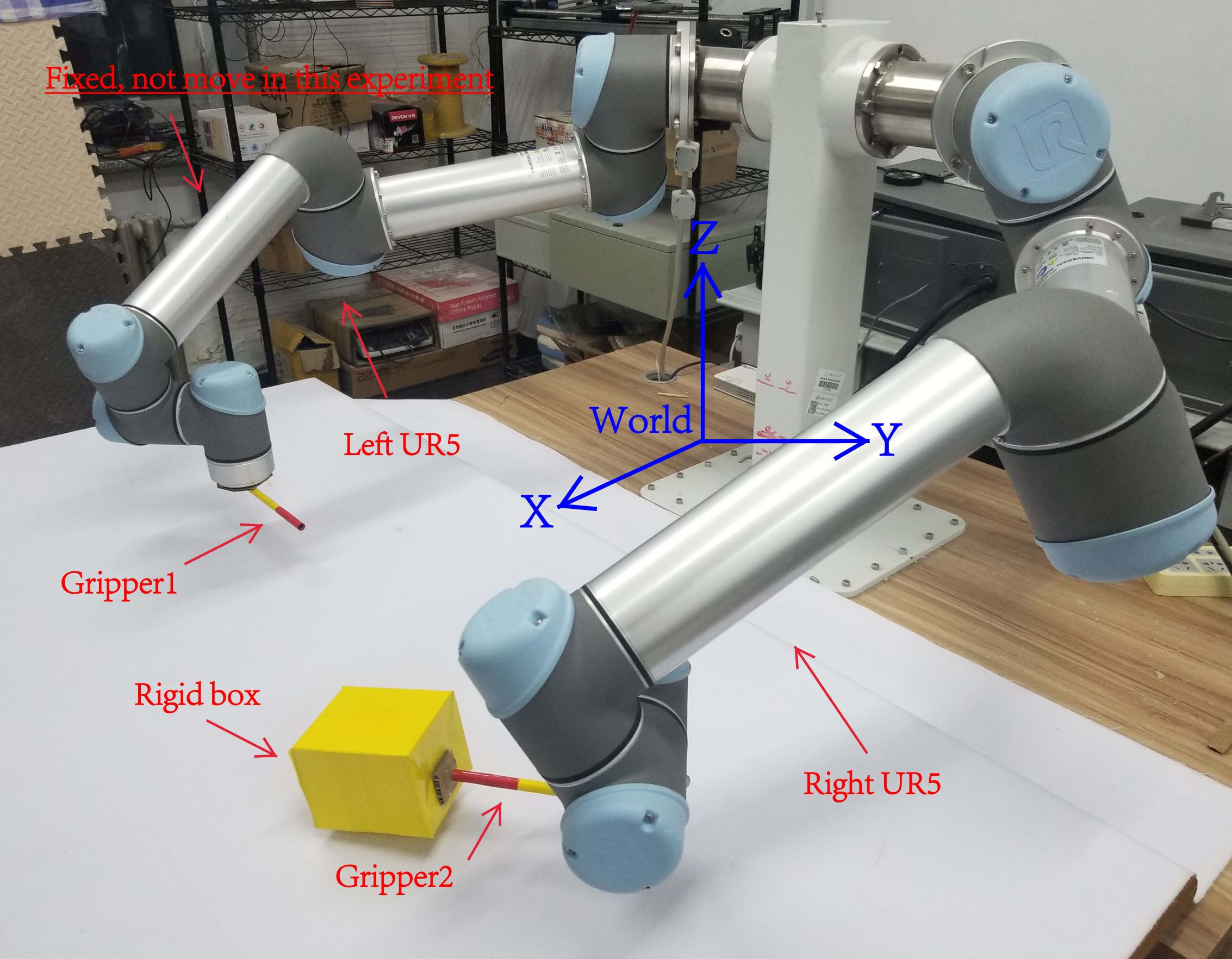}\label{fig13f}}

\caption{Dual-arm UR5 experiment platform manipulating CRDO.}
\label{fig18}
\end{figure}

\section{EXPERIMENTS}\label{sec7}
\subsection{Setup}
To further validate the effectiveness of our proposed framework, we conducted an experimental study with a dual-arm robotic platform (composed of two UR5 robots) manipulating several types of CRDO: Elastic cable, sponge, plastic folder, non-homogeneous (NH) beam, articulated wallet, and a rigid box, see Fig. \ref{fig18}.
A Logitech C270 camera is used to capture the shape of the objects; All image processing algorithms are implemented in a Linux PC at 30 FPS with OpenCV. 
The experimental video can be downloaded from:  \url{https://github.com/q546163199/experiment_video/tree/master/paper3/video.mp4}

\begin{figure}
\centering
\subfloat[ROI selection]{\includegraphics[scale=0.095]{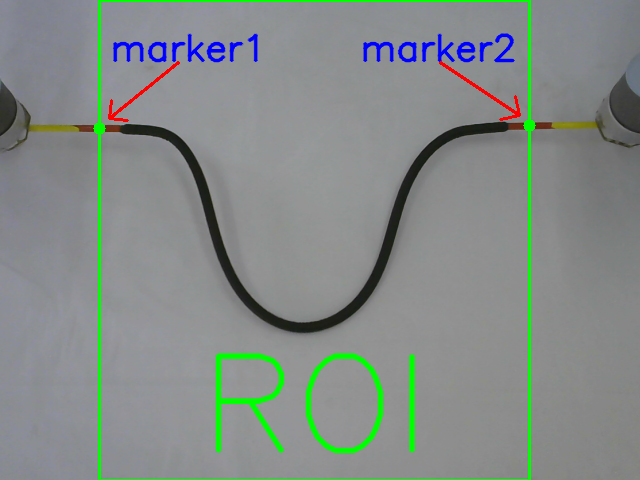}\label{fig18a}}
\subfloat[Binary]{\includegraphics[scale=0.095]{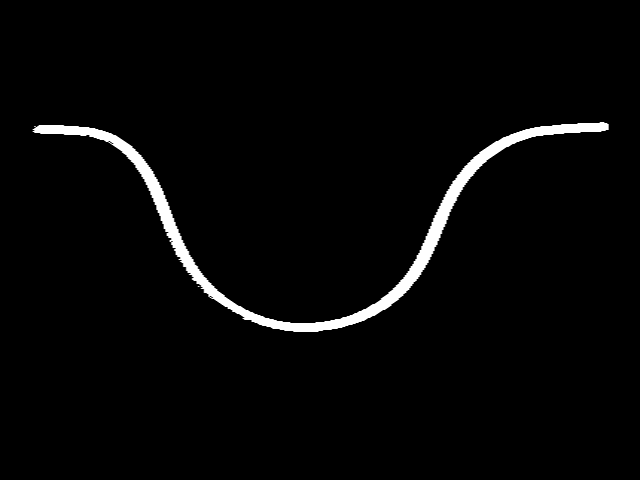}\label{fig18b}}
\subfloat[Extraction]{\includegraphics[scale=0.095]{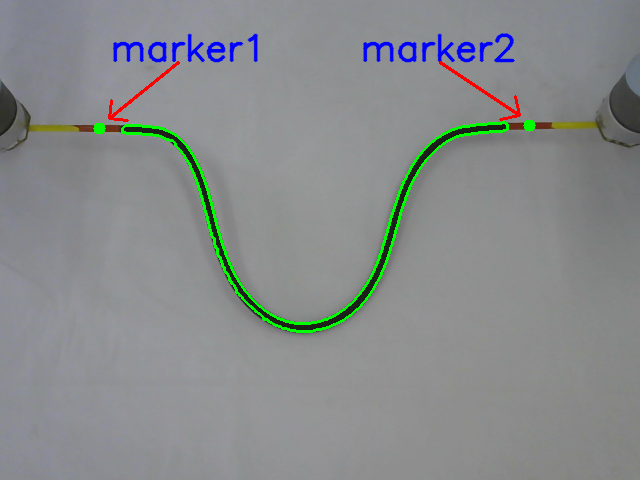}\label{fig18c}}
\subfloat[Fixed-sampled]{\includegraphics[scale=0.095]{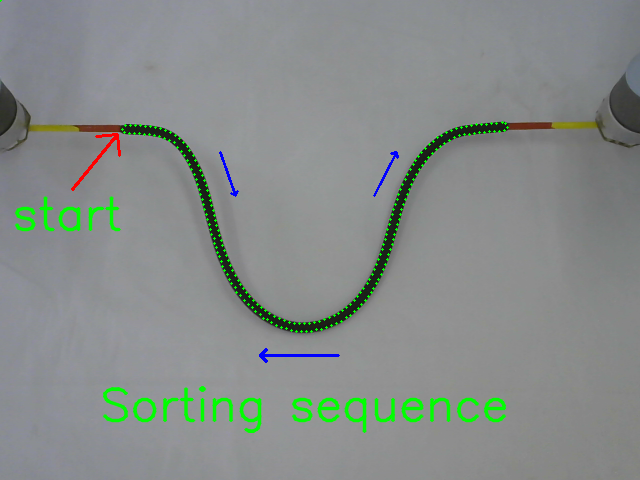}\label{fig18d}}

\subfloat[ROI selection]{\includegraphics[scale=0.095]{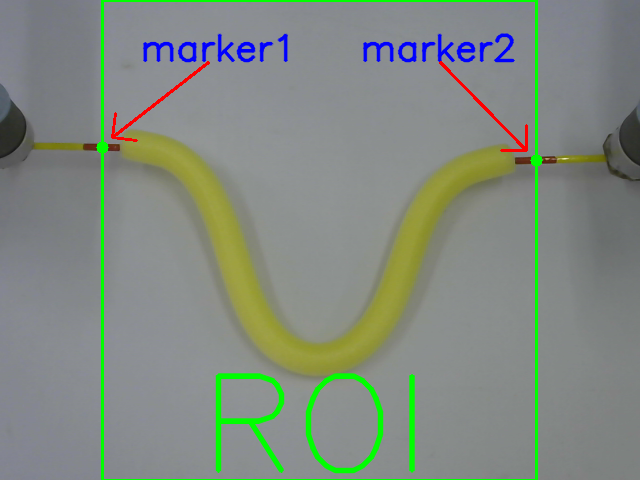}\label{fig18e}}
\subfloat[HSV]{\includegraphics[scale=0.095]{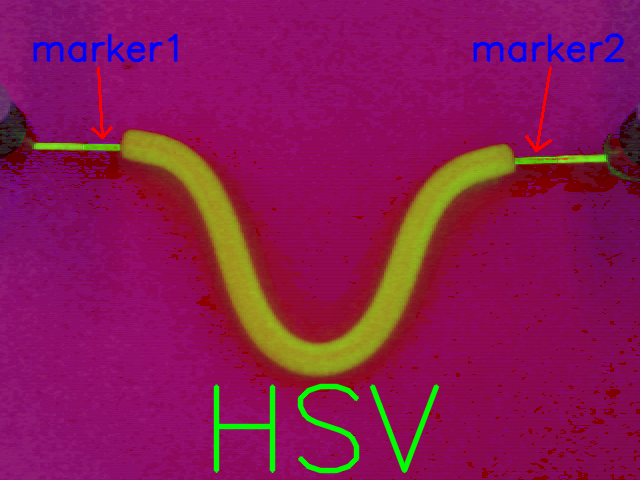}\label{fig18f}}
\subfloat[Extraction]{\includegraphics[scale=0.095]{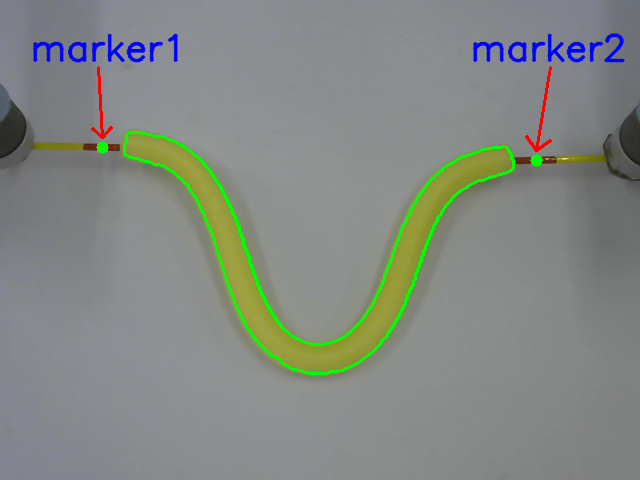}\label{fig18g}}
\subfloat[Fixed-sampled]{\includegraphics[scale=0.095]{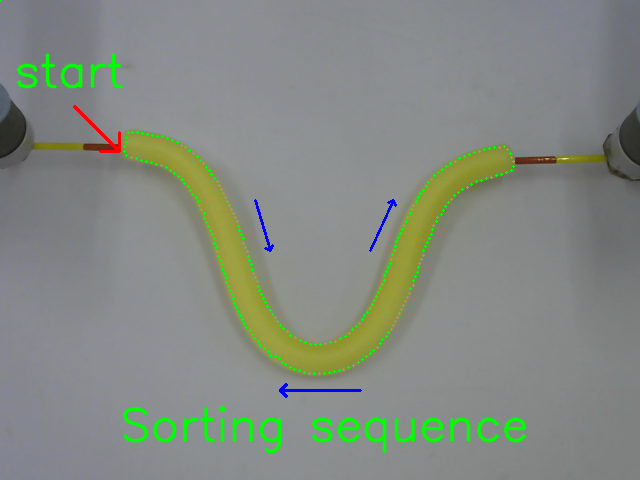}\label{fig18h}}

\caption{Image processing for generating a fixed-sampled closed contour of various objects.}
\label{fig16}
\end{figure}

\subsection{Image Processing}
The contour extraction process (depicted in Fig. \ref{fig16}) is as follows: The red areas near the grippers are first extracted, and their centroids computed and marked with green points. The object's region of interest (ROI) is defined by these points.
For the manipulated black objects, their binary image is extracted from the ROI by using OpenCV's morphological algorithms for denoising.
For the manipulated yellow objects, we transform their image into HSV and then conduct mask processing to obtain the binary image.
The contour of the objects (either black or yellow) is then simply extracted by using OpenCV's \emph{findcontour} function.
Finally, a fixed-step (set to $N=300$) algorithm \cite{qi2020adaptive} is used to construct the contour $\bar{\mathbf c}$ with a constant number of points $\mathbf c_i$.

\begin{figure}
\centering
\subfloat[Linear sponge]
{\includegraphics[scale=0.17]{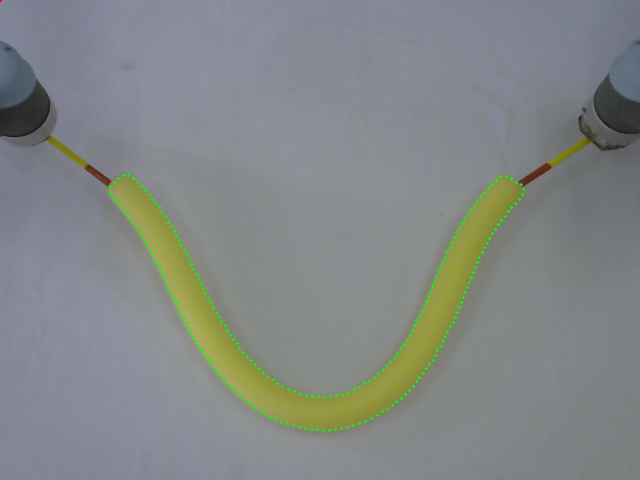}}
\subfloat[NH beam]
{\includegraphics[scale=0.17]{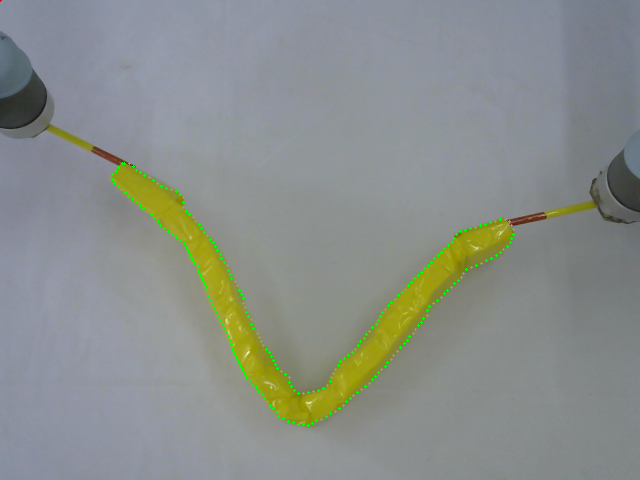}}
\subfloat[Rigid box]
{\includegraphics[scale=0.17]{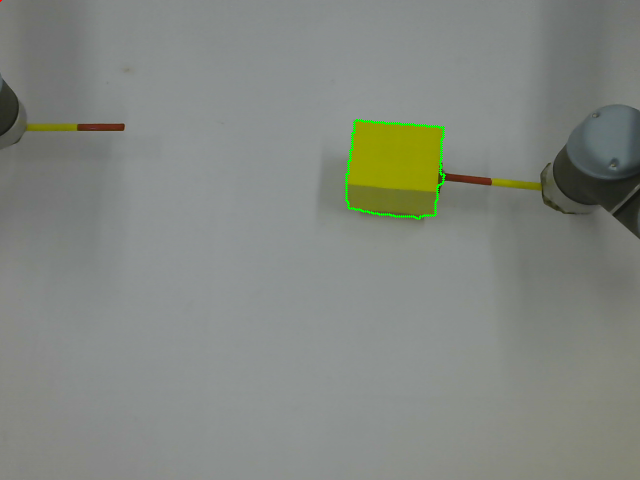}}

\subfloat[Linear sponge]
{\includegraphics[scale=0.17]{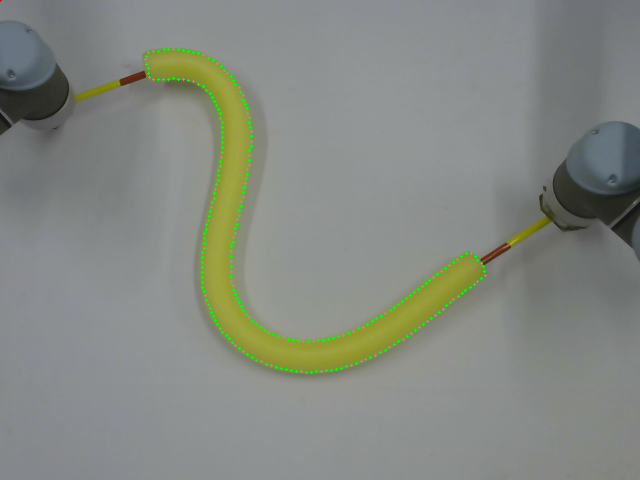}}
\subfloat[NH beam]
{\includegraphics[scale=0.17]{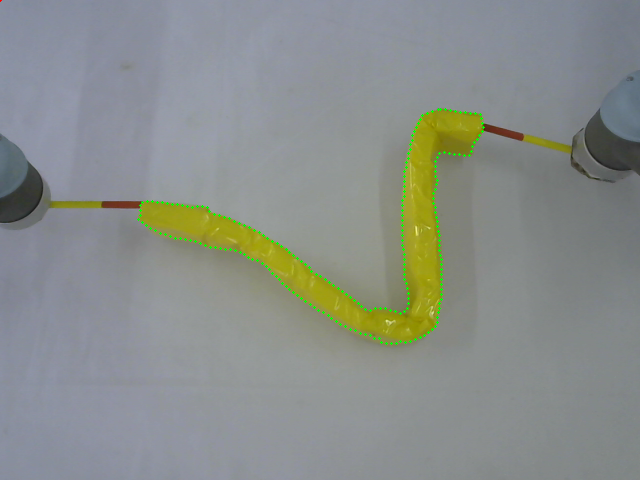}}
\subfloat[Rigid box]
{\includegraphics[scale=0.17]{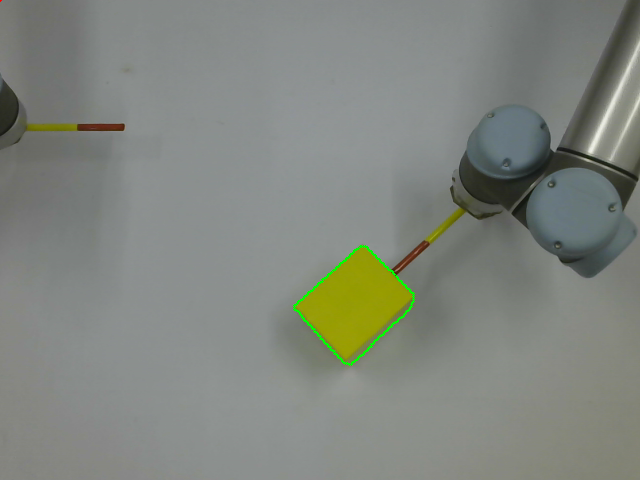}}

\caption{Different shapes of various objects manipulated by dual-arm UR5.}
\label{fig7}
\end{figure}

\begin{figure}
\centering
\includegraphics[scale=0.27]{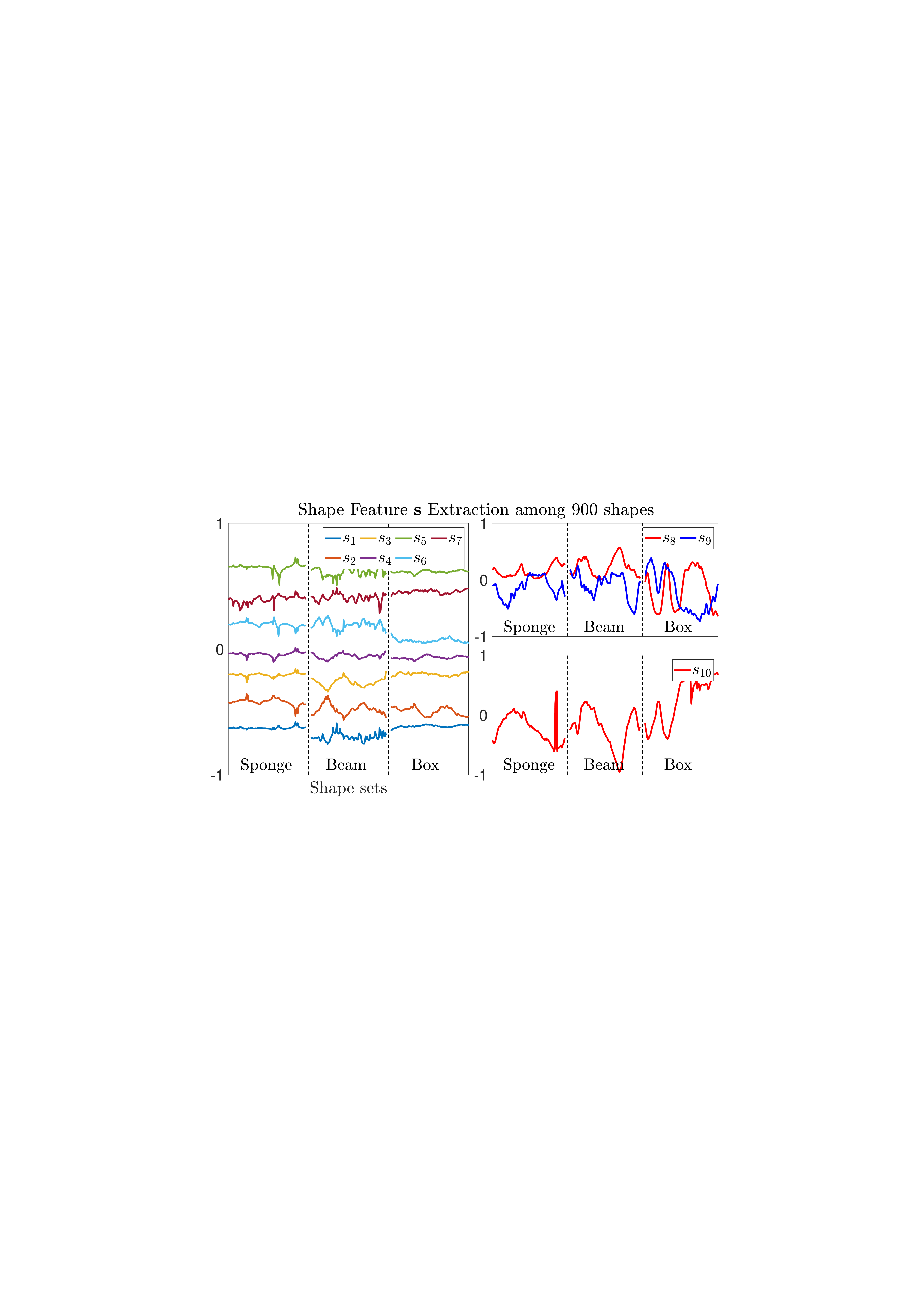}
\caption{Contour moments extraction \eqref{eq2} - \eqref{eq65} of 900 shapes among sponge, beam and box.
Each group has 300 shapes, separately.}
\label{fig42}
\end{figure}


\subsection{Evaluation of Contour Moments and DJM}\label{sec7a}
The robotic platform is commanded to continuously deform three CRDO into various shapes, see Fig. \ref{fig7} (such shaping actions are demonstrated in the accompanying multimedia file).
The profiles of the feature coordinates $s_i$ obtained from these robot motions are depicted in Fig. \ref{fig42}.
The graphs show that the contour moments can smoothly represent the infinite-dimensional object's shape, results that are consistent with its simulation counterpart.

The accuracy of the computed matrix $\hat{\mathbf J}_s$ is also validated by commanding the robot to move along an arbitrary trajectory with both grippers manipulating the elastic cable.
Fig. \ref{fig9} shows a performance comparison of the metrics $T_1$ and $\|\mathbf e_2\|$. 
These results also confirm that FTSMC provides a superior performance in the estimation of the unknown Jacobian matrix.

\begin{figure}
\centering
\includegraphics[scale=0.25]{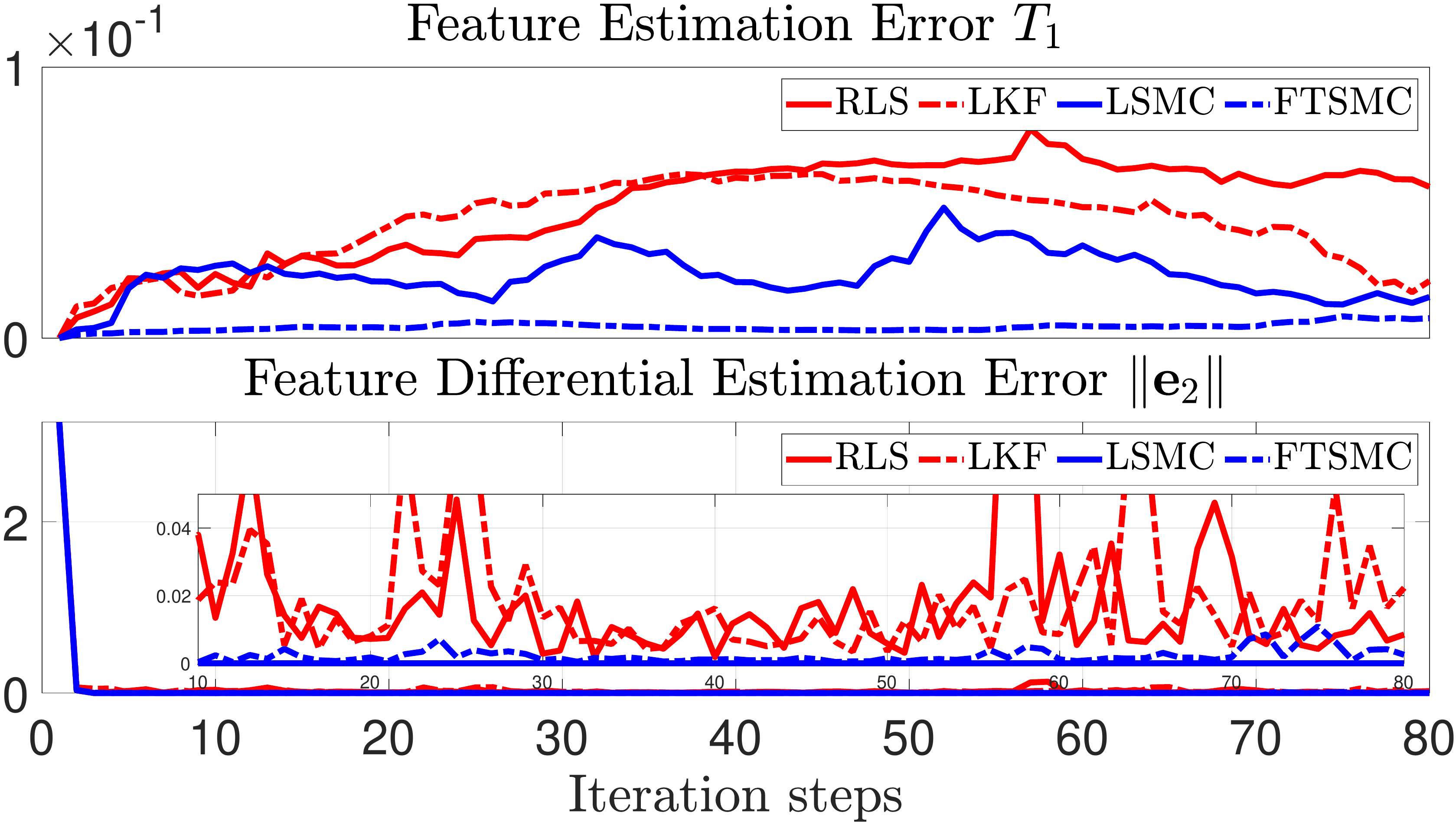}
\caption{Profiles of the criteria $T_1$ and $\| \mathbf{e}_2 \|$ with dual-arm UR5 executing a given trajectory $\mathbf{r}$.
It gives the estimation effect of $\hat{\mathbf{J}}_s$ among
RLS \cite{hosoda1994versatile}, 
LKF \cite{qian2002online}, 
LSMC \eqref{eq15} and FTSMC \eqref{eq31}, respectively.}
\label{fig9}
\end{figure}

\subsection{Shape Servoing with CRDO}
We conducted an experimental study where the robot manipulates six objects (with varying mechanical properties): Elastic cable, linear sponge, plastic folder, non-homogeneous beam, articulated wallet, rigid box. 
We denote the experiments with these objects as Exp1\ldots Exp6, respectively.
Throughout this section, we shall compare the performance of the proposed FTSMC with LSMC, LKF and RLS (a classical visual servoing controller is used with the last two).

Fig. \ref{fig10} illustrates the active shaping motions (represented by the moving green contours) of the objects towards the desired target configuration (represented by the red contour).
This figure qualitatively depicts the contour trajectories of the six objects (each in a different row) with the four controllers (each in a different column). 
Fig. \ref{fig15} quantitatively depicts the time evolution of the shape error $\|\mathbf e_1\|$ and the respective driving control inputs $\mathbf u$.
From these temporal profiles, we can see that the proposed FTSMC provides the best control performance compared to the other methods. 

A summary of the performance metrics for the conducted manipulation experiments is depicted in Fig. \ref{fig13}.
This figure quantitatively shows that the proposed control method achieves the best relative to the time instance $t_d$ (which means that the elastic cable can be coarsely deformed into the target configuration), the instance $t_s$ (a property introduced by the terminal attractor $\sig(\cdot)$), and the IAE index (which demonstrates the moderate consumption of energy).

\begin{figure*}
\centering
\subfloat[Exp1-Cable RLS]
{\includegraphics[scale=0.18]{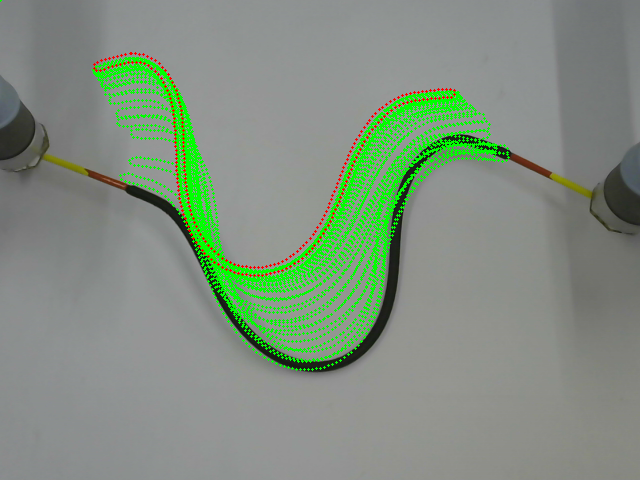}\label{fig11a1}}
\subfloat[Exp1-Cable LKF]
{\includegraphics[scale=0.18]{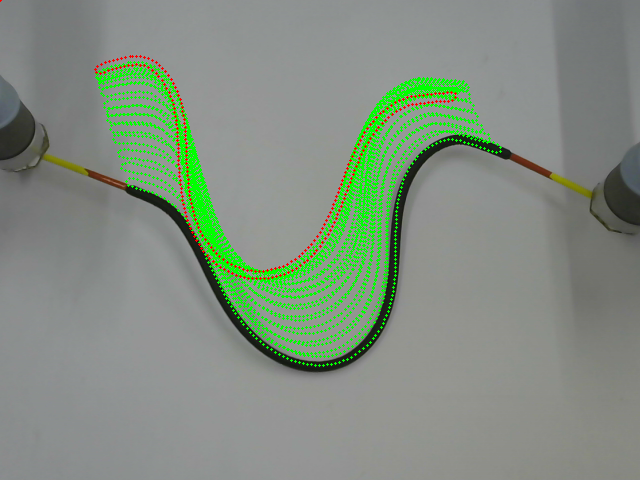}\label{fig11a2}}
\subfloat[Exp1-Cable LSMC]
{\includegraphics[scale=0.18]{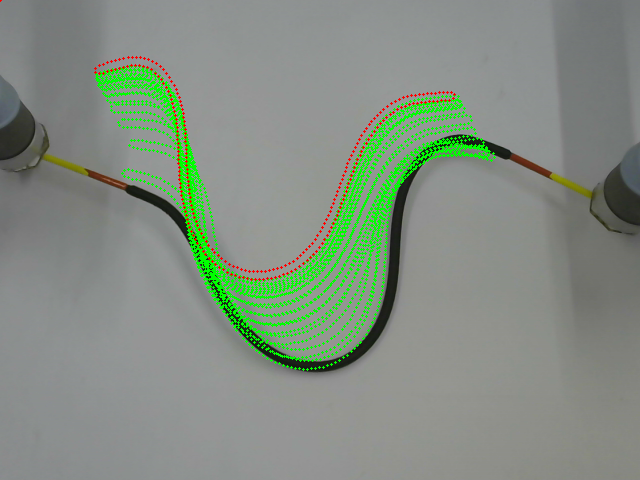}\label{fig11a3}}
\subfloat[Exp1-Cable FTSMC]
{\includegraphics[scale=0.18]{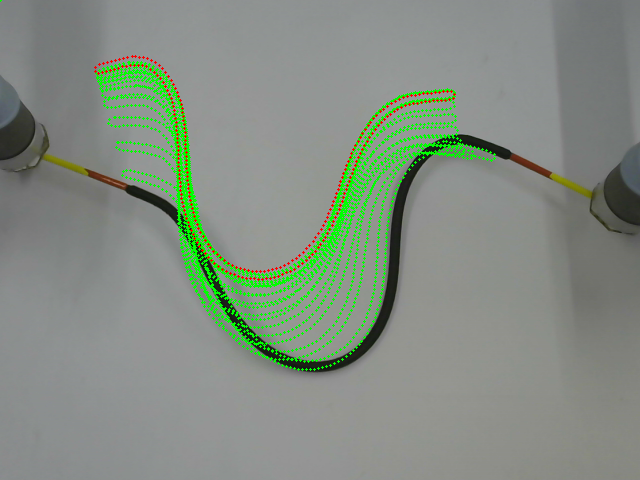}\label{fig11a4}}
	
\subfloat[Exp2-Sponge RLS]{\includegraphics[scale=0.18]{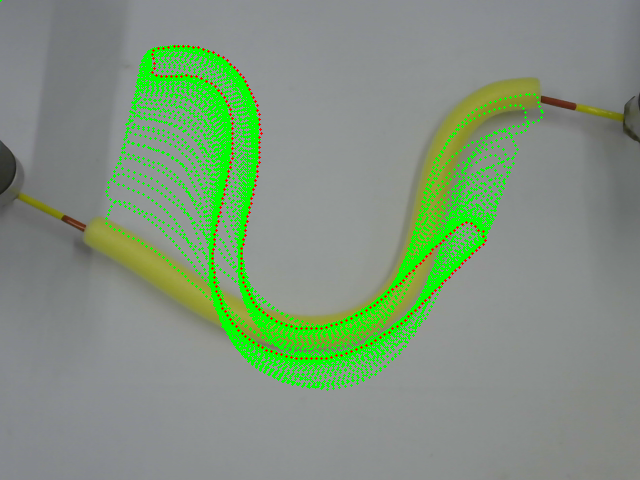}\label{fig11b1}}
\subfloat[Exp2-Sponge LKF]{\includegraphics[scale=0.18]{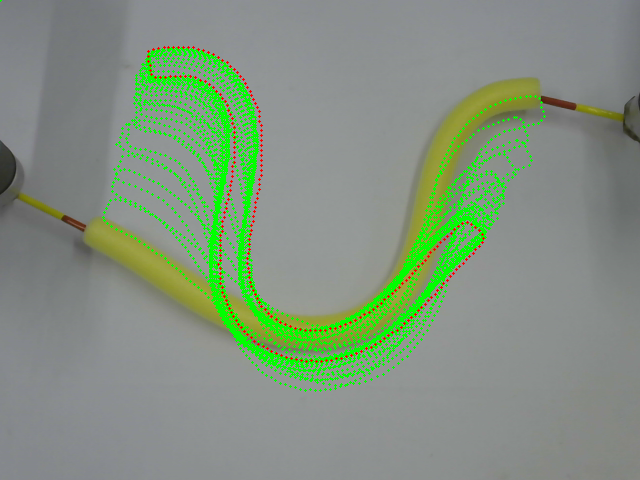}\label{fig11b2}}
\subfloat[Exp2-Sponge LSMC]{\includegraphics[scale=0.18]{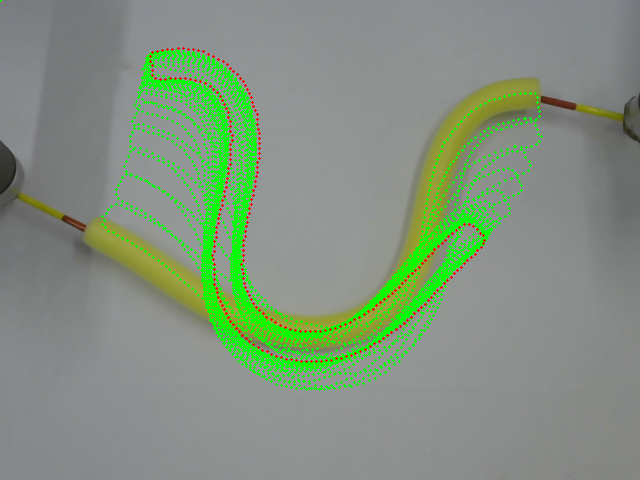}\label{fig11b3}}
\subfloat[Exp2-Sponge FTSMC]{\includegraphics[scale=0.18]{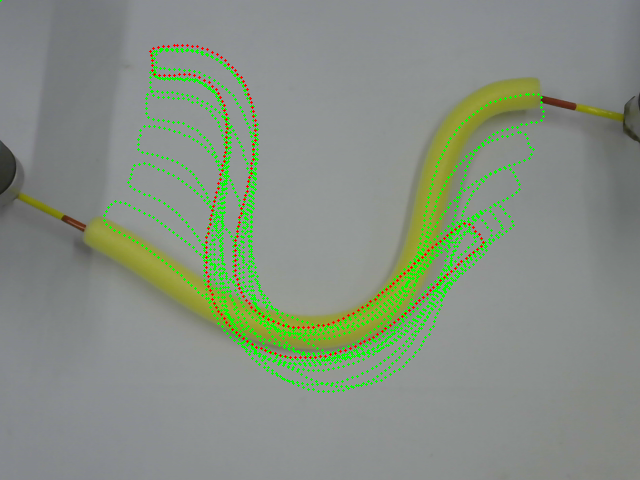}\label{fig11b4}}

\subfloat[Exp3-Folder RLS]{\includegraphics[scale=0.18]{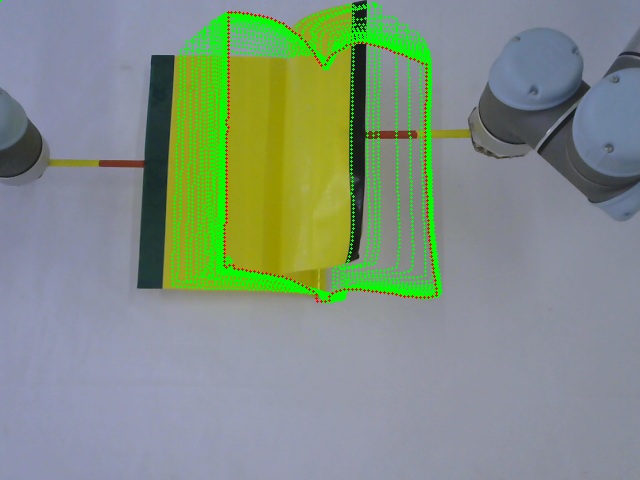}\label{fig11c1}}
\subfloat[Exp3-Folder LKF]{\includegraphics[scale=0.18]{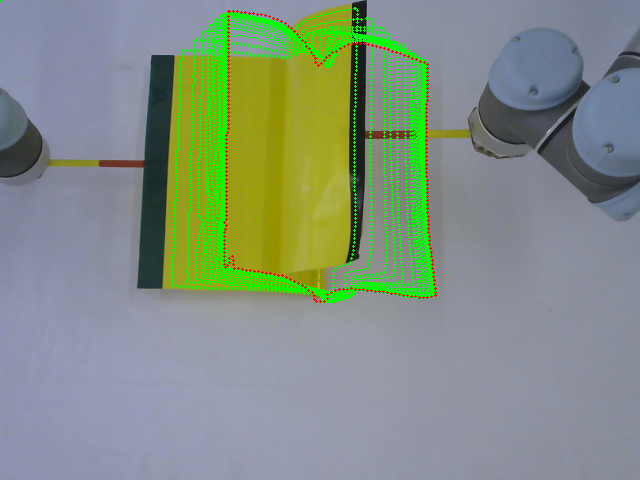}\label{fig11c2}}
\subfloat[Exp3-Folder LSMC]{\includegraphics[scale=0.18]{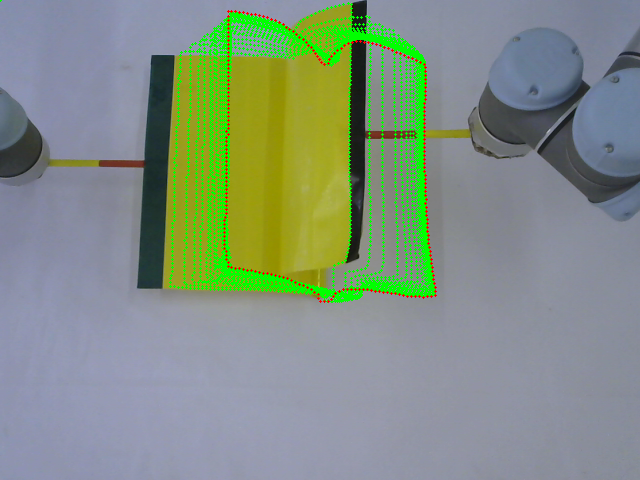}\label{fig11c3}}
\subfloat[Exp3-Folder FTSMC]{\includegraphics[scale=0.18]{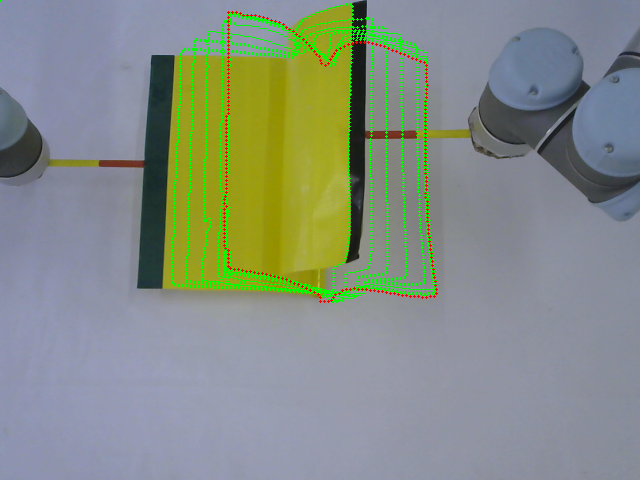}\label{fig11c4}}

\subfloat[Exp4-Beam RLS]{\includegraphics[scale=0.18]{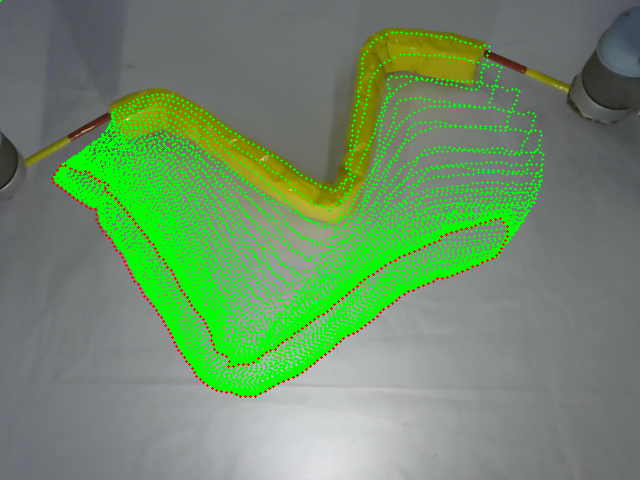}\label{fig11d1}}
\subfloat[Exp4-Beam LKF]{\includegraphics[scale=0.18]{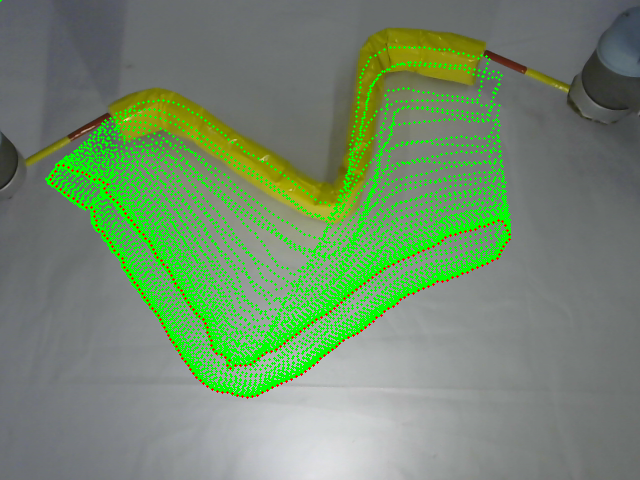}\label{fig11d2}}
\subfloat[Exp4-Beam LSMC]{\includegraphics[scale=0.18]{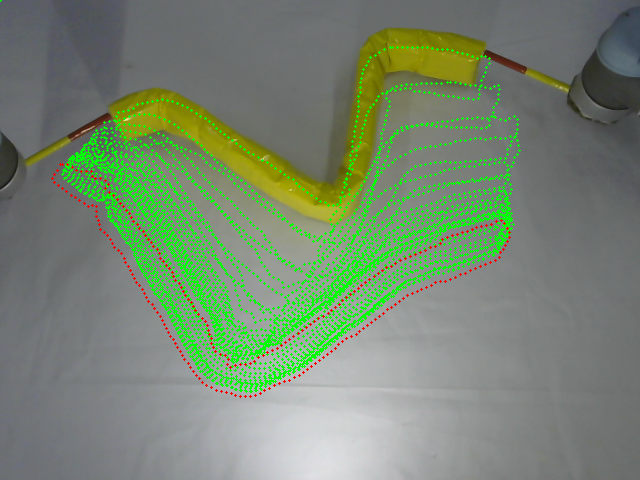}\label{fig11d3}}
\subfloat[Exp4-Beam FTSMC]{\includegraphics[scale=0.18]{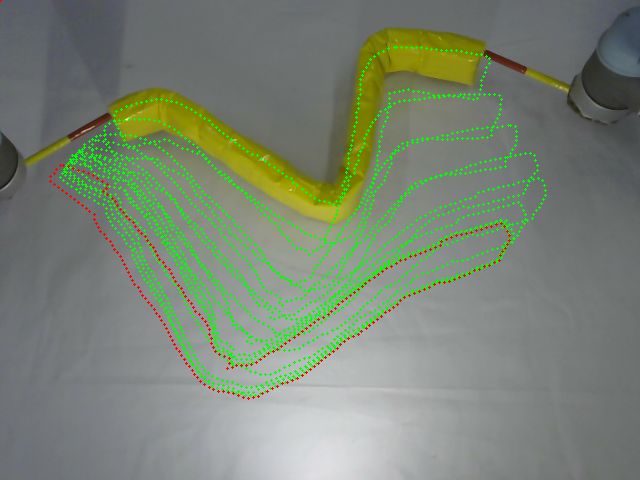}\label{fig11d4}}

\subfloat[Exp5-Wallet RLS]{\includegraphics[scale=0.18]{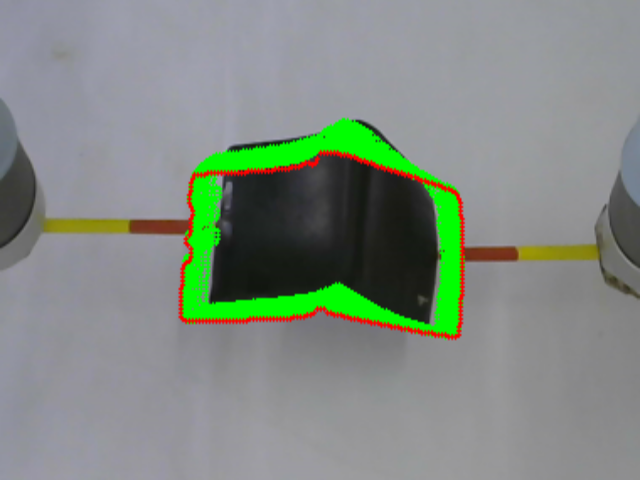}\label{fig11e1}}
\subfloat[Exp5-Wallet LKF]{\includegraphics[scale=0.18]{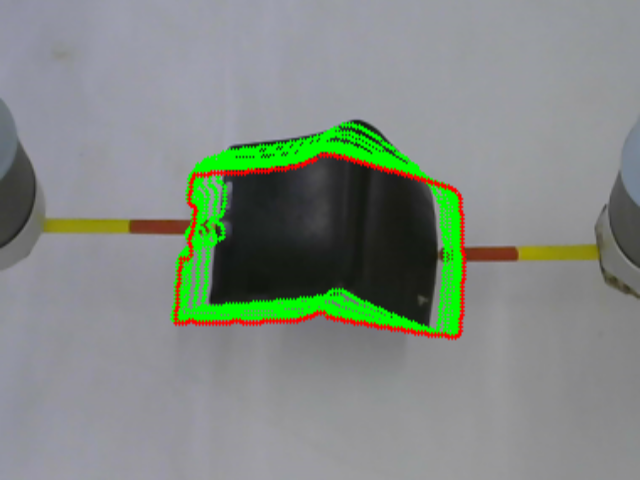}\label{fig11e2}}
\subfloat[Exp5-Wallet LSMC]{\includegraphics[scale=0.18]{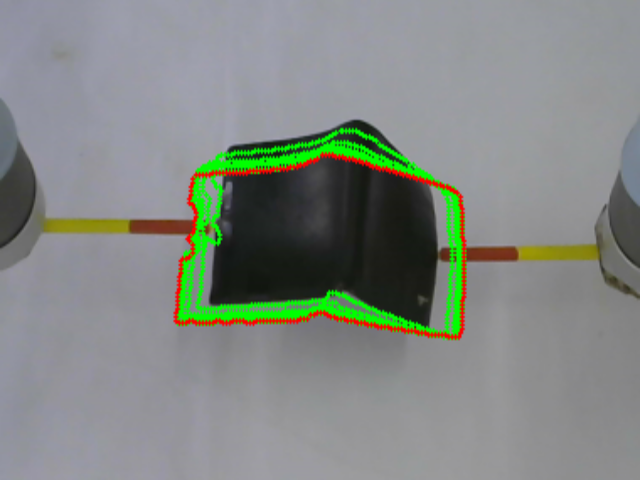}\label{fig11e3}}
\subfloat[Exp5-Wallet FTSMC]{\includegraphics[scale=0.18]{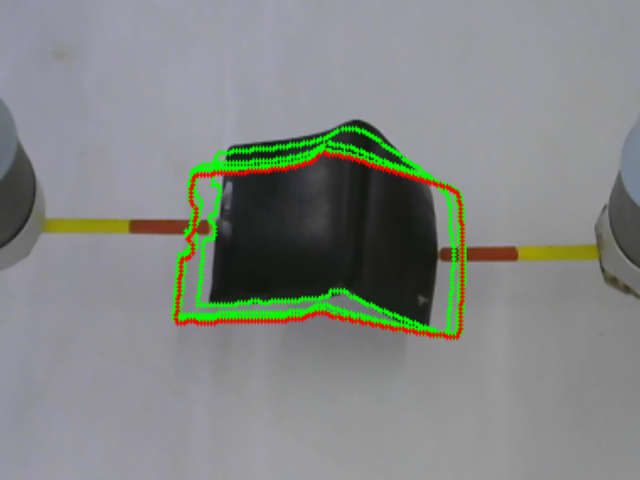}\label{fig11e4}}

\subfloat[Exp6-Rigid-Box RLS]{\includegraphics[scale=0.18]{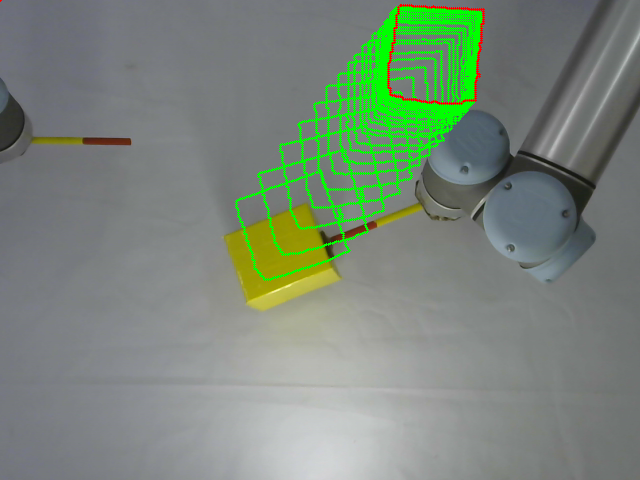}\label{fig11f1}}
\subfloat[Exp6-Rigid-Box LKF]{\includegraphics[scale=0.18]{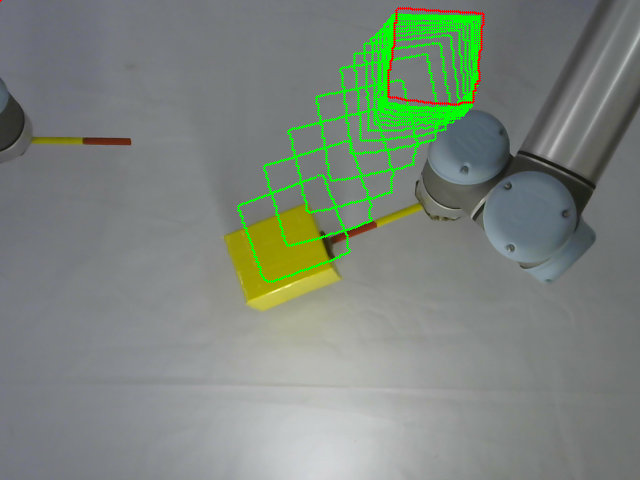}\label{fig11f2}}
\subfloat[Exp6-Rigid-Box LSMC]{\includegraphics[scale=0.18]{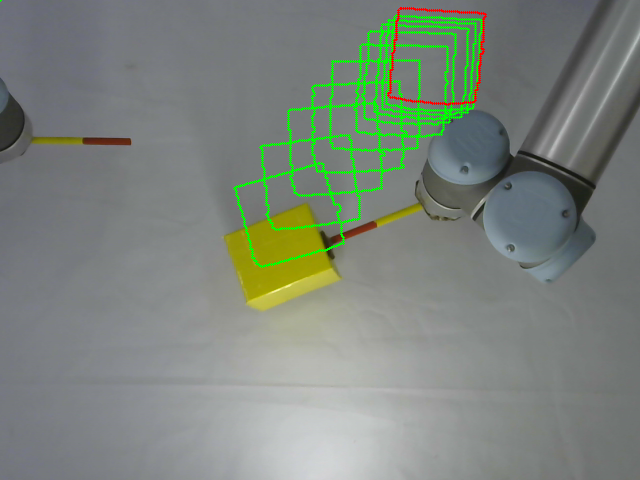}\label{fig11f3}}
\subfloat[Exp6-Rigid-Box FTSMC]{\includegraphics[scale=0.18]{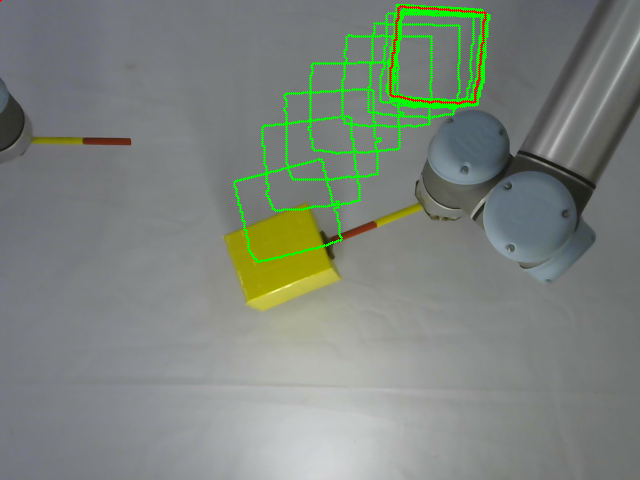}\label{fig11f4}}
	
\caption{Initial (black solid line), transition (green solid line) and target (red solid line) contour deformation trajectories from Exp1 to Exp5.
Exp6 represents the positioning of the rigid box.
All six experiments are conducted among  
RLS \cite{hosoda1994versatile}\cite{qi2020adaptive}, 
LKF \cite{qian2002online}\cite{qi2020adaptive}, 
LSMC \eqref{eq10}\eqref{eq15} and FTSMC \eqref{eq25}\eqref{eq31}.}
\label{fig10}
\end{figure*}

\begin{figure*}
\centering
\subfloat[Exp1 Result]
{\includegraphics[scale=0.17]{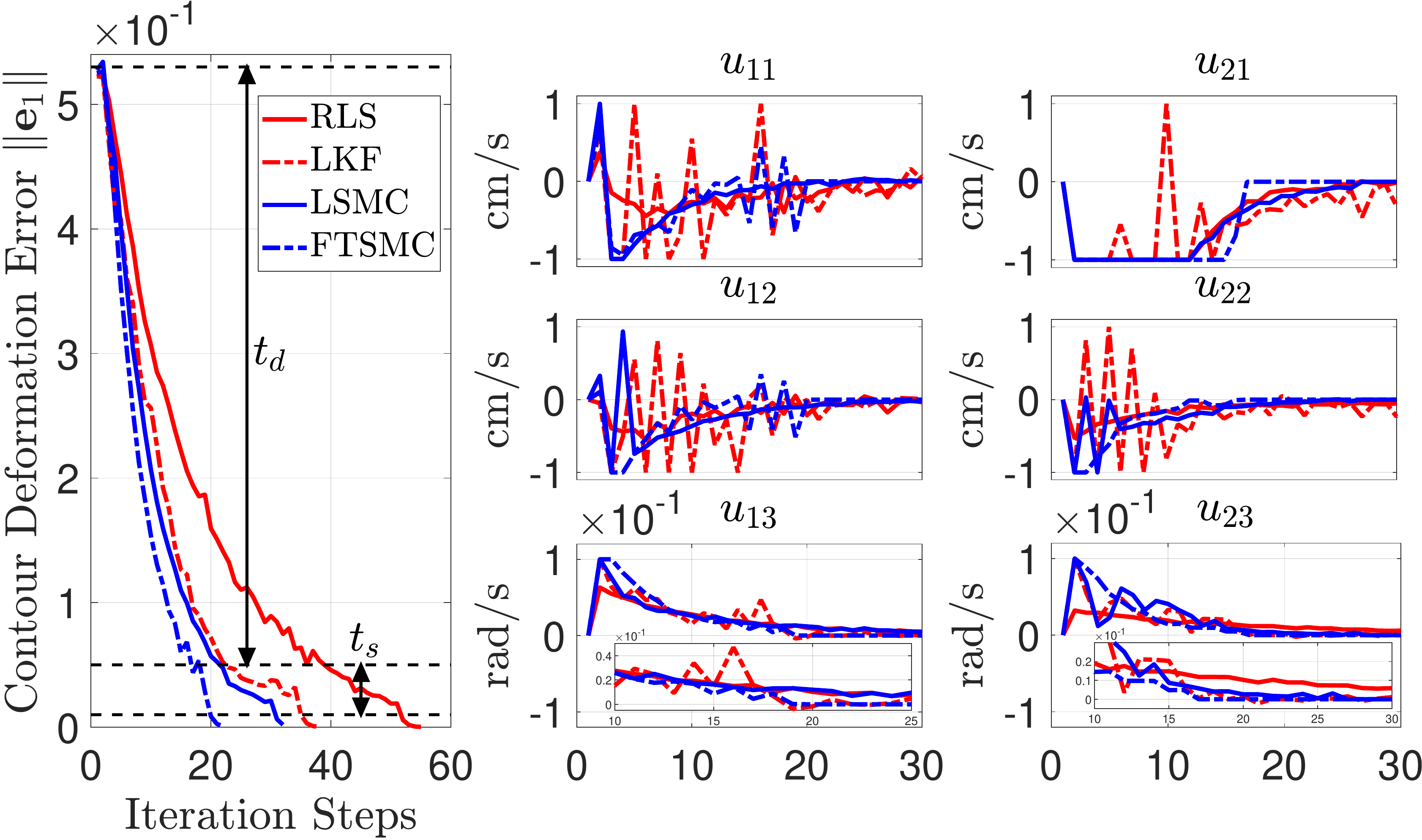}\label{fig12a1}}
\subfloat[Exp2 Result]
{\includegraphics[scale=0.17]{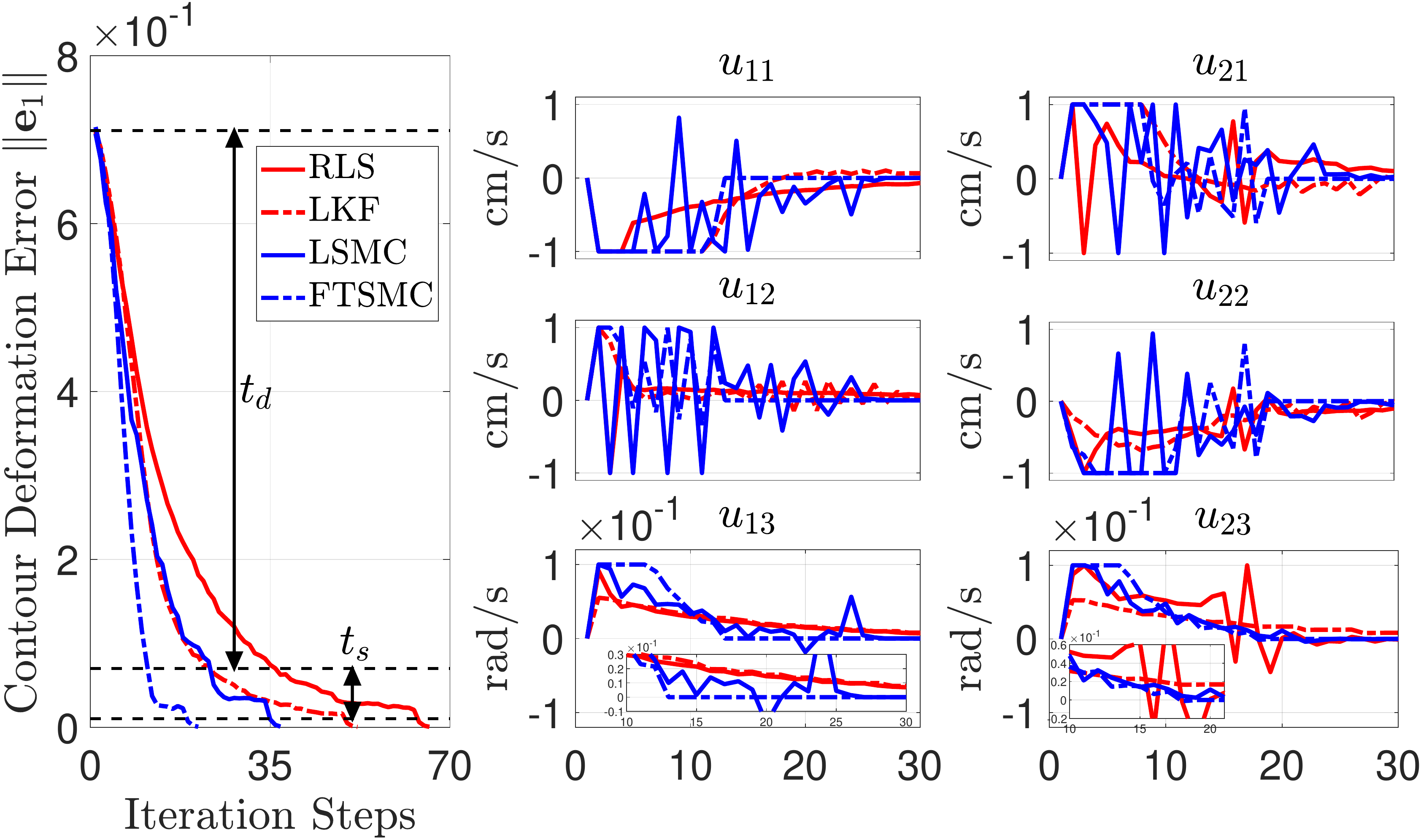}\label{fig12a2}}
\subfloat[Exp3 Result]
{\includegraphics[scale=0.17]{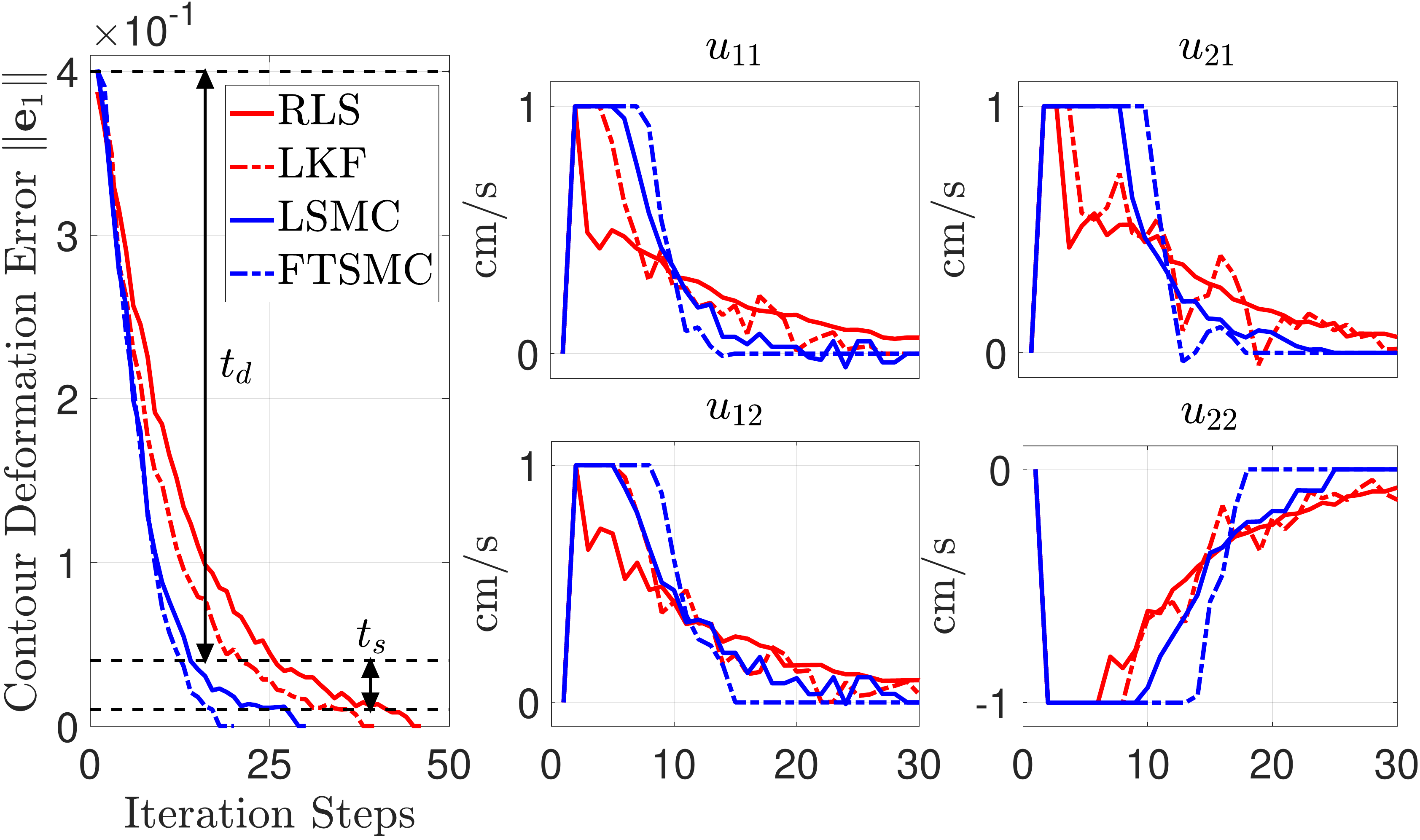}\label{fig12a3}}

\subfloat[Exp4 Result]
{\includegraphics[scale=0.17]{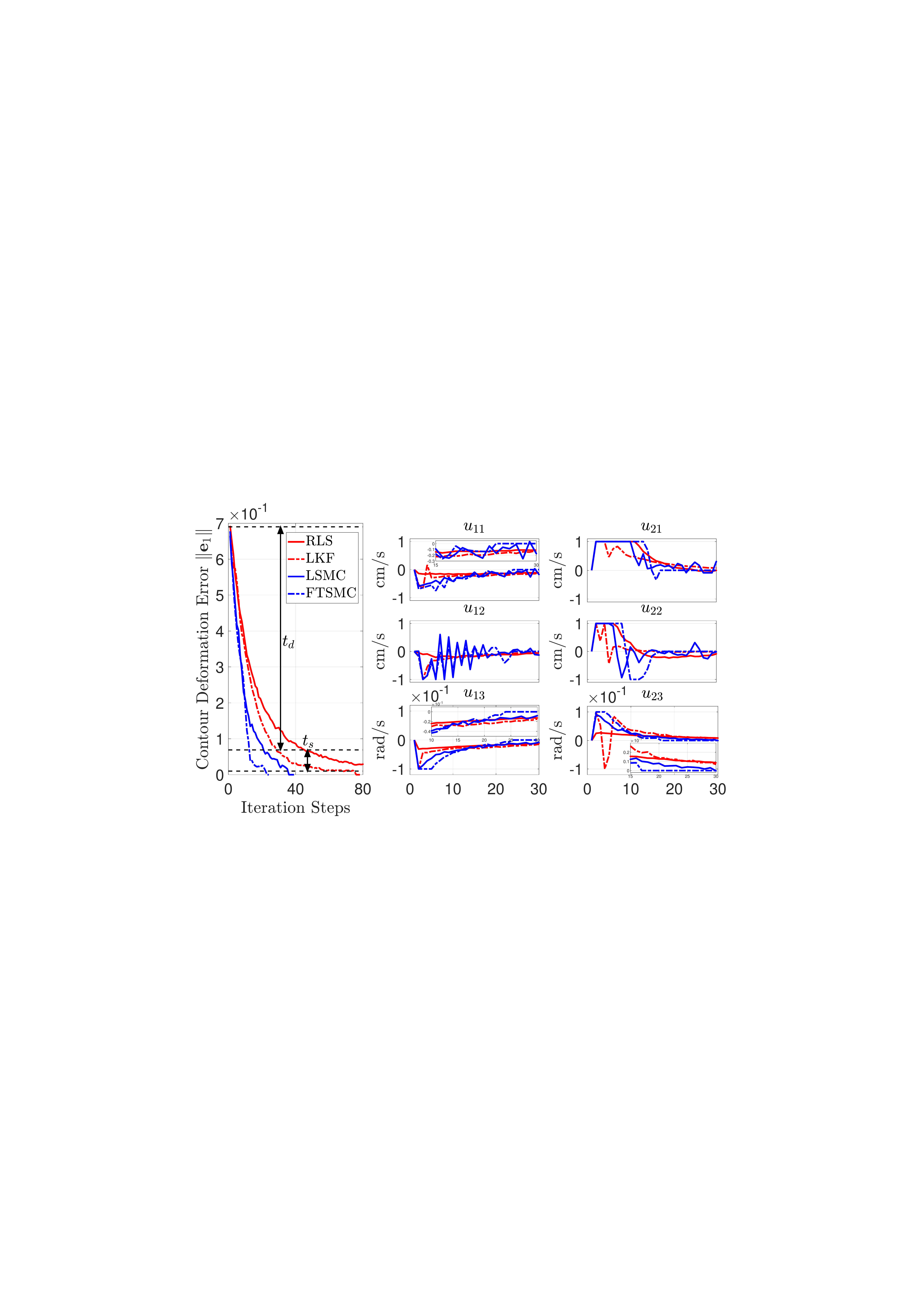}\label{fig12a4}}
\subfloat[Exp5 Result]
{\includegraphics[scale=0.17]{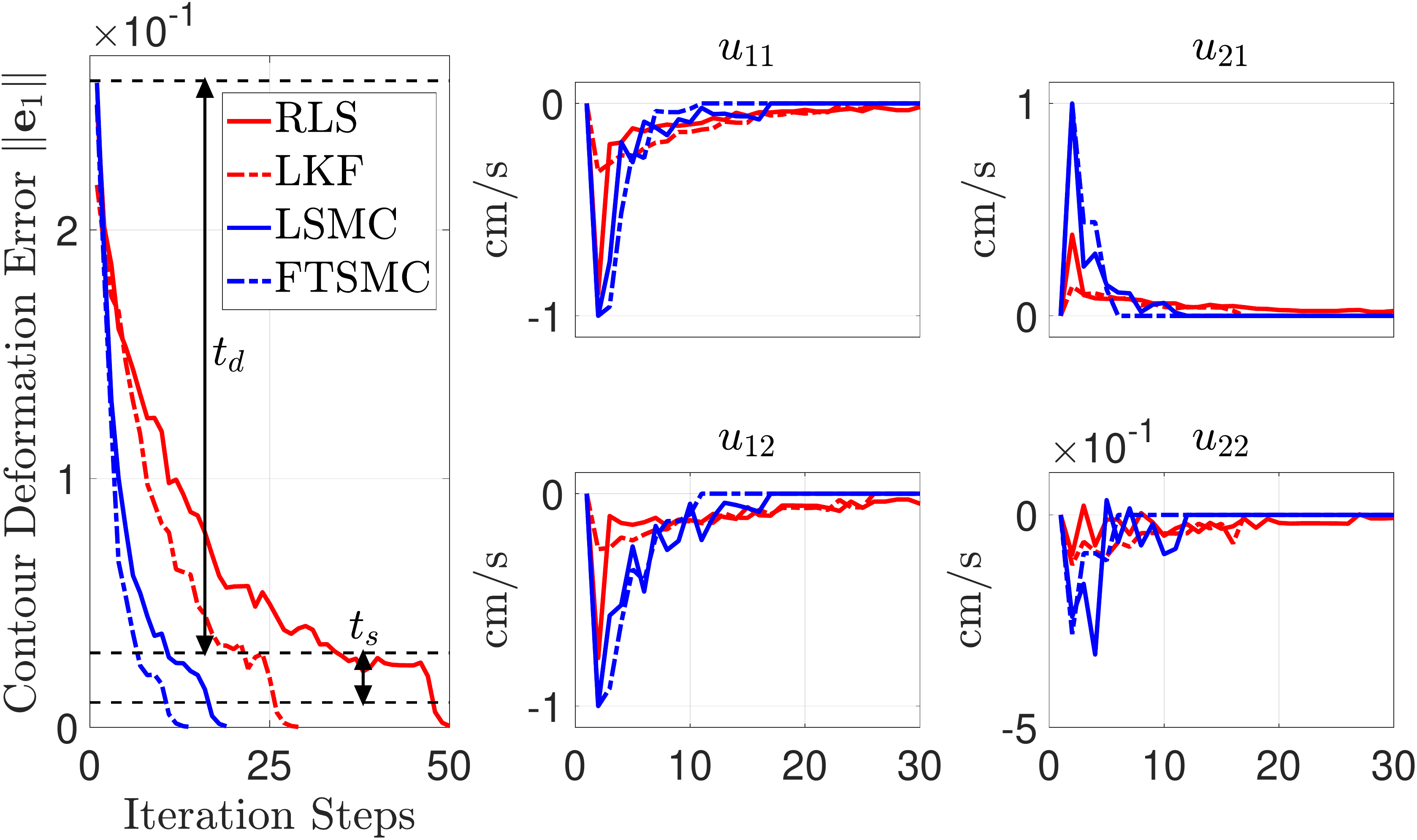}\label{fig12a5}}
\subfloat[Exp6 Result]
{\includegraphics[scale=0.17]{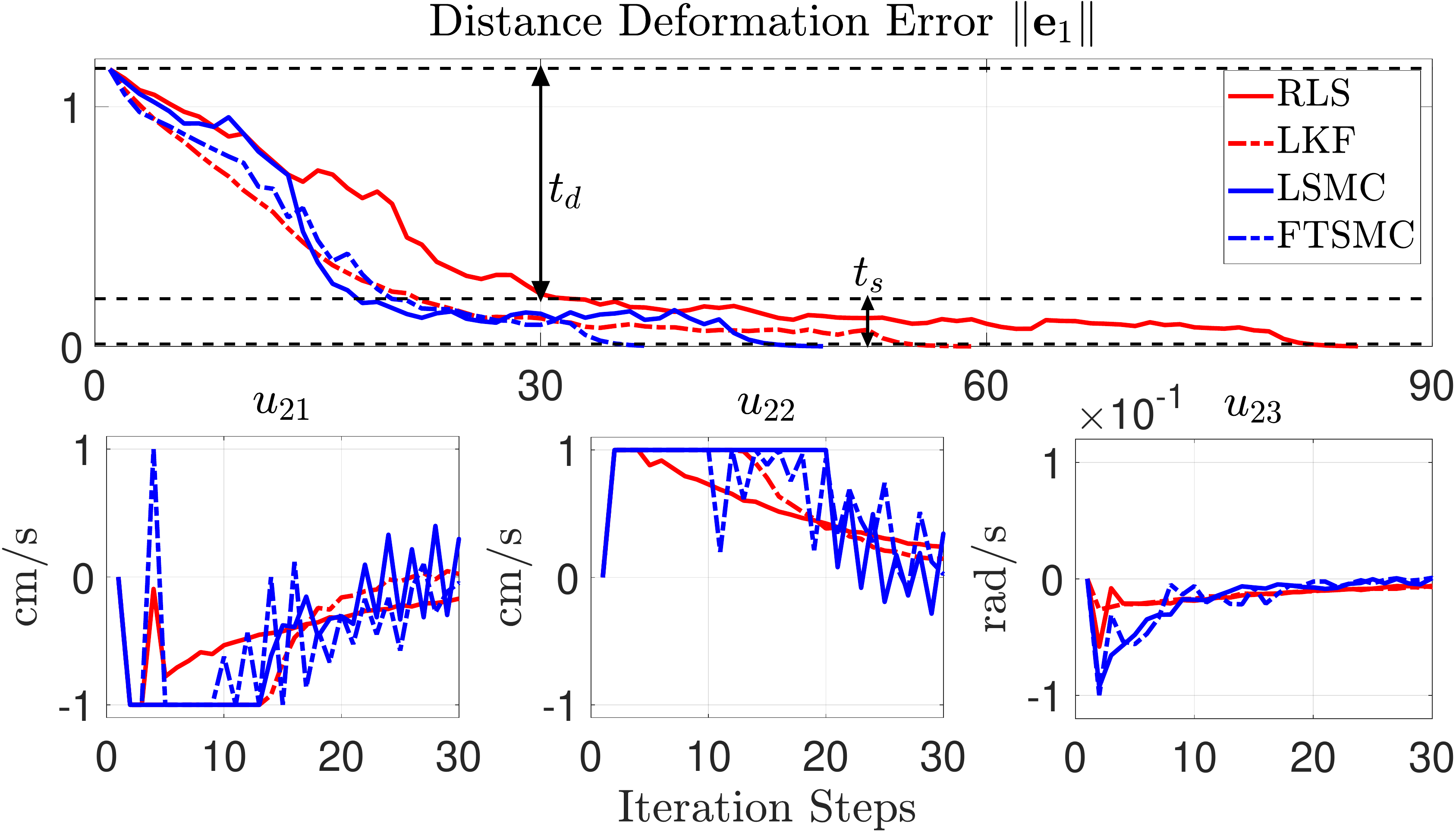}\label{fig12a6}}

\caption{
Profiles of contour deformation error $\| \mathbf{e}_1 \|$ of Exp1-Exp5, distance deformation error $\| \mathbf{e}_1 \|$ of Exp6 
and velocity command $\mathbf{u}$ among four methods, namely, 
RLS \cite{hosoda1994versatile}\cite{qi2020adaptive}, 
LKF \cite{qian2002online}\cite{qi2020adaptive}, 
LSMC \eqref{eq10}\eqref{eq15} and FTSMC \eqref{eq25}\eqref{eq31}.}
\label{fig15}
\end{figure*}

\begin{figure}
\centering
\includegraphics[scale=0.26]{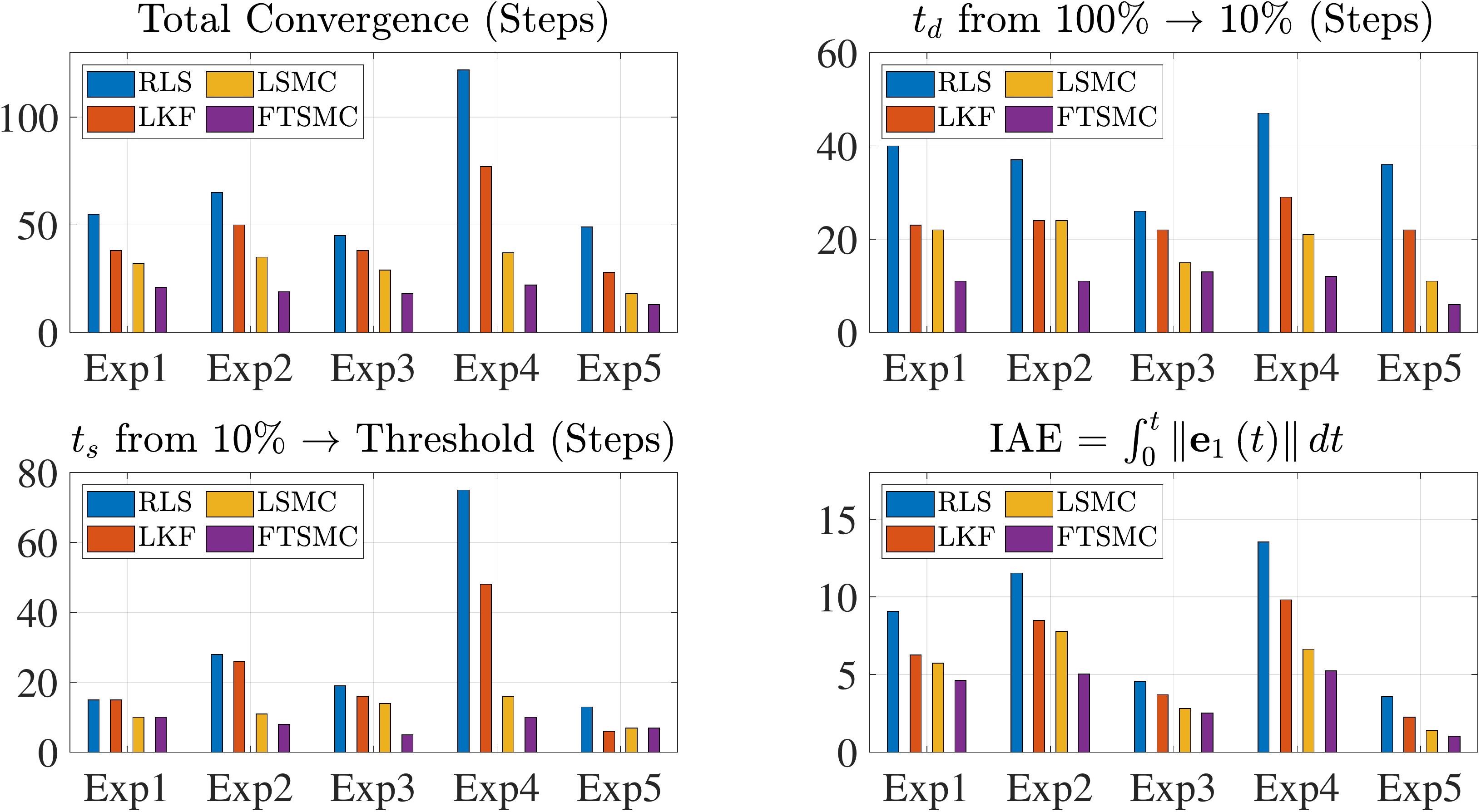}
\caption{Performance comparison in the deformation tasks (Exp1 to Exp5).}
\label{fig13}
\end{figure}

\section{CONCLUSIONS}
In this paper, we present a new visual servoing framework to control the shape of composite rigid-deformable objects with dual-arm robots.
To characterize the objects' infinite-dimensional shape, a compact feature vector is constructed with contour moments from the observed 2D image of the object.
A new finite-time sliding mode controller is proposed to automatically deform the object into a desired shape and simultaneously estimate the deformation Jacobian matrix; The stability of this method is rigorously analyzed.
To validate the performance of our new method, we report a detailed experimental study with a robotic system manipulating multiple objects.


The proposed method has some limitations.
First, this controller cannot be used for purely \emph{plastic} objects, such as certain food materials or clay.
Second, although FTSMC controller provides the best performance, due to the existence of many control parameters, sometimes it is difficult to adjust them to obtain a satisfactory performance.
The hard saturation adopted in this paper is used to limit the control input, however, we do not consider its impact on the system's stability.
For future work, our team is currently extending the current method such as performing 3D manipulation tasks.
Also, we are incorporating shape planning capabilities into the framework so as to conduct complex multi-action shaping tasks (e.g. packing non-rigid objects).

\appendices
\ifCLASSOPTIONcaptionsoff
  \newpage
\fi

\bibliography{biblio.bib,tom_biblio.bib}
\bibliographystyle{IEEEtran}

\begin{IEEEbiography}
[{\includegraphics[width=1in,height=1.25in,clip,keepaspectratio]{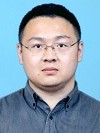}}]{Jiaming Qi}
received the M.Sc. in Integrated Circuit Engineering from Harbin Institute of Technology,
Harbin, China, in 2018. 
In 2019, he was a visiting PhD student at The Hong Kong Polytechnic University.
He is currently pursuing the Ph.D. degree with Control Science and Engineering, Harbin Institute of Technology, Harbin, China. 
His current research interests include data-driven control for soft object manipulation, vision-servoed control, robotics and control theory.
\end{IEEEbiography}

\begin{IEEEbiography}
[{\includegraphics[width=1in,height=1.25in,clip,keepaspectratio]{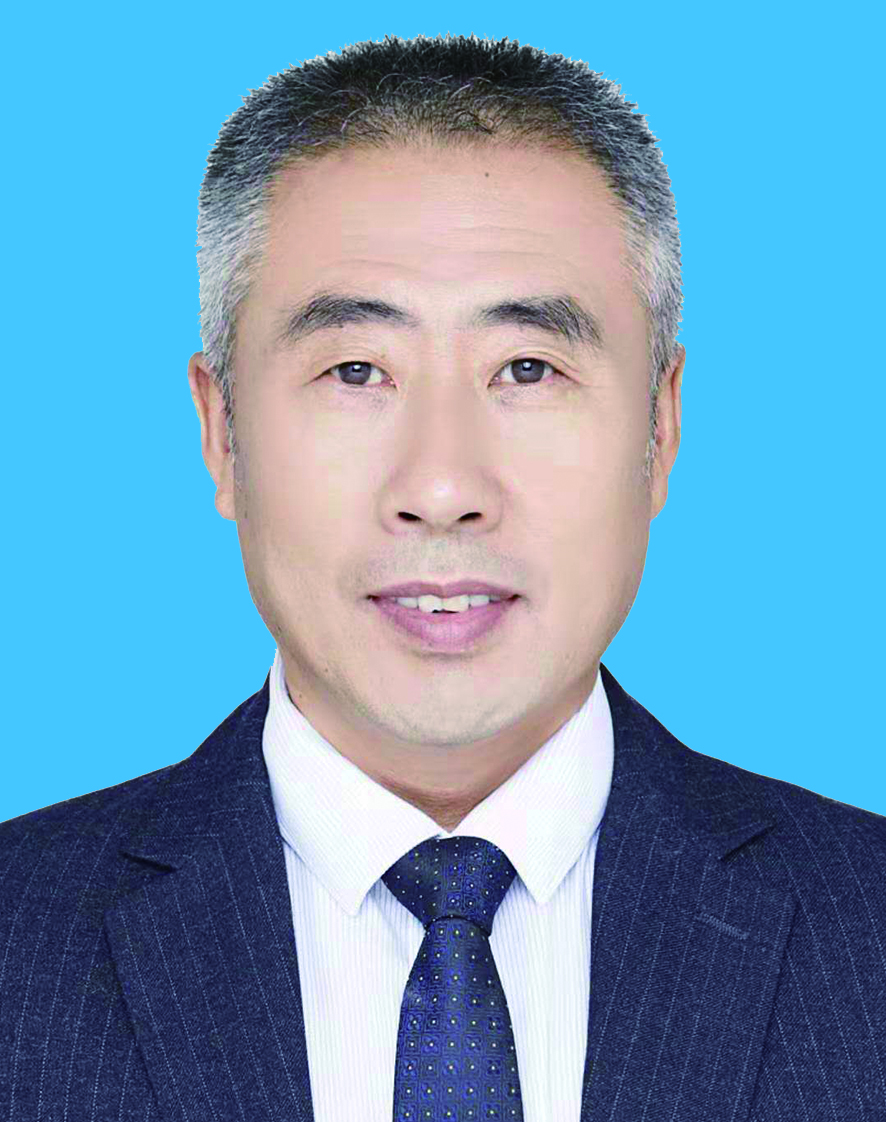}}]{Guangfu Ma}
received the Ph.D. and M.S. degrees in electrical engineering from the Harbin Institute
of Technology, Harbin, China, in 1993 and 1987, respectively. 
He was with the Harbin Institute of Technology, where he became an Associate Professor
in 1992, and a Professor in 1997, where he currently teaches and performs research in optimal control, spacecraft attitude control, and aerospace control systems. 
He is currently a Professor with the Department of Control Science and Engineering,
Harbin Institute of Technology.
\end{IEEEbiography}

\begin{IEEEbiography}
[{\includegraphics[width=1in,height=1.25in,clip,keepaspectratio]{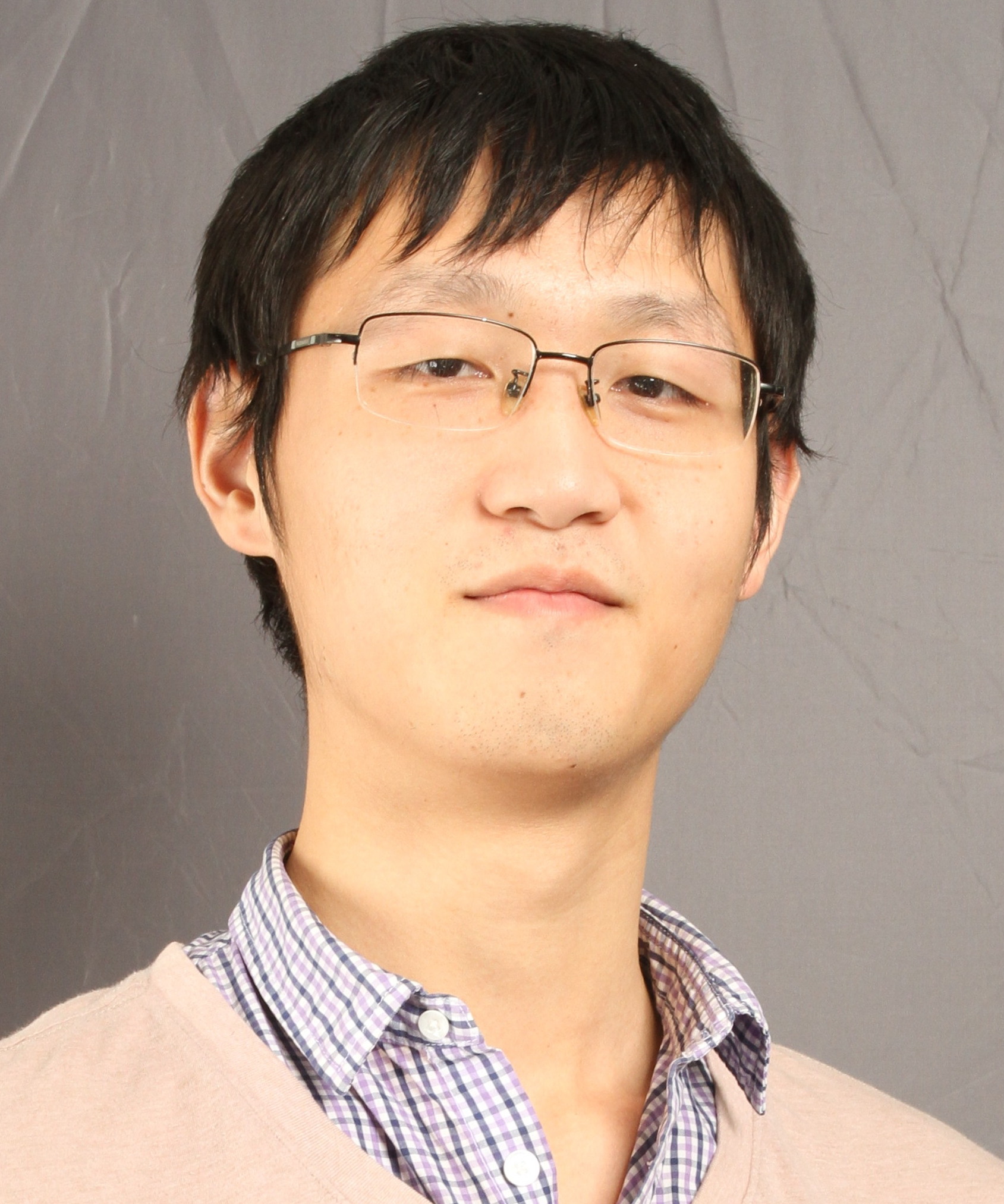}}]{Jihong Zhu}
received the M.Sc. in Systems and Control in 2015 from TU Delft and the Ph.D. in Robotics from University of Montpellier in 2020.
He conducted his doctoral research at LIRMM, France, and in 2019, he visited The Hong Kong Polytechnic University. 
He is currently a postdoc at Cognitive Robotics department, TU Delft and Honda Research Institute, Europe.
His research interests include: robot learning and manipulation of soft objects.
\end{IEEEbiography}

\begin{IEEEbiography}
[{\includegraphics[width=1in,height=1.25in,clip,keepaspectratio]{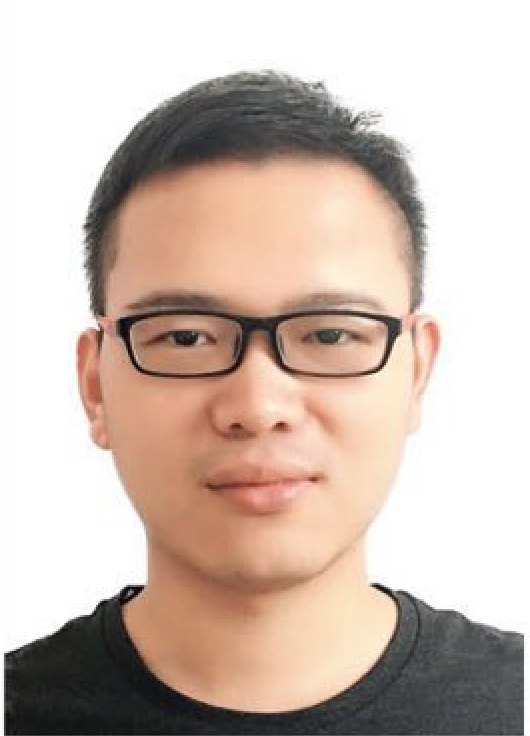}}]{Peng Zhou}
(S'20) was born in China. He received the M.Sc. degree in software engineering from the School of Software Engineering, Tongji University, Shanghai, China, in 2017 and is currently pursuing his the Ph.D. degree in robotics in The Hong Kong Polytechnic University, Kowloon, Hong Kong. His research interests include deformable object manipulation, motion planning and robot learning.
\end{IEEEbiography}

\begin{IEEEbiography}
[{\includegraphics[width=1in,height=1.25in,clip,keepaspectratio]{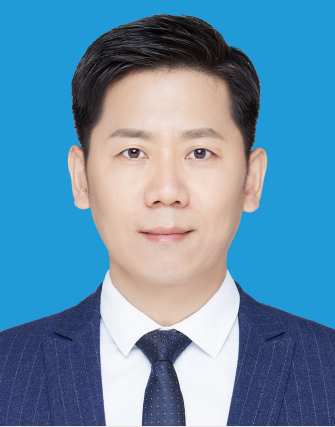}}]{Yueyong Lyu}
is now an associate research fellow in the Department of Control Science and Engineering, Harbin Institute of Technology, China. He also got his Bachelor, Master and Ph. D degree from Harbin Institute of Technology respectively in 2002, 2008 and 2013. His interests of research mainly focus on spacecraft guidance, navigation and control, especially in spacecraft formation flying, on-orbit service, and so on.
\end{IEEEbiography}

\begin{IEEEbiography}
[{\includegraphics[width=1in,height=1.25in,clip,keepaspectratio]{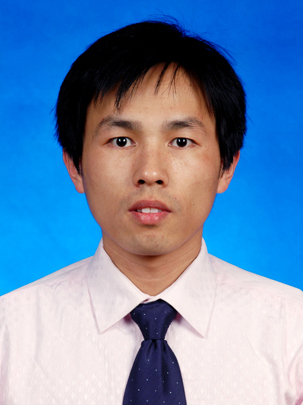}}]{Haibo Zhang}
received the Ph.D and M.S. degrees in control science and engineering from the Harbin Institute of Technology, Harbin, China, in 2013 and 2009, respectively. He performs research in space manipulator dynamics and compliance capture control, and spacecraft relative motion control. He is currently a senior engineer with Beijing Institute of Control Engineering.
\end{IEEEbiography}

\begin{IEEEbiography}
[{\includegraphics[width=1in,height=1.25in,clip,keepaspectratio]{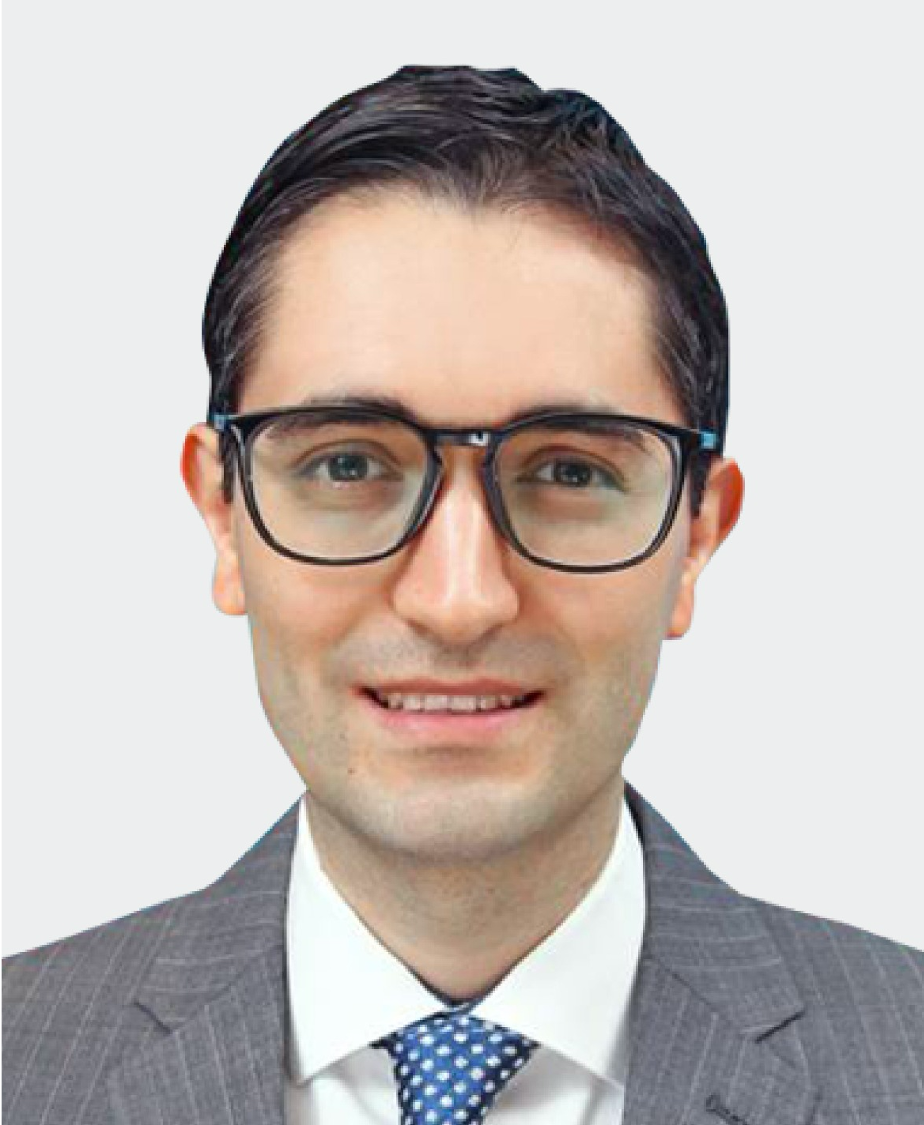}}]{David Navarro-Alarcon}
(GS’06–M’14–SM’19) received the Ph.D. degree in mechanical and automation engineering from The Chinese University of Hong Kong (CUHK), Shatin, Hong Kong, in 2014.

Since 2017, he has been with The Hong Kong Polytechnic University (PolyU), Hung Hom, Hong Kong, where he is an Assistant Professor at the Department of Mechanical Engineering, Principal Investigator of the Robotics and Machine Intelligence Laboratory, and Investigator at the Research Institute for Smart Ageing.
Before joining PolyU, he was a Postdoctoral Fellow and then Research Assistant Professor at the CUHK T Stone Robotics Institute, from 2014 to 2017. He has had visiting appointments at the University of Toulon in France and the Technical University of Munich in Germany.
His current research interests include perceptual robotics and control theory.
\end{IEEEbiography}

\end{document}